\documentclass[11pt]{article}
\usepackage[a4paper,top=3cm,bottom=2cm,left=2cm,right=2cm,marginparwidth=1.75cm]{geometry}
\usepackage{fullpage}
\usepackage{graphics} 
\usepackage{epsfig}
\usepackage{amsmath} 
\usepackage{amssymb} 
\usepackage{amsthm}
\usepackage{tabularx,booktabs}
\usepackage{mathrsfs}
\usepackage{selectp}
\usepackage{Shorthands}
\usepackage[normalem]{ulem}
\usepackage{authblk}
\usepackage{subcaption}
\usepackage{natbib}
\usepackage{Shorthands}
\usepackage{algorithm}
\usepackage{algpseudocode}
\usepackage{wrapfig}
\usepackage{scalerel,stackengine}
\stackMath
\newcommand\reallywidehat[1]{%
\savestack{\tmpbox}{\stretchto{%
  \scaleto{%
    \scalerel*[\widthof{\ensuremath{#1}}]{\kern-.6pt\bigwedge\kern-.6pt}%
    {\rule[-\textheight/2]{1ex}{\textheight}}
  }{\textheight}%
}{0.5ex}}%
\stackon[1pt]{#1}{\tmpbox}%
}

\usepackage{xr}
\usepackage{xcolor}

\newboolean{showcomments}
\setboolean{showcomments}{true}
\ifthenelse{\boolean{showcomments}}
{ \newcommand{\mynote}[3]{
		\fbox{\bfseries\sffamily\scriptsize#1}
		{\small$\blacktriangleright$\textsf{\emph{\color{#3}{#2}}}$\blacktriangleleft$}}
	\newcommand{\zzz}[1]{{\setlength{\fboxsep}{2pt}\fcolorbox{black}{yellow}{\textsf{\emph{#1}}}}\xspace}}
{ \newcommand{\mynote}[3]{}
	\newcommand{\zzz}[1]{}}

\usepackage{float}
\usepackage{booktabs}
\usepackage[table]{xcolor}
\definecolor{oursrow}{RGB}{238,246,252}
\usepackage[toc,page,header]{appendix}
\usepackage{minitoc}

\usepackage{hyperref}
\hypersetup{       
    unicode=false,          
    pdftoolbar=true,       
    pdfmenubar=true,      
    pdffitwindow=false,     
    pdfstartview={FitH},   
    pdfnewwindow=true,      
    colorlinks=true,      
    linkcolor=blue,          
    citecolor=blue,       
    filecolor=magenta,      
    urlcolor=blue,           
    breaklinks=true
}

\usepackage[toc,page,header]{appendix}
\usepackage{minitoc}

\title{Byzantine-Robust Federated Representation Learning}
\date{}

\author[1]{Leonardo F. Toso\footnote{Correspondence to: \texttt{leonardo.toso@columbia.edu}.}}
\author[1]{James Anderson}
\author[2]{Rafael Pinot}
\author[3]{Nirupam Gupta}
\affil[1]{Columbia University, USA}
\affil[2]{Sorbonne Université and Université Paris Cité, CNRS, LPSM, France}
\affil[3]{University of Copenhagen, Denmark}

\begin{document}

\doparttoc 
\faketableofcontents

\maketitle

\begin{abstract}

We study federated learning (FL) with adversarial clients, where the goal is to minimize the average loss of the honest (non-adversarial) clients without knowing their identity. Under heterogeneity, a single shared model parameter is statistically inappropriate: it cannot capture the distinct data-generating processes across clients, incurring an irreducible model-heterogeneity bias and severely limiting robustness to adversarial clients (a.k.a. Byzantine-robustness). We address this problem through representation learning, where each client learns a personalized linear head, while collaboratively estimating a shared nonlinear representation through Byzantine-robust aggregation. We demonstrate that the heterogeneity among honest representation gradients is controlled by the representation error and statistical errors that decay either with the number of data samples per client ($\tau$) or the number of iterations ($T$). In particular, our non-asymptotic parameter recovery error bound reveals three terms: (i) an initialization-dependent error that goes away with $T$, (ii) finite-sample noise terms that decreases with $\tau$ and the number of honest clients, and (iii) a stochastic gradient variance term that also reduces with $T$. Importantly, with no irreducible model-heterogeneity bias in our bounds. We extend the regression analysis to multiclass classification, and empirically validate it on CIFAR-10, FEMNIST, and School Exam Score datasets.
\end{abstract}

\allowdisplaybreaks

\section{Introduction}

Learning from decentralized data requires accounting for heterogeneous clients while defending against adversarial participants (also referred to as {\em Byzantine} clients~\cite{lamport2019byzantine, guerraoui2024byzantine}). This is crucial for federated learning (FL), where clients keep their data locally and communicate only model updates to a central server \citep{mcmahan2023communicationefficient,konecny2016federated}. Typically, the client models are non-identical, and thus a single common model parameter is statistically inappropriate. In addition, adversarial clients make the problem even harder by sharing arbitrary updates to corrupt the underlying learning process. Byzantine-robust FL addresses the latter through {\em robust aggregators} \citep{blanchard2017machine,yin2018byzantine,mhamdi2018hidden,small1990survey,guerraoui2024robust,toso2026gradient}. 

In particular, the class of $(f,\kappa)$-robust aggregators \citep{guerraoui2024robust}, where $f$ is the number of adversarial clients and $\kappa$\footnote{$\kappa$ typically scales with $f/n$, with $n$ being the total number of participating clients \citep{allouah2023fixing}.} the robustness coefficient, controls its deviation from the honest average through the heterogeneity of the honest clients' updates. Consequently, under a single common model parameter, the robust aggregation error is nonzero even under an infinite number of local data samples \citep{ghosh2019robust,karimireddy2020byzantine,allouah2023fixing,mishchenko2023partially,toso2026gradient}. Indeed, this is the fundamental limitation of Byzantine-robust FL under heterogeneity: the server cannot tell the difference between honest heterogeneity and adversarial behavior when the honest clients are heterogeneous \citep{karimireddy2020byzantine}.  Hence, the fundamental question: \emph{Should we instead separate the common and client-specific components of each local model, collaborate robustly to learn what is shared and personalize what is specific?}

We address this question by leveraging a common nonlinear representation and client-specific linear heads. More precisely, clients can have different models while relying on a common representation of their covariates (features). Learning common feature representations have been proven to improve statistical efficiency in multitask and personalized federated learning~\citep{tripuraneni2020theory,du2020few,collins2021exploiting,yang2023fedrep,mishchenko2023partially,zhang2024sample,zhang2024guarantees}. Common representations have also proved useful for learning across similar dynamical systems
\citep{zhang2022multitask,zhang2024sample,lee2024regret,fallah2025adversarial}. The intuition is that, although the representation gradients depend on the client-specific heads, under a realizability assumption, each composition of the shared representation and a personalized head can fit the underlying client data distribution. Therefore, after fitting the local heads, the remaining representation-gradient heterogeneity is governed only by representation-recovery and statistical errors, rather than by a persistent model-parameter heterogeneity bias arising from fitting a single common model across clients. 

Motivated by this, we consider alternating optimization to (i) fit each personalized head with clients' specific data samples and to (ii) robustly aggregate the representation updates of all participating clients (honest and adversarial). We then bound the honest representation gradient heterogeneity and use it  to establish the convergence and parameter recovery guarantees. In the end, our bounds comprise an initialization-dependent term that decreases with number of training rounds $T$, finite-sample noise term that scales inversely with the number of data samples per client $\tau$, and stochastic gradient term that scales inversely with $\sqrt{T}$. We first provide the results for nonlinear regression and then instantiate it to multiclass classification with squared loss. \\

\noindent \textbf{Informal main result.} Let $\mathcal H$ denote the set of honest clients and let $T$ be the number of communication rounds. Suppose that the number of samples per client satisfies $\tau\geq\tau_{\mathsf{burn\text{-}in}}>0$,with $\tau_{\mathsf{burn\text{-}in}}$ specified in Section~\ref{sec:assumptions} (equation \ref{eq:main-sample-burnin}), and let the robust aggregation coefficient $\kappa = \mathcal{O}(f/n)$ satisfy $\kappa\leq \underline{\kappa}$ with $\underline{\kappa}$ defined in Section~\ref{sec:assumptions} (\eqref{eq:main-kappa-condition}). Then, up to logarithmic factors, with high probability, our main result (Theorem~\ref{nonlinear_convergence_mb}) proves\vspace{-0.1cm}
\begin{equation*}
\begin{aligned}
\frac{1}{T}\sum_{t=0}^{T-1}\left\|\mathbb E\nabla_{\mathbf{B}}L_{\mathcal H}
\left(\{\mathbf{h}_i^{(t+1)}\}_{i\in\mathcal H},\mathbf{B}^{(t)}\right)\right\|^2
&\hspace{-0.15cm}\lesssim \hspace{-0.3cm}\underbrace{\frac{\Delta_0}{\sqrt T}}_{\text{initialization}}
\hspace{-0.35cm}+\hspace{-0.05cm}\underbrace{ \kappa\left(\frac{\mathsf{d}_{\mathsf{rep}}}{\tau|\mathcal H|}\hspace{-0.05cm}+\hspace{-0.05cm}\frac{\mathsf{d}_{\mathsf{head}}}{\tau}\right)\sigma^2}_{\text{finite samples}}\hspace{-0.05cm}+\hspace{-0.05cm}\underbrace{\left(\kappa\hspace{-0.05cm}+\hspace{-0.05cm}\kappa^2\hspace{-0.05cm}+\hspace{-0.05cm}\frac{1\hspace{-0.05cm}+\hspace{-0.05cm}\kappa}{|\mathcal H|}\right)\frac{\sigma_{\mathsf{g}}^2}{\sqrt T}}_{\text{stochastic gradients}},
\end{aligned}\\[-0.1cm]
\end{equation*}
where $L_{\mathcal H}(\cdot)$ is the empirical honest averaged loss and $\sigma^2$, $\sigma_{\mathsf{g}}^2$ are the label noise and stochastic gradient variances, respectively. In addition, $\Delta_0$ denotes the initialization-dependent error and $\mathsf{d}_{\mathsf{rep}}$ and $\mathsf{d}_{\mathsf{head}}$ are the underlying representation ($\mathbf{B}^{(t)}$) and client-specific head ($\mathbf{h}_i^{(t+1)}, \forall i \in \mathcal{H}$) dimensions, respectively. We note that the first and last terms decay with the number of rounds, while the finite-sample terms decreases with the local sample size. Importantly, there is no persistent model-heterogeneity term, demonstrating the benefit of collaboratively learning what is shared and personalizing what is specific under adversarial attacks. This rigorously complements the benefit of partial personalization pointed out in \citep{mishchenko2023partially}\footnote{We position our paper within the literature throughout and include a related work section in Appendix \ref{app:related_work}.} by carefully providing a high-probability, finite-sample guarantee for a common nonlinear representation under arbitrary adversarial updates.\\

\noindent \textbf{Contributions.} Our main contributions are summarized as follows:\\

\noindent$\bullet$ We propose adversarially robust nonlinear representation learning for heterogeneous FL. Each honest client learns a personalized head, while collaborating robustly without knowing honest ones' identity to learn a common nonlinear feature representation.\\

\noindent$\bullet$ We prove that the representation-gradient heterogeneity is controlled only by the representation-recovery error, finite-sample estimation error, and the stochastic-gradient variance (Lemma~\ref{theorem:gradient_heterogeneity_nonlinear_mb}). Therefore, our guarantee is not affected by any model and data heterogeneity bias that hurts traditional Byzantine-robust FL \citep{karimireddy2020byzantine,allouah2023fixing}.\\

\noindent$\bullet$ We establish a high-probability ergodic convergence guarantee for the common representation under adversarial attacks
(Theorem~\ref{nonlinear_convergence_mb}). We then derive the non-asymptotic guarantees for recovering the underlying representation parameter (Theorem~\ref{thm:nonlinear_param_recovery}) and  (Corollary~\ref{nonlinear_parameter_recovery_mb}). Our bounds characterize the dependence on the number of local samples per client, the number of honest clients, the stochastic-gradient noise, and the robustness coefficient of the aggregation rule.\\

\noindent$\bullet$ We go beyond regression and extend our parameter recovery  guarantee to multiclass classification. We demonstrate that under a class-probability margin condition, the classification error inherits the nonlinear prediction-error rate from our regression analysis (Corollary~\ref{cor:multiclass-classification}).\\

\noindent \textbf{Notation.} For a positive integer $n$, we write $[n]:=\{1,\ldots,n\}$. The Euclidean norm for vectors and spectral norm for matrices are denoted by $\|\cdot\|$, while $\|\cdot\|_F$ denotes the Frobenius norm. For a matrix $A$, $\lambda_{\min}(A)$ and $\lambda_{\max}(A)$ denote its smallest and largest eigenvalues, and $a\vee b:=\max\{a,b\}$. We write $\mathbb E_t[\cdot]:=\mathbb E[\cdot\mid\mathcal F_t]$ for conditional expectation with respect to the filtration $\mathcal F_t$. We also write $h\lesssim g $ when $h\leq Cg$ for a constant $C>0$. We write $a\asymp b$ if there exist constants $C_1,C_2>0$ such that $ C_1 b\leq a\leq C_2 b.$ For a real-valued random variable $X$, its sub-Gaussian norm \cite[Definition 2.6.4]{vershynin2018high} is defined as follows:
\begin{align*}
\|X\|_{\psi_2} := \inf\left\{t>0 :
\mathbb{E}\left[ \exp\left(\frac{X^2}{t^2}\right) \right] \leq 2\right\}.
\end{align*}
The random variable $X$ is said to be sub-Gaussian if $\|X\|_{\psi_2}<\infty$. For a random vector $Z\in\mathbb{R}^d$ and a random matrix $M\in\mathbb{R}^{p\times q}$, we define
\begin{align*}
\|Z\|_{\psi_2}
&:=\sup_{u\in\mathbb{S}^{d-1}}\|\langle u,Z\rangle\|_{\psi_2} \text{ and }
\|M\|_{\psi_2}
:=\sup_{U\in\mathbb{S}_F^{p\times q}}\|\langle U,M\rangle_F\|_{\psi_2}.
\end{align*}
$Z$ and $M$ are sub-Gaussian when its corresponding $\psi_2$-norm is finite.

\section{Problem Formulation}\label{sec:problem_formulation}

We first describe the adversarial FL setup, including the $(f,\kappa)$-robust aggregation definition, and then introduce our nonlinear representation learning problem with client-specific heads and a shared representation. Our adversarial setup follows the standard Byzantine-robust FL framework considered in \citep{karimireddy2020byzantine,farhadkhani2022byzantine,guerraoui2024robust}.

\subsection{Adversarially Robust FL}\label{sec:robust_fl_setup}

We consider a federated system with $n$ clients and a central server. An unknown subset $\mathcal {B}\subset[n]$ of cardinality $|\mathcal {B}|=f<n/2$ is adversarial and may share arbitrary updates with the server with the intention of corrupting the underlying learning process \citep{guerraoui2024robust}. The other clients form the honest set $\mathcal H=[n]\setminus\mathcal {B}$, with $|\mathcal H|=n-f$. \\

\noindent \textbf{Data.} At every training round $t$, each honest client $i\in\mathcal H$ draws two independent batches of $\tau$ i.i.d. samples from a client-specific distribution that we denote by $\mathsf D_i$ on $\mathbb R^{d}\times\mathbb R^{q}$. The first batch is used to estimate the personalized head and the second to evaluate the representation loss and gradient (this is also referred to as ``debiasing'' in \citep{zhang2024sample}). Both batches of data are independent of the history before the current round and independent across honest clients $i \in \mathcal{H}$. To avoid cumbersome notation, we suppress the round and batch superscripts whenever the role of the samples is clear. 

The server aggregates client updates using a robust aggregation rule. Given updates $u_1,\ldots,u_n\in\mathbb R^p$, define the honest average $\bar u_{\mathcal H}:=\frac{1}{|\mathcal{H}|}\sum_{i\in\mathcal H}u_i$. We next define the class of $(f,\kappa)$-robust aggregators \citep{allouah2023fixing,guerraoui2024robust}. 

\begin{definition}[$(f,\kappa)$-robust aggregator]\label{def:robust-agg}
An aggregation rule $\mathsf F:(\mathbb R^p)^n\to\mathbb R^p$ is $(f,\kappa)$-robust if, for every collection of inputs and every honest set $\mathcal H$ with $|\mathcal H|=n-f$,
\begin{align*}
\left\|\mathsf F(u_1,\ldots,u_n)-\bar u_{\mathcal H}\right\|^2
\leq \frac{\kappa}{|\mathcal H|}\sum_{i\in\mathcal H}\|u_i-\bar u_{\mathcal H}\|^2.
\end{align*}\vspace{-0.5cm}
\end{definition}
We also emphasize that the $(f,\kappa)$-robustness condition is satisfied by many aggregation rules, including Krum \citep{blanchard2017machine}, geometric median \citep{small1990survey,acharya2022robust}, coordinate-wise median and trimmed mean \citep{yin2018byzantine}, and minimum-diameter averaging \citep{el2021collaborative}. We note that $\kappa$ typically scales as $f/n$, since standard robust aggregation rules can be combined with the nearest-neighbor mixing (NNM) preprocessing step of \cite{allouah2023fixing} to achieve this information-theoretically optimal scaling.
Therefore, the effect of adversarial clients is determined by $\kappa$ and the heterogeneity of the honest updates. Our analysis then controls this heterogeneity using a shared representation across clients as discussed below.

\subsection{Nonlinear Representation Learning and Personalization}\label{sec:nonlinear_rep_learning_mb}

We consider that the observations of every honest client $i \in \mathcal{H}$ are generated according to
\begin{align*}
Y_{i,k}=\mathbf{h}_i^\star\phi_{\mathbf{B}^\star}(X_{i,k})+V_{i,k}, \text{ for all }k=1,\ldots,\tau,
\end{align*}
where $\mathbf{B}^\star\in\mathbb R^p$ parameterizes a shared nonlinear representation $\phi_{\mathbf{B}^\star}:\mathbb R^{d}\to\mathbb R^r$, $\mathbf{h}_i^\star\in\mathbb R^{q\times r}$ is a client-specific linear head, and $V_{i,k}\in\mathbb R^{q}$ is label noise. The client-specific heads capture heterogeneity, while $\mathbf{B}^\star$ represents what is shared across clients. We assume that the covariates are uniformly bounded, i.e., $\|X_{i,k}\|\leq R$, for every honest clients $i \in \mathcal{H}$ and sample $k \in [\tau]$.

For any candidate head $\mathbf{h}$ and representation parameter $\mathbf{B}$, we define the client-specific and honest-average empirical losses on the current data batch as follows:\vspace{-0.2cm}
\begin{align*}
L_i(\mathbf{h},\mathbf{B}):=\frac{1}{\tau}\sum_{k=1}^{\tau}\left\|Y_{i,k}-\mathbf{h}\phi_{\mathbf{B}}(X_{i,k})\right\|^2 \text{ and } L_{\mathcal H}(\{\mathbf{h}_i\}_{i\in\mathcal H},\mathbf{B}):=\frac{1}{|\mathcal H|}\sum_{i\in\mathcal H}L_i(\mathbf{h}_i,\mathbf{B}).
\end{align*}\vspace{-0.2cm}

Let $r_i(X_{i,k};\mathbf{h},\mathbf{B}):=\mathbf{h}\phi_{\mathbf{B}}(X_{i,k})-Y_{i,k}$ and $J_{\mathbf{B}}(X):=\nabla_{\mathbf{B}}\phi_{\mathbf{B}}(X)\in\mathbb R^{r\times p}$, and thus
\begin{align*}
\nabla_{\mathbf{B}} L_i(\mathbf{h},\mathbf{B})=\frac{2}{\tau}\sum_{k=1}^{\tau}J_{\mathbf{B}}^\top(X_{i,k})\mathbf{h}^\top r_i(X_{i,k};\mathbf{h},\mathbf{B}).
\end{align*}\vspace{-0.4cm}
 
\noindent \textbf{Goal.} We aim to learn the shared representation $\mathbf{B}$ and the personalized heads $\{\mathbf{h}_i\}_{i \in \mathcal{H}}$ by minimizing $L_{\mathcal H}(\cdot)$ despite not knowing $\mathcal H$ operating under adversarial attacks.

To do so, we leverage alternating minimization \cite{collins2021exploiting, zhang2024sample}. At round $t$, every client estimates its local head on the head-fitting batch and sends a stochastic representation-gradient update evaluated on the independent representation batch. The server then robustly aggregates the representation-gradient momenta. More precisely, we have\vspace{-0.1cm}
\begin{align}
\textbf{Head estimation. } \;\;\ \mathbf{h}_i^{(t+1)}&\in\argmin_{\mathbf{h}} L_i(\mathbf{h},\mathbf{B}^{(t)}),\notag \\
\textbf{Representation update. } \;\;\ \mathbf{B}^{(t+1)}&=\mathbf{B}^{(t)}-\eta\mathsf F\left(m_{\mathbf{B},1}^{(t)},\ldots,m_{\mathbf{B},n}^{(t)}\right),\label{alternating_descent}
\end{align}
where $\{m_{\mathbf{B},i}\}_{i \in [n]}$ denotes the representation-gradient momentum given by
$m_{\mathbf{B},i}^{(t)}:=\beta m_{\mathbf{B},i}^{(t-1)}+(1-\beta)\widehat{\nabla}_{\mathbf{B}}L_i^{(t)}(\mathbf{h}^{(t+1)},\mathbf{B}^{(t)})$, for some momentum coefficient $\beta\in(0,1)$. We also assume that $m_{\mathbf{B},i}^{(0)}=\mathbf{0}$. In addition, here $\widehat{\nabla}_{\mathbf{B}}L_i^{(t)}(\cdot)$ denotes the stochastic representation gradient computed by client $i$ at round $t$, and $\widehat{\nabla}_{\mathbf{B}}L_{\mathcal H}^{(t)}(\cdot):=\frac{1}{|\mathcal{H}|}\sum_{i\in\mathcal H}\widehat{\nabla}_{\mathbf{B}}L_i^{(t)}(\cdot)$ is its average over the honest clients.

The head update adapts the predictor to each client’s local model. We assume that this update is solved exactly, corresponding to a sufficiently large number of local optimization steps. This allows us to isolate the effect of learning a common representation on the convergence of adversarially robust FL. Extending our analysis to account for local head estimation error is left for future work.

On the other hand, the representation update keeps only what is common across clients. We reemphasize here that the independent batches (for local heads and representation) here are an artifact that separates the error incurred when fitting the head from the noise in the representation update. In practice, the same local dataset can be partitioned or resampled \citep{zhang2024sample}.

As discussed above, for an $(f,\kappa)$-robust aggregator, the effect of adversarial clients is controlled by the heterogeneity among the honest updates. As the server aggregates representation-gradient momenta, our analysis proceeds in two steps. We first bound the heterogeneity among the honest representation gradients. We then invoke Lemmas~\ref{lem:agg_error} and~\ref{lem:momentum_deviation_hp}, deferred to the appendix, to control the additional error introduced by stochastic gradients and momentum. Therefore, for later use, we define the representation gradient heterogeneity per iteration and averaged over rounds as follows:\vspace{-0.2cm}
\begin{align*}
\bar G^{(t)}&:=\frac{1}{|\mathcal H|}\sum_{i\in\mathcal H}\left\|\nabla_{\mathbf{B}} L_i(\mathbf{h}_i^{(t+1)},\mathbf{B}^{(t)})-\nabla_{\mathbf{B}} L_{\mathcal H}(\{\mathbf{h}_j^{(t+1)}\}_{j\in\mathcal H},\mathbf{B}^{(t)})\right\|^2 \text{ and } \bar G_T:=\frac{1}{T}\sum_{t=0}^{T-1}\bar G^{(t)}.
\end{align*}

\subsection{Assumptions}\label{sec:assumptions}

Let us here collect the assumptions used throughout the analysis. 

\begin{assumption}[Sub-Gaussian noise]\label{assm:subg}
The label noise $V_{i,k}$ is mean-zero and $\sigma^2$-sub-Gaussian. In particular, for every honest client $i\in\mathcal H$, sample $k\in[\tau]$, unit vector $u\in\mathbb S^{q-1}$, and $\lambda\in\mathbb R$, we have
\begin{align*}
\mathbb E[V_{i,k}\mid X_{i,k}]=0 \text{ and }
\mathbb E\left[\exp(\lambda u^\top V_{i,k})\mid X_{i,k}\right]\leq\exp\left(\frac{\lambda^2\sigma^2}{2}\right).
\end{align*}
\end{assumption}

\begin{definition}\label{def:infty-to-two}
For $A\in\mathbb R^{m\times n}$, define $\|A\|_{\infty\to2}:=\sup_{\|z\|_\infty\leq1}\|Az\|_2$.
\end{definition}

\begin{assumption}\label{assumption:uniform_bound_nonlinear}\label{assm:smooth}\label{asm:G-jacobian-lip}
There exist constants $\bar\phi,\bar J,H>0$ such that $\|\phi_{\mathbf{B}}(X)\|\leq\bar\phi$, $\|J_{\mathbf{B}}(X)\|_{\infty\to2}\leq\bar J$, $\|\mathbf{h}_i^{(t+1)}\|\leq H$, and $\|\mathbf{h}_i^\star\|\leq H$ for all honest clients, covariates, and iterates.
Moreover, the honest-average loss is $L_1$-smooth with respect to $\mathbf{B}$ in a neighborhood of $\mathbf{B}^\star$, and every client loss is $L_2$-smooth with respect to its client-specific head. In addition, the population representation gradient is $L_3$-Lipschitz in the heads, i.e., for any $\mathbf{B}$ and head collections $\{\mathbf{h}_i\}_i,\{\mathbf{h}_i'\}_i$,
\begin{align*}
\left\|\nabla_{\mathbf{B}}\mathbb E_XL_{\mathcal H}(\{\mathbf{h}_i\}_i,\mathbf{B})-\nabla_{\mathbf{B}}\mathbb E_XL_{\mathcal H}(\{\mathbf{h}_i'\}_i,\mathbf{B})\right\|
\leq \frac{L_3}{|\mathcal H|}\sum_{i\in\mathcal H}\|\mathbf{h}_i-\mathbf{h}_i'\|.
\end{align*}
On the other hand, for the Jacobian, there exists $L_4>0$ such that   $\|J_{\mathbf{B}}(X)-J_{\mathbf{B}'}(X)\|_{2\to2}\leq L_4\|\mathbf{B}-\mathbf{B}'\|$ for all $X$ and all $\mathbf{B},\mathbf{B}'$ satisfying $\|\mathbf{B}-\mathbf{B}^\star\|,\|\mathbf{B}'-\mathbf{B}^\star\|\leq\rho_0$, for some $\rho_0 > 0.$
\end{assumption}

For each honest client, we define the empirical and population feature covariances as follows:
\begin{align*}
\widehat\Sigma_{i,\phi}(\mathbf{B})&:=\frac{1}{\tau}\sum_{k=1}^{\tau}\phi_{\mathbf{B}}(X_{i,k})\phi_{\mathbf{B}}(X_{i,k})^\top \text{ and }
\Sigma_{i,\phi}(\mathbf{B}):=\mathbb E_{X\sim\mathsf D_i}[\phi_{\mathbf{B}}(X)\phi_{\mathbf{B}}(X)^\top],
\end{align*}
and let $\mathcal N(\mathbf{B}^\star,\rho_0):=\{\mathbf{B}\in\mathbb R^p:\|\mathbf{B}-\mathbf{B}^\star\|\leq\rho_0\}$ be the ball of radius $\rho_0$ centered at $\mathbf{B}^\star$.

\begin{assumption}\label{asm:G-pl-feature}\label{asm:task diversity}
There exists $\mu_1>0$ such that $\lambda_{\min}(\Sigma_{i,\phi}(\mathbf{B}))\geq\mu_1$ for every honest client $i$ and every $\mathbf{B}\in\mathcal N(\mathbf{B}^\star,\rho_0)$.
In addition, the honest-client population provides positive curvature for identifying the shared representation, i.e., we assume that
\begin{align*}
\mu_3:=\lambda_{\min}\!\left(\frac{1}{|\mathcal H|}\sum_{i\in\mathcal H}\mathbb E_X\!\left[J_{\mathbf{B}^\star}^\top(X)\mathbf{h}_i^{\star\top}\mathbf{h}_i^\star J_{\mathbf{B}^\star}(X)\right]\right)>0.
\end{align*}
\end{assumption}

\begin{assumption}\label{assump:subgaussian_gradient_noise}\label{assump:unbiased}
For $\nu_i^{(t)}:=\widehat{\nabla}_{\mathbf{B}}L_i^{(t)}(\mathbf{h}_i^{(t+1)},\mathbf{B}^{(t)})-\nabla_{\mathbf{B}} L_i(\mathbf{h}_i^{(t+1)},\mathbf{B}^{(t)})$, conditioned on $\mathcal F_t$, the vectors $\{\nu_i^{(t)}\}_{i\in\mathcal H}$ are independent, mean-zero, and satisfy $\|\nu_i^{(t)}\|_{\psi_2\mid\mathcal F_t}\leq\sigma_{\mathsf{g}}$ almost surely.
Equivalently, for every honest client and iteration, we have 
\begin{align*}
\mathbb E[\widehat{\nabla}_{\mathbf{B}}L_i^{(t)}(\mathbf{h}_i^{(t+1)},\mathbf{B}^{(t)})]=\nabla_{\mathbf{B}} L_i(\mathbf{h}_i^{(t+1)},\mathbf{B}^{(t)}).
\end{align*}
\end{assumption}

We emphasize that these assumptions are standard conditions used in non-asymptotic analyses of representation learning and Byzantine-robust stochastic optimization. In particular, bounded features and Jacobians, local smoothness, and nondegenerate feature covariance provide the regularity and local identifiability required in our nonlinear setting. Similar covariance, noise, and regularity conditions appear in analyses of linear and nonlinear representation learning \citep{collins2021exploiting,zhang2024sample,zhang2024guarantees}. Our client-diversity condition in Assumption \ref{asm:G-pl-feature} is the nonlinear analogue of the task-diversity conditions used to identify a shared representation across heterogeneous tasks \citep{tripuraneni2020theory,du2020few,collins2021exploiting}. Finally, unbiased stochastic gradients with bounded conditional variance are standard in Byzantine-robust stochastic optimization \citep{karimireddy2020byzantine,farhadkhani2022byzantine,allouah2023fixing}.

For the reminder of the paper it is taken for granted that Assumptions \ref{assm:subg}--\ref{assump:unbiased} hold.

\section{Theoretical Guarantees}

We are now ready to state our main results. The analysis proceeds in three steps. Lemma \ref{theorem:gradient_heterogeneity_nonlinear_mb} controls the heterogeneity that drives the robust-aggregation error, Theorem \ref{nonlinear_convergence_mb} establishes convergence of the population representation gradient, and Corollary \ref{nonlinear_parameter_recovery_mb} converts this guarantee into the parameter recovery error bound. We fix a small probability of failure $\delta\in(0,1)$ and present our results in a probabilistic manner. We first collect the quantities used throughout.\vspace{-0.1cm}
\begin{align*}
&\Delta_{\mathbf{B},T+1}:=\frac{1}{T}\sum_{t=1}^{T}\|\mathbf{B}^{(t)}-\mathbf{B}^\star\|^2, \quad 
\bar Q_T:=\frac{1}{T}\sum_{t=0}^{T-1}\left\|\mathbb E_X \nabla_{\mathbf{B}} L_{\mathcal H}
\left(\{\mathbf{h}_i^{(t+1)}\}_{i\in\mathcal H},\mathbf{B}^{(t)}\right)\right\|^2,\\
&\bar E_T:=\frac{1}{T|\mathcal H|\tau}\sum_{t=0}^{T-1}\sum_{i\in\mathcal H}\sum_{k=1}^{\tau}
\left\|\mathbf{h}_i^{(t+1)}\phi_{\mathbf{B}^{(t)}}(X_{i,k})-\mathbf{h}_i^\star\phi_{\mathbf{B}^\star}(X_{i,k})\right\|^2.
\end{align*}
Here, $\Delta_{\mathbf{B},T+1}$ and $\bar E_T$ denote the average representation-recovery and prediction-recovery errors, respectively, whereas $\bar Q_T$ denotes the squared norm of the population honest-average representation gradient, averaged over rounds $t\in\{0,1,\ldots,T-1\}$. We also define
\begin{align*}
\hspace{-0.4cm}\textbf{Non-asymptotic learning rates. } \mathfrak s_{\mathcal H,\tau}:=\frac{\mathsf{d}_{\mathsf{rep}}+\log(|\mathcal H|T/\delta)}{\tau|\mathcal H|} \text{ and }
\mathfrak s_\tau:=\frac{\mathsf{d}_{\mathsf{head}}+\log(|\mathcal H|T/\delta)}{\tau}.
\end{align*}
\vspace{-0.4cm}
\begin{align*}
\textbf{Initialization error. } \Delta_0&:=L_{\mathcal H}(\{\mathbf{h}_i^{(1)}\}_i,\mathbf{B}^{(0)})-L_{\mathcal H}(\{\mathbf{h}_i^\star\}_i,\mathbf{B}^\star)
+\frac{1}{L_1}\left\|\nabla_{\mathbf{B}}L_{\mathcal H}(\{\mathbf{h}_i^{(1)}\}_i,\mathbf{B}^{(0)})\right\|^2\\ &+(1+\kappa)\Delta_{\mathbf{B}}^{(0)},
\end{align*}
where $\mathsf{d}_{\mathsf{rep}} = p$ and $\mathsf{d}_{\mathsf{head}} = p \vee rq$. We also define $p_\sigma:=(H\bar\phi+\sigma)^2+\sigma^2$ and $\Delta_{\mathbf{B}}^{(0)}:=\|\mathbf{B}^{(0)}-\mathbf{B}^\star\|^2$. The rate $\mathfrak s_{\mathcal H,\tau}$ will capture the complexity of learned the common representation with finite number of data samples, while $\mathfrak s_\tau$ will capture the local head error when fitting the learned representation. Throughout this section, Assumptions \ref{assm:subg}-\ref{assump:unbiased} hold, the server uses an
$(f,\kappa)$-robust aggregator with $f<n/2$, and the local head update is the empirical risk minimizer described  (i.e., in \eqref{alternating_descent}). 

Our analysis is local, we assume that the initial representation is sufficiently close to $\mathbf{B}^\star$ and that  all iterates remain in $\mathcal N(\mathbf{B}^\star,\rho)$, where
$\rho:=\min\{\rho_0,\mu_3/\bar{p}\}$ and
$\bar{p}:=\frac{3}{2}\bar JH^2L_4+\frac{1}{2}H^2L_4^2\rho_0$, and that
$\mu_3\geq 4L_3H\bar J\bar\phi/\mu_1$. Finally, for a constant $C>0$, define the burn-in sample size
\begin{align}\label{eq:main-sample-burnin}
\tau_{\mathsf{burn\text{-}in}}:=\max\left\{\frac{16C^2\bar\phi^4}{\mu_1^2},\frac{4C\bar\phi^2}{\mu_1}\right\}\log(2r|\mathcal H|T/\delta),
\end{align}
and assume $\tau\geq\tau_{\mathsf{burn\text{-}in}}$. This guarantees that the empirical feature covariances from the $T$ rounds are well-conditioned. Below, we use $\lesssim$ to omit constants depending only on the fixed problem parameters $(\bar J,H,\bar\phi,\mu_1,\mu_3,L_{1:4})$. In addition, for conciseness, we define
\begin{align}\label{eq:main-stepsize-condition}
\bar\eta:=\min\left\{\frac{1}{24L_1},
\frac{\mu_3}{8\bar J^2H^2q\left(2H^2\bar J^2+16H^2\bar J^2\bar\phi^4/\mu_1^2\right)},
\frac{1}{32\bar J^2H^2q\bar\phi^2\mu_3},\frac{1}{2\mu_3q},\frac{8}{\mu_3}\right\},
\end{align}
and for any $\eta\leq\bar\eta$, we set $\beta^2=1-24\eta L_1$ and define $\bar a:=\beta^2(1+\eta L_1)(1+4\eta L_1)$ with $\bar b:=4\eta L_1(1+\eta L_1)\beta^2$. We also note that $1-\bar a\asymp\eta L_1$ and $\bar b/(1-\bar a)\asymp1$. In addition, the robust coefficient $\kappa = \mathcal{O}(f/n)$, with $f/n<1/2$, is assumed to satisfy
\begin{align}\label{eq:main-kappa-condition}
\kappa\leq \underline{\kappa}:= C_\kappa \left[\bar J^4qH^4\left(1+\frac{\bar\phi^4}{\mu_1^2}\right)
\left(\frac{1}{\mu_3}+\frac{1}{\mu_3^2}\right)\right]^{-1}, \text{ for some } C_\kappa > 0.
\end{align}

We note that for any $(f,\kappa)$-robust aggregator when combined with nearest-neighbor mixing (NNM) \citep{allouah2023fixing}, its robustness coefficient satisfies $\kappa=\mathcal{O}(f/n)$, which is information-theoretically optimal \citep{allouah2023fixing}. Therefore, the condition in \eqref{eq:main-kappa-condition} is satisfied when $f/n$ is sufficiently small. We also emphasize that conditions on $\kappa$ are also required in \cite{karimireddy2020byzantine,allouah2023tight}.

\begin{lemma}[Gradient Heterogeneity Bound]\label{theorem:gradient_heterogeneity_nonlinear_mb}
Suppose that $\eta\leq\bar\eta$, $\tau \geq \tau_{\mathsf{burn\text{-}in}}$, and $\kappa \leq \underline{\kappa}$. Then, for every $\delta\in(0,1)$, with probability at least $1-\delta$, it holds that\vspace{-0.05cm}
\begin{align}
\bar G_T \lesssim \frac{\Delta_{\mathbf{B}}^{(0)}}{\eta T} +p_\sigma\mathfrak s_{\mathcal H,\tau}+\sigma^2\mathfrak s_\tau +\eta\left(1+\frac{1}{|\mathcal H|}\right)\sigma_{\mathsf{g}}^2\log(T/\delta)+\bar Q_T.
\end{align}
\vspace{-0.5cm}
\end{lemma}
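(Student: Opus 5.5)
The plan is to bound each honest client's deviation from the honest average by its deviation from the population gradient, and then reduce everything to the representation error $\|\mathbf{B}^{(t)}-\mathbf{B}^\star\|^2$ plus statistical terms. Using $\|a_i-\bar a\|^2\le\|a_i\|^2$ on average and adding and subtracting population quantities, we write
\begin{align*}
\bar G^{(t)}\lesssim \frac{1}{|\mathcal H|}\sum_{i\in\mathcal H}\Big\|\nabla_{\mathbf{B}}L_i(\mathbf{h}_i^{(t+1)},\mathbf{B}^{(t)})-\nabla_{\mathbf{B}}\mathbb E_XL_i(\mathbf{h}_i^{(t+1)},\mathbf{B}^{(t)})\Big\|^2+\frac{1}{|\mathcal H|}\sum_{i\in\mathcal H}\Big\|\nabla_{\mathbf{B}}\mathbb E_XL_i(\mathbf{h}_i^{(t+1)},\mathbf{B}^{(t)})\Big\|^2 .
\end{align*}
For the second sum, the residual under the population is $\mathbf{h}_i^{(t+1)}\phi_{\mathbf{B}^{(t)}}(X)-\mathbf{h}_i^\star\phi_{\mathbf{B}^\star}(X)$, since the noise is mean zero. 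The bounds $\bar J$ and $H$ then give a bound by $\bar J^2H^2$ times the population prediction error. We split this error as $\|\mathbf{h}_i^{(t+1)}-\mathbf{h}_i^\star\|^2\bar\phi^2+H^2\bar J^2\|\mathbf{B}^{(t)}-\mathbf{B}^\star\|^2$. The first sum is a sample-versus-population deviation on the independent representation batch. Because the residuals are bounded by $H\bar\phi+\sigma$ and the noise is sub-Gaussian, a covering argument over the $p$-dimensional ball plus a union bound over $i,t$ gives a cost of order $p_\sigma(\mathsf{d}_{\mathsf{rep}}+\log(|\mathcal H|T/\delta))/\tau$ per client. I would arrange for averaging over the independent clients to produce $p_\sigma\mathfrak s_{\mathcal H,\tau}$; otherwise the extra term is absorbed with $\mathfrak s_\tau$.

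Next we control the head error. Since $\mathbf{h}_i^{(t+1)}$ is the least-squares solution, it has the closed form $\mathbf{h}_i^{(t+1)}=(\frac1\tau\sum_k Y_{i,k}\phi_{\mathbf{B}^{(t)}}^\top)\widehat\Sigma_{i,\phi}(\mathbf{B}^{(t)})^{-1}$. Substituting the model, the error splits into a misspecification term $\mathbf{h}_i^\star\frac1\tau\sum_k(\phi_{\mathbf{B}^\star}-\phi_{\mathbf{B}^{(t)}})\phi_{\mathbf{B}^{(t)}}^\top\widehat\Sigma^{-1}$ and a noise term $\frac1\tau\sum_kV_{i,k}\phi^\top\widehat\Sigma^{-1}$. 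The burn-in $\tau\ge\tau_{\mathsf{burn\text{-}in}}$ together with matrix Hoeffding (bounded features, union over $i,t$) gives $\lambda_{\min}(\widehat\Sigma_{i,\phi})\ge\mu_1/2$. The misspecification term is then at most $2H\bar J\bar\phi^2\|\mathbf{B}^{(t)}-\mathbf{B}^\star\|/\mu_1$ by the mean value theorem. The noise term is a sub-Gaussian $q\times r$ matrix, so its squared norm is $\lesssim\sigma^2\mathfrak s_\tau/\mu_1$ with high probability. Altogether,
\begin{align*}
\bar G^{(t)}\lesssim\|\mathbf{B}^{(t)}-\mathbf{B}^\star\|^2+p_\sigma\mathfrak s_{\mathcal H,\tau}+\sigma^2\mathfrak s_\tau .
\end{align*}

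The main obstacle is then bounding $\frac1T\sum_t\|\mathbf{B}^{(t)}-\mathbf{B}^\star\|^2$ in terms of the remaining quantities without circularity. The aggregation error depends on $\bar G$ through $(f,\kappa)$-robustness, and $\bar G$ depends on the distance. I would unroll the recursion $\mathbf{B}^{(t+1)}-\mathbf{B}^\star=\mathbf{B}^{(t)}-\mathbf{B}^\star-\eta\,\mathsf F(\cdot)$. Expanding the square gives
\begin{align*}
\|\mathbf{B}^{(t+1)}-\mathbf{B}^\star\|^2=\|\mathbf{B}^{(t)}-\mathbf{B}^\star\|^2-2\eta\langle \mathsf F,\mathbf{B}^{(t)}-\mathbf{B}^\star\rangle+\eta^2\|\mathsf F\|^2 .
\end{align*}
We decompose $\mathsf F$ as the population honest gradient, plus the sampling error, plus the momentum deviation (Lemma~\ref{lem:momentum_deviation_hp}), plus the aggregation error (Lemma~\ref{lem:agg_error}). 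The inner product with the population gradient at the fitted heads should be handled as follows. Using $L_3$-Lipschitzness in the heads, we replace the fitted heads by $\mathbf{h}_i^\star$, paying $L_3\|\mathbf{h}-\mathbf{h}^\star\|$, which is controlled by the head bound above. We then use the curvature $\mu_3$ near $\mathbf{B}^\star$, valid within radius $\rho\le\mu_3/\bar p$ by the $L_4$ Jacobian Lipschitzness. This gives a restricted strong-monotonicity term $\gtrsim\mu_3\|\mathbf{B}^{(t)}-\mathbf{B}^\star\|^2$. Rather than use monotonicity alone, I would apply Young's inequality to the cross term. This bounds the population-gradient contribution by $\bar Q_T$, which is why $\bar Q_T$ appears on the right-hand side.

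The aggregation error is bounded by $\kappa$ times the momentum heterogeneity, which in turn is bounded by $\bar G$ plus $\sigma_{\mathsf{g}}^2$ terms. The condition $\kappa\le\underline\kappa$ makes the resulting $\kappa\|\mathbf{B}-\mathbf{B}^\star\|^2$ feedback at most $\mu_3/4$, so it is absorbed. The stochastic noise $\nu_i^{(t)}$ contributes a martingale cross term, handled by sub-Gaussian Freedman/Azuma, and an $\eta^2\sigma_{\mathsf{g}}^2(1+1/|\mathcal H|)\log(T/\delta)$ term. Telescoping and dividing by $\eta\mu_3 T$ should yield the stated bound, with the $\Delta_{\mathbf{B}}^{(0)}/(\eta T)$ term coming from the telescoped initial distance. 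The step-size conditions in $\bar\eta$ ensure that the $\eta^2\|\mathsf F\|^2$ term is dominated.
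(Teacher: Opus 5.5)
Your overall architecture matches the paper's. You first reduce $\bar G^{(t)}$ to $\Delta_{\mathbf B}^{(t)}$ plus statistical terms via closed-form head recovery. You then close a contraction recursion for $\Delta_{\mathbf B}$ using the momentum-deviation and aggregation lemmas. However, your first reduction has a genuine gap.

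You split $\nabla_{\mathbf B}L_i$ into its population part plus an empirical deviation, and concentrate the deviation with residuals bounded by $H\bar\phi+\sigma$. This puts a per-client term of order $(H\bar\phi)^2(p+\log(|\mathcal H|T/\delta))/\tau$ into $\bar G^{(t)}$. That term does not vanish as $\sigma\to0$, and it is $|\mathcal H|$ times larger than the noise-free part of $p_\sigma\mathfrak s_{\mathcal H,\tau}$.

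Neither of your remedies removes it. First, $\bar G^{(t)}$ is an average of squared per-client deviations, so independence across clients gives no $1/|\mathcal H|$ gain; that gain exists only for the norm of the honest-averaged gradient. Second, the statement pairs $\mathfrak s_\tau$ with $\sigma^2$, not with $p_\sigma$. So the $H^2\bar\phi^2$ piece cannot be absorbed with constants depending only on $(\bar J,H,\bar\phi,\mu_1,\mu_3,L_{1:4})$.

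The paper avoids this by never passing through the population gradient in this step. It writes the residual as $\Delta_{i,k}^{(t)}-V_{i,k}$, where $\Delta_{i,k}^{(t)}=\mathbf h_i^{(t+1)}\phi_{\mathbf B^{(t)}}(X_{i,k})-\mathbf h_i^\star\phi_{\mathbf B^\star}(X_{i,k})$.
\begin{itemize}
\item The signal part is bounded deterministically by Cauchy--Schwarz, $\|S_i^{(t)}\|^2\le 4\bar J^2qH^2\bar E_i^{(t)}$.
\item The pointwise bound $\bar E_i^{(t)}\le 2\bar\phi^2\|\mathbf h_i^{(t+1)}-\mathbf h_i^\star\|^2+2H^2\bar J^2\Delta_{\mathbf B}^{(t)}$ then controls it.
\item Only the label-noise part $N_i^{(t)}=\frac{2}{\tau}\sum_k J_{\mathbf B^{(t)}}^\top(X_{i,k})\mathbf h_i^{(t+1)\top}V_{i,k}$ is concentrated. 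Its cost is $\sigma^2(p+\log(|\mathcal H|/\delta))/\tau\le\sigma^2\mathfrak s_\tau$, since $\mathsf d_{\mathsf{head}}=p\vee rq$.
\end{itemize}
Within your own split, the equivalent repair is to bound the centered signal part trivially by a constant times $\sup_X\|\Delta(X)\|$, rather than concentrating it with parameter $H\bar\phi$. Either way $\bar G^{(t)}\lesssim\Delta_{\mathbf B}^{(t)}+\sigma^2\mathfrak s_\tau$. The term $p_\sigma\mathfrak s_{\mathcal H,\tau}$ then enters only in the curvature step, where the honest-averaged gradient genuinely concentrates over $\tau|\mathcal H|$ samples; your recursion already treats that sampling error correctly.

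With this repair, the rest is the paper's route:
\begin{itemize}
\item curvature $\mu_3$ after swapping the heads via $L_3$;
\item Young's inequality with $\gamma=\eta\mu_3/8$ against $\bar\delta^{(t)}+\bar\zeta^{(t)}$;
\item Lemma~\ref{lem:agg_error} for the $\kappa\bar G_T$ term, and Lemma~\ref{lem:momentum_deviation_hp} for $\bar\delta^{(t)}$;
\item absorption via $\kappa\le\underline\kappa$.
\end{itemize}
The paper performs the absorption at the level of the time-averaged $\bar G_T$, not per step against $\kappa\|\mathbf B^{(t)}-\mathbf B^\star\|^2$. This matters because $\|\bar\zeta^{(t)}\|^2$ is controlled by a geometric average of past $\bar G^{(s)}$, not by the current distance.

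Two further clarifications:
\begin{itemize}
\item $\bar Q_T$ enters only through the momentum-deviation bound, $\frac1T\sum_t\|\bar\delta^{(t)}\|^2\le\frac{\bar b}{1-\bar a}\bar Q_T+\cdots$ with $\bar b/(1-\bar a)\asymp1$. It does not come from applying Young's inequality to the curvature inner product, which would cancel the contraction.
\item No Freedman/Azuma step is needed. All stochastic-gradient noise is carried by $\bar\delta^{(t)}$ and $\bar\zeta^{(t)}$. The bounds $\frac{1-\beta}{1+\beta}\le 24\eta L_1$ and $(1-\beta)^2/(1-\bar a)=O(\eta)$ turn it into the $\eta(1+1/|\mathcal H|)\sigma_{\mathsf g}^2\log(T/\delta)$ term.
\end{itemize}
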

\noindent \textbf{Discussion:} The bound separates initialization, finite-sample, and stochastic-gradient errors. In particular, the heterogeneity of the honest representation gradients decreases with representation learning through $\Delta_{\mathbf{B}}^{(0)}/(\eta T)$ and $\bar Q_T$. That is, it is not controlled by a model parameter heterogeneity bias as in typical adversarially robust FL \citep{allouah2023fixing}. Thus, after fitting the personalized heads, honest clients agree on the direction used to learn the common representation up to errors that go away with more communication rounds $T$ or data samples $\tau$. The proof is in Appendix \ref{appendix:Nonlinear rep}.

Importantly, our analysis does not require a uniform gradient-dissimilarity condition, such as the $G$- or $(G,B)$-dissimilarity assumptions commonly used in heterogeneous FL \citep{li2020federated,karimireddy2020scaffold,allouah2023tight,gorbunov2023variance}. Instead, building on the analysis of the gradient heterogeneity dynamics in adversarially robust FL from \citep{toso2026gradient}, we directly control the trajectory of $\bar G_T$ in this more intricate setting of nonlinear representation learning.

\begin{theorem}[Convergence Bound]\label{nonlinear_convergence_mb}
Suppose that the conditions of Lemma \ref{theorem:gradient_heterogeneity_nonlinear_mb} hold. Then, for every $\delta\in(0,1)$, with probability at least $1-\delta$, it holds that
\begin{align}
\bar Q_T \lesssim \frac{\Delta_0}{\eta T}
+\kappa p_\sigma \, \mathfrak s_{\mathcal H,
\tau}+\kappa\sigma^2 \, \mathfrak s_\tau
+\eta\left(\kappa+\kappa^2+\frac{1+\kappa}{|\mathcal H|}\right)\sigma_{\mathsf{g}}^2\log(T/\delta).
\end{align}
\vspace{-0.1cm}
In addition, suppose that $\eta=\min\{\bar{\eta},1/\sqrt T\}$, then it holds that \vspace{-0.01cm}
\begin{align}
\bar Q_T \lesssim \frac{\Delta_0}{\sqrt{T}}
+\kappa p_\sigma \, \mathfrak s_{\mathcal H,
\tau}+\kappa\sigma^2 \, \mathfrak s_\tau
+\left(\kappa+\kappa^2+\frac{1+\kappa}{|\mathcal H|}\right)\frac{\sigma_{\mathsf{g}}^2\log(T/\delta)}{\sqrt{T}}.
\end{align}\vspace{-0.3cm}
\end{theorem}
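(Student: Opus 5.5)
We prove Theorem~\ref{nonlinear_convergence_mb} with a Lyapunov (potential-function) argument for robust momentum SGD in the spirit of \citet{allouah2023fixing,toso2026gradient}. Lemma~\ref{theorem:gradient_heterogeneity_nonlinear_mb} takes the place of the usual $G$-dissimilarity bound, and the smallness condition $\kappa\le\underline\kappa$ lets us absorb the $\bar Q_T$ term that the lemma feeds back.

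\textbf{Step 1: one-step descent.} Write $\mathbf{B}^{(t+1)}=\mathbf{B}^{(t)}-\eta R^{(t)}$ with $R^{(t)}=\mathsf F(m^{(t)}_{\mathbf{B},1},\dots,m^{(t)}_{\mathbf{B},n})$. Decompose
\begin{align*}
R^{(t)}=g^{(t)}+\xi^{(t)}+\delta^{(t)},
\end{align*}
where $g^{(t)}$ is the honest-average population gradient, $\xi^{(t)}=\bar m_{\mathcal H}^{(t)}-g^{(t)}$ is the momentum deviation, and $\delta^{(t)}=R^{(t)}-\bar m_{\mathcal H}^{(t)}$ is the aggregation error. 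Apply $L_1$-smoothness of $L_{\mathcal H}$ in $\mathbf{B}$, valid because the iterates stay in $\mathcal N(\mathbf{B}^\star,\rho)$. Then use that $\mathbf{h}^{(t+2)}$ minimizes $L_i(\cdot,\mathbf{B}^{(t+1)})$ on its batch, which makes the head update non-increasing up to the batch-resampling fluctuation. The fluctuation is controlled by the finite-sample terms $p_\sigma\mathfrak s_{\mathcal H,\tau}$ and $\sigma^2\mathfrak s_\tau$, using the burn-in $\tau\ge\tau_{\mathsf{burn\text{-}in}}$ and a union bound over $|\mathcal H|T$ events. The result is
\begin{align*}
L^{(t+1)}\le L^{(t)}-\tfrac{\eta}{2}\|g^{(t)}\|^2+\eta\|\xi^{(t)}\|^2+\eta\|\delta^{(t)}\|^2+\text{(sampling terms)}.
\end{align*}

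\textbf{Step 2: aggregation error and momentum.} By Definition~\ref{def:robust-agg} and Lemma~\ref{lem:agg_error}, $\|\delta^{(t)}\|^2$ is at most $\kappa$ times the honest momentum heterogeneity. Unrolling the momenta bounds this by $\kappa\,\bar G^{(t)}$ plus a stochastic part of order $\kappa(1-\beta)\sigma_{\mathsf g}^2$ (with high probability through sub-Gaussian concentration, Assumption~\ref{assump:subgaussian_gradient_noise}). For $\xi^{(t)}$ we use the standard recursion, with a high-probability version from Lemma~\ref{lem:momentum_deviation_hp}:
\begin{align*}
\|\xi^{(t)}\|^2\le \bar a\|\xi^{(t-1)}\|^2+\bar b\,\eta^{-1}\|\mathbf{B}^{(t)}-\mathbf{B}^{(t-1)}\|^2\cdot(\cdot)+(1-\beta)^2\tfrac{\sigma_{\mathsf g}^2}{|\mathcal H|}\log(T/\delta)+\dots
\end{align*}
Since $\|\mathbf{B}^{(t)}-\mathbf{B}^{(t-1)}\|^2=\eta^2\|R^{(t-1)}\|^2$ is again bounded by $\|g\|^2+\|\xi\|^2+\|\delta\|^2$, the choice $\beta^2=1-24\eta L_1$, $1-\bar a\asymp\eta L_1$ and $\bar b/(1-\bar a)\asymp1$ lets us form the potential
\begin{align*}
V^{(t)}=L^{(t)}+\tfrac{c}{L_1}\|\xi^{(t)}\|^2.
\end{align*}
The extra $\tfrac1{L_1}\|\nabla L_{\mathcal H}\|^2$ and $(1+\kappa)\Delta_{\mathbf{B}}^{(0)}$ pieces of $\Delta_0$ come from the initial momentum and the initial heterogeneity. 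Telescoping over $t$ and dividing by $\eta T$ gives
\begin{align*}
\bar Q_T\lesssim \frac{\Delta_0}{\eta T}+\kappa\,\bar G_T+\text{finite-sample}+\eta\Big(\kappa+\frac{1+\kappa}{|\mathcal H|}\Big)\sigma_{\mathsf g}^2\log(T/\delta).
\end{align*}

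\textbf{Step 3: closing the loop.} Substitute Lemma~\ref{theorem:gradient_heterogeneity_nonlinear_mb} for $\bar G_T$, intersecting the events so the failure probability stays at most $\delta$. The term $\kappa\bar Q_T$ is absorbed into the left-hand side because $\kappa\le\underline\kappa$ makes the hidden constant times $\kappa$ at most $1/2$; this is exactly the role of \eqref{eq:main-kappa-condition}. The term $\kappa\Delta_{\mathbf{B}}^{(0)}/(\eta T)$ is absorbed into $\Delta_0/(\eta T)$. The contribution $\kappa\cdot\eta(1+1/|\mathcal H|)\sigma_{\mathsf g}^2$ combines with the $\kappa^2$ term, generated by the $\kappa$-weighted momentum heterogeneity feeding back through $\xi$, to give the stated coefficient $\kappa+\kappa^2+(1+\kappa)/|\mathcal H|$. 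The second display follows by plugging in $\eta=\min\{\bar\eta,1/\sqrt T\}$: $\Delta_0/(\eta T)\le\Delta_0(1/\sqrt T+1/(\bar\eta T))$, and the $1/(\bar\eta T)$ part is dominated for $T\gtrsim\bar\eta^{-2}$ or absorbed into constants.

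The main obstacle is Step 2. The momentum-deviation recursion must be closed jointly with the aggregation error, while the head-dependent gradient is evaluated at changing $\mathbf{h}^{(t+1)}$. We handle the head drift through the $L_3$-Lipschitz assumption, and the condition $\mu_3\ge4L_3H\bar J\bar\phi/\mu_1$ keeps the resulting cross terms below the descent term. All of this must hold with high probability rather than in expectation, which is why the step sizes are taken from $\bar\eta$ and the noise terms carry the $\log(T/\delta)$ factors.
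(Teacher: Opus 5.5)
Your architecture matches the paper's: a Lyapunov function pairing the honest loss with $\frac{c}{L_1}$ times the squared momentum deviation (whose initial value, because $m^{(0)}_{\mathbf B,i}=\mathbf 0$, gives the $\frac{1}{L_1}\|\nabla_{\mathbf B}L_{\mathcal H}\|^2$ piece of $\Delta_0$), the $(f,\kappa)$-robustness bound of the aggregation error by $2\kappa\bar G_T$ plus a $\sigma_{\mathsf g}^2$ term, substitution of Lemma~\ref{theorem:gradient_heterogeneity_nonlinear_mb}, and absorption of the fed-back $\kappa\bar Q_T$ via $\kappa\le\underline\kappa$. The gap lies in the statistical terms you add on top of this.

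In Step~1 you insert per-round \emph{sampling terms} for batch resampling and head refitting. In your last paragraph you control head drift inside the momentum recursion through $L_3$. Neither kind of term shrinks with $\eta$:
\begin{itemize}
\item Redrawing the batch changes $L_{\mathcal H}$ at a fixed point by order $1/\sqrt{\tau|\mathcal H|}$, because it contains the noise average $\frac{1}{\tau|\mathcal H|}\sum_{i,k}\|V_{i,k}\|^2$. It is not of order $\mathfrak s_{\mathcal H,\tau}$.
\item The only available control on $\mathbf h_i^{(t+1)}-\mathbf h_i^{(t)}$ comes from the decomposition in Lemma~\ref{lem:G-head-recovery}. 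It has an $\eta$-independent noise piece of order $\frac{\bar\phi\sigma}{\mu_1}\sqrt{(rq+\log(1/\delta))/\tau}$, which does not cancel because consecutive heads are fit on independent batches. The momentum recursion then amplifies it by at least $1/(1-\bar a)\asymp 1/(\eta L_1)$.
\end{itemize}
With the worst-case per-round control you describe (a union bound over $|\mathcal H|T$ events), let $\epsilon$ denote the per-round statistical error. Telescoping and dividing by $\eta T$ leaves a term of order $\epsilon/\eta$ in $\bar Q_T$, which grows like $\sqrt T$ when $\eta=1/\sqrt T$. Even taken at face value, your Step~2 display carries a finite-sample term with no $\kappa$ prefactor, and Step~3 never removes it. The theorem has no statistical term at all when $\kappa=0$, so your argument does not deliver it. The condition $\mu_3\ge 4L_3H\bar J\bar\phi/\mu_1$ cannot help, since your descent term $\frac{\eta}{2}\|g^{(t)}\|^2$ involves no curvature constant.

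The paper's proof never generates these terms. It applies $L_1$-smoothness to $L_{\mathcal H}$ as a single (profiled) function of $\mathbf B$, identifying its gradient with the one in $\bar Q_T$. In Lemma~\ref{lem:momentum_deviation_hp} it bounds the drift between consecutive honest gradients by $\eta L_1\|\mathsf F^{(t-1)}\|$, with no separate head term. The descent-plus-momentum step therefore produces only:
\begin{itemize}
\item the two initialization terms;
\item $24\kappa\bar G_T$;
\item $\sigma_{\mathsf g}^2$ terms with factors $\frac{1-\beta}{1+\beta}$ and $\frac{(1-\beta)^2}{\eta}$, both $O(\eta)$ since $1-\beta^2=24\eta L_1$.
\end{itemize}
Every finite-sample term in the theorem then enters only through $24\kappa\bar G_T$ after substituting the heterogeneity bound, which is why the coefficients are $24\kappa p_9$, $24\kappa p_2$, $24\kappa p_{12}$. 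That is exactly where the $\kappa$ prefactors come from. The $\kappa^2$ arises from the $\Theta(\kappa)$ noise coefficient $p_{13}$ inside that bound. $L_3$ and the $\mu_3$ condition are used only inside the heterogeneity lemma, through Lemma~\ref{lem:G-empirical-curvature-learned-heads} in the representation-recovery recursion, not in the convergence argument.

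To prove the statement as given, drop the sampling and head-drift terms in this way, so that your Step~2 reads $\bar Q_T\lesssim\frac{\Delta_0}{\eta T}+\kappa\bar G_T+\eta\big(\kappa+\frac{1+\kappa}{|\mathcal H|}\big)\sigma_{\mathsf g}^2\log(T/\delta)$. Step~3 then goes through as you wrote it. If you want to keep track of resampling and refitting, your argument gives a different bound, one whose statistical terms do not carry the factor $\kappa$.
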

\noindent \textbf{Discussion:} In our ergodic convergence bound, the optimization error and stochastic-gradient term both decrease as $1/\sqrt{T}$. The statistical terms vanish with $\tau$ and are inflated by $\kappa$. Therefore, under \eqref{eq:main-kappa-condition}, adversarial clients affect the rate through aggregation robustness coefficient $\kappa$ that typically scales as $f/n$, but most importantly, it does not introduce an irreducible model parameter heterogeneity bias, therefore demonstrating the benefit of learning what is common and personalizing what is specific. We provide the proof in Appendix \ref{sec:convergence_nonlinear_rep}.\\

\noindent \textbf{Key takeaway:} Partial personalization is known to mitigate heterogeneity in federated optimization \citep{mishchenko2023partially}. We rigorously characterize this benefit in adversarially robust FL with nonlinear representation learning by establishing high-probability, non-asymptotic guarantees under arbitrary adversarial updates. By separating the client-specific heads from the common representation, we demonstrate that all remaining errors decrease with the number of rounds $T$, the number of local samples $\tau$, or the number of honest clients $|\mathcal H|$. We emphasize that to the best of our knowledge this is the first time such analysis is provided for adversarially robust FL. In contrast to single-model Byzantine-robust FL, our guarantees have no irreducible model parameter heterogeneity bias.

\begin{corollary}[Parameter Recovery Error~Bound]\label{nonlinear_parameter_recovery_mb}
Suppose the conditions of Theorem \ref{nonlinear_convergence_mb}. Then, for every
$\delta\in(0,1)$, with probability at least $1-2\delta$, it holds that \vspace{-0.1cm}
\begin{align}\label{eq:parameter_recovery_mb}
\bar E_T&\lesssim \frac{\Delta_0}{\sqrt T}+(1+\kappa+\kappa^2)\left(p_\sigma \, \mathfrak s_{\mathcal H,\tau}+\sigma^2 \, \mathfrak s_\tau\right) +\left(\kappa+\kappa^2+\frac{1+\kappa}{|\mathcal H|}\right)
\frac{\sigma_{\mathsf{g}}^2\log(T/\delta)}{\sqrt T}.
\end{align}\vspace{-0.3cm}
\end{corollary}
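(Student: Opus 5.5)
The plan is to convert the gradient bound of Theorem~\ref{nonlinear_convergence_mb} into a bound on $\bar E_T$ in three steps.

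\textbf{Step 1 (per-round decomposition).} For each round $t$ and honest client $i$, write the prediction error at sample $k$ as
\[
\mathbf{h}_i^{(t+1)}\phi_{\mathbf{B}^{(t)}}(X_{i,k})-\mathbf{h}_i^\star\phi_{\mathbf{B}^\star}(X_{i,k})
=\mathbf{h}_i^{(t+1)}\bigl(\phi_{\mathbf{B}^{(t)}}-\phi_{\mathbf{B}^\star}\bigr)(X_{i,k})+\bigl(\mathbf{h}_i^{(t+1)}-\mathbf{h}_i^\star\bigr)\phi_{\mathbf{B}^\star}(X_{i,k}).
\]
The first term has norm at most $H\bar J\|\mathbf{B}^{(t)}-\mathbf{B}^\star\|$, by the mean value theorem and the Jacobian bound. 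The second term is handled through the closed form of the least-squares head, $\mathbf{h}_i^{(t+1)}=\bigl(\tfrac1\tau\sum_k Y_{i,k}\phi_{\mathbf{B}^{(t)}}^\top\bigr)\widehat\Sigma_{i,\phi}(\mathbf{B}^{(t)})^{-1}$. Substituting the data model shows that $\mathbf{h}_i^{(t+1)}-\mathbf{h}_i^\star$ equals a projection of the misspecification $\mathbf{h}_i^\star(\phi_{\mathbf{B}^\star}-\phi_{\mathbf{B}^{(t)}})$ plus a noise term $\tfrac1\tau\sum_k V_{i,k}\phi^\top\widehat\Sigma^{-1}$. Because $\tau\ge\tau_{\mathsf{burn\text{-}in}}$, matrix Bernstein together with a union bound over $i\in\mathcal H$ and $t<T$ gives $\lambda_{\min}(\widehat\Sigma_{i,\phi})\ge\mu_1/2$. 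The misspecification part is then $\lesssim \|\mathbf{B}^{(t)}-\mathbf{B}^\star\|$. The noise part is $\lesssim\sigma^2\mathfrak s_\tau$, by a sub-Gaussian $\varepsilon$-net bound over the $rq$-dimensional head space. The covariance concentration and these noise bounds are the same events already used in Lemma~\ref{theorem:gradient_heterogeneity_nonlinear_mb}, so the union costs at most the extra $\delta$ in $1-2\delta$. Averaging over $k,i,t$ yields
\[
\bar E_T\lesssim \frac1T\sum_{t=0}^{T-1}\|\mathbf{B}^{(t)}-\mathbf{B}^\star\|^2+\sigma^2\mathfrak s_\tau,
\]
that is, $\Delta_{\mathbf{B},T+1}$ plus the initial term $\Delta_{\mathbf{B}}^{(0)}/T$.

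\textbf{Step 2 (gradient domination).} The task is to show $\|\mathbf{B}^{(t)}-\mathbf{B}^\star\|^2\lesssim \|\mathbb E_X\nabla_{\mathbf{B}}L_{\mathcal H}(\{\mathbf{h}_i^{(t+1)}\},\mathbf{B}^{(t)})\|^2+\text{(head-noise errors)}$. I would expand the population gradient around $\mathbf{B}^\star$ with the heads set to $\mathbf{h}_i^\star$. The residual $\mathbf{h}_i^\star(\phi_{\mathbf{B}}-\phi_{\mathbf{B}^\star})\approx \mathbf{h}_i^\star J_{\mathbf{B}^\star}(\mathbf{B}-\mathbf{B}^\star)$, so the linear part of the gradient is $2\,\bar{\mathcal M}(\mathbf{B}-\mathbf{B}^\star)$, where $\bar{\mathcal M}$ is the matrix defining $\mu_3$. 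Curvature then gives a lower bound of $2\mu_3\|\mathbf{B}-\mathbf{B}^\star\|$. The second-order remainder is at most $\bar p\|\mathbf{B}-\mathbf{B}^\star\|^2$ with the paper's constant $\bar p$, using $L_4$. Since $\rho\le\mu_3/\bar p$, this remainder is absorbed into the lower bound.

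Replacing $\mathbf{h}_i^\star$ by $\mathbf{h}_i^{(t+1)}$ changes the gradient by at most $\tfrac{L_3}{|\mathcal H|}\sum_i\|\mathbf{h}_i^{(t+1)}-\mathbf{h}_i^\star\|$, which Step~1 bounds. Its misspecification part is at most $(2L_3H\bar J\bar\phi/\mu_1)\|\mathbf{B}-\mathbf{B}^\star\|$, which the assumption $\mu_3\ge 4L_3H\bar J\bar\phi/\mu_1$ keeps below half the curvature. Its noise part contributes a term of order $\sigma^2\mathfrak s_\tau$ plus $p_\sigma\mathfrak s_{\mathcal H,\tau}$, the latter coming from the cross terms averaged over honest clients. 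The conclusion is
\[
\mu_3^2\|\mathbf{B}^{(t)}-\mathbf{B}^\star\|^2\lesssim \|\mathbb E_X\nabla_{\mathbf{B}}L_{\mathcal H}\|^2+p_\sigma\mathfrak s_{\mathcal H,\tau}+\sigma^2\mathfrak s_\tau .
\]

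\textbf{Step 3 (combine).} Averaging over $t$ gives
\[
\bar E_T\lesssim \bar Q_T+\frac{\Delta_{\mathbf{B}}^{(0)}}{T}+p_\sigma\mathfrak s_{\mathcal H,\tau}+\sigma^2\mathfrak s_\tau .
\]
Plugging in the second bound of Theorem~\ref{nonlinear_convergence_mb}, and using $\Delta_{\mathbf{B}}^{(0)}\le\Delta_0$, gives the stated rate. The statistical terms collect coefficients $1+\kappa$ from Step~3. I expect a $\kappa^2$ factor to appear because, if Step~2 is done instead through the trajectory inequality of the Lemma~\ref{theorem:gradient_heterogeneity_nonlinear_mb} proof, $\bar G_T$ enters multiplied by $\kappa$. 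Either way the coefficient is bounded by $1+\kappa+\kappa^2$.

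\textbf{Main obstacle.} Step~2 is the hardest: the curvature $\mu_3$ must dominate both the nonlinear remainder and the head-estimation perturbation. That perturbation is itself proportional to $\|\mathbf{B}-\mathbf{B}^\star\|$, so the locality radius $\rho$ and the condition $\mu_3\ge4L_3H\bar J\bar\phi/\mu_1$ must be used carefully. I would first verify the constant bookkeeping in the linear case, $\phi_{\mathbf{B}}(X)=\mathbf{B}X$.
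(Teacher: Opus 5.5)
Your proposal is correct, and from Step~2 on it follows a genuinely different route from the paper.

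\textbf{Where the routes agree and diverge.} Step~1 is the paper's Lemma~\ref{lem:G-Etilde-decomp}, averaged as in \eqref{eq:ET_intermediate}. After that, the paper never proves a pointwise error bound. It controls $\Delta_{\mathbf{B},T+1}$ through the one-step contraction of $\|\mathbf{B}^{(t+1)}-\mathbf{B}^\star\|^2$ under the aggregated update (Theorem~\ref{theorem:Delta_BT_nonlinear}, driven by the empirical-curvature Lemma~\ref{lem:G-empirical-curvature-learned-heads}). It splits $\mathsf F^{(t)}$ into honest gradient, momentum deviation and aggregation error. It then substitutes Lemma~\ref{lem:G-momentum-dev}, which brings in $\bar Q_T$ and $\kappa\bar G_T$, then Lemma~\ref{lemma:GT_nonlinear} for $\bar G_T$, and finally \eqref{eq:convergence_nonlinear} for $\bar Q_T$. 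Its $(1+\kappa+\kappa^2)$ factors come from these nested $\kappa\bar G_T$ substitutions. Its order-one $p_\sigma\mathfrak s_{\mathcal H,\tau}$ comes from the empirical-process term of the curvature lemma.

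Your Step~2 instead converts stationarity into recovery round by round. The expansion of Lemma~\ref{lem:G-rsc} in norm form, plus the $L_3$ head-perturbation bound, gives $\|\mathbf{B}^{(t)}-\mathbf{B}^\star\|\lesssim\mu_3^{-1}\bigl(\|\mathbb E_X\nabla_{\mathbf{B}}L_{\mathcal H}(\{\mathbf{h}_i^{(t+1)}\}_i,\mathbf{B}^{(t)})\|+\sigma\sqrt{\mathfrak s_\tau}\bigr)$. Averaging over $t=0,\dots,T-1$ then yields $\bar E_T\lesssim\bar Q_T+\sigma^2\mathfrak s_\tau$. This never touches momentum, aggregation, or the step-size conditions of Theorem~\ref{theorem:Delta_BT_nonlinear}.

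\textbf{What your route buys.} It is shorter and actually sharper: the statistical coefficient becomes $1+\kappa$ instead of $1+\kappa+\kappa^2$. Your $\Delta_{\mathbf{B}}^{(0)}/T$ term is superfluous, since $t=0$ is handled pointwise. Because $\bar Q_T$ is defined through the population gradient, Step~2 produces no empirical-process term. So your extra $p_\sigma\mathfrak s_{\mathcal H,\tau}$ is not needed there, and without it that term carries only the factor $\kappa$. It is, however, exactly what is required if $\bar Q_T$ is read as the empirical gradient, as Lemma~\ref{lem:momentum_deviation_hp} implicitly does. Keeping it makes your argument robust to that ambiguity.

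\textbf{What it costs.} Everything rests on the $\bar Q_T$ bound used as a black box. The trajectory recursion is not avoided globally: Theorem~\ref{theorem:convergence_nonlinear} depends on Lemma~\ref{lemma:GT_nonlinear}, which is proved through Theorem~\ref{theorem:Delta_BT_nonlinear}.

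\textbf{One bookkeeping fix.} The population gradient carries the factor $2$. Its second-order remainder is therefore bounded by $3\bar JH^2L_4\|\mathbf{B}-\mathbf{B}^\star\|^2\le 2\bar p\|\mathbf{B}-\mathbf{B}^\star\|^2$, not by $\bar p\|\mathbf{B}-\mathbf{B}^\star\|^2$. With $x=\|\mathbf{B}-\mathbf{B}^\star\|$, the lower bound $2\mu_3x-2\bar p x^2$ vanishes at $x=\mu_3/\bar p$. Absorption therefore needs radius $\mu_3/(2\bar p)$. The same factor-$2$ slip appears in the paper's Lemma~\ref{lem:G-rsc}, whose bound $2\mu_3x^2-2c_1x^3$ yields $\mu_3x^2$ only for $x\le\mu_3/(2c_1)$.
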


\noindent \textbf{Discussion:} With $\eta$ of order $1/\sqrt{T}$, the initialization and stochastic-gradient terms vanish at the standard ergodic rate \citep{allouah2023fixing}. When $\kappa=0$, corresponding to the absence of adversarial clients, the second term in the bound above reduces to $p_\sigma\mathfrak s_{\mathcal H,\tau}+\sigma^2\mathfrak s_\tau.$
This is consistent with the statistical recovery guarantees for nonlinear representation learning in multitask settings established in \citep{zhang2024guarantees}: $\mathfrak s_{\mathcal H,\tau}$ captures the complexity of collaboratively learning the common representation, whereas $\mathfrak s_\tau$ captures the client-specific complexity of fitting the personalized heads on that representation. Our setting is more intricate as it additionally accounts for arbitrary adversarial clients and stochastic updates aggregated. The proof is in Appendix \ref{sec:parameter_recovery_nonlinear}.

\section{Multiclass classification}
\label{section:multiclass}

We now demonstrate that the nonlinear regression guarantee can be transferred to classification. For this, we consider a $q$-class problem. Let $C_{i,k}\in[q]$ denote the categorical label associated with $X_{i,k}$, and let $e_c\in\mathbb R^q$ denote the $c$-th canonical basis vector. We set $Y_{i,k}=e_{C_{i,k}}$ and train the raw class scores using the multiclass squared loss given by\vspace{-0.1cm}
\begin{equation*}
\begin{aligned}
M_i(\mathbf{h},\mathbf{B}):=\frac{1}{\tau}\sum_{k=1}^{\tau}
\left\|e_{C_{i,k}}-\mathbf{h}\phi_{\mathbf{B}}(X_{i,k})\right\|^2.
\end{aligned}\\[-0.1cm]    
\end{equation*}
This square loss with vector-valued class encodings is a standard classification surrogate and has been studied both empirically and theoretically in \cite{hui2021evaluation,hu2022understanding,vigogna2022multiclass}. In particular, we define the conditional class-probability vector
\begin{align*}
\pi_i(X):=\mathbb E[Y_i\mid X]
=\big(\mathbb P(C_i=1\mid X),\ldots,\mathbb P(C_i=q\mid X)\big)^\top,
\end{align*}
and suppose that it is realizable by the shared representation, namely $\pi_i(X)=\mathbf{h}_i^\star\phi_{\mathbf{B}^\star}(X).$
For any score vector $z\in\mathbb R^q$, the conditional squared risk decomposes as follows:
\begin{align*}
\mathbb E[\|Y_i-z\|^2\mid X]
=\mathbb E[\|Y_i-\pi_i(X)\|^2\mid X]+\|z-\pi_i(X)\|^2,
\end{align*}
such that its population minimizer is precisely the class-probability vector. In addition, we note that $V_i:=Y_i-\pi_i(X)$ is conditionally mean-zero and bounded, and thus it satisfies Assumption \ref{assm:subg}. Thus, the multiclass squared-loss problem is a direct instance of our regression model analyzed above. 

For a vector $z\in\mathbb{R}^q$, $[z]_c$ denotes its $c$-th coordinate. To make each classifier single-valued, we use the same deterministic tie-breaking convention in both definitions, i.e., if several classes attain the maximum score, we select the class with the smallest index. The learned and Bayes classifiers are\vspace{-0.1cm}
\begin{align*}
\widehat C_i^{(t+1)}(X)
&:=\min\left(\argmax_{c\in[q]}\left[\mathbf{h}_i^{(t+1)}\phi_{\mathbf{B}^{(t)}}(X)\right]_c\right) \text{ and }C_i^\star(X):=\min\left(\argmax_{c\in[q]}\pi_{i,c}(X)\right),
\end{align*}
where $\pi_{i,c}(X):= \mathbb{P}(C_i = c \mid X).$ 

Let $\bar{\mathcal C}_T
:=\frac{1}{T|\mathcal H|\tau}
\sum_{t=0}^{T-1}\sum_{i\in\mathcal H}\sum_{k=1}^{\tau}
\mathbf 1\left\{\widehat C_i^{(t+1)}(X_{i,k})\neq C_i^\star(X_{i,k})\right\}$ be the error rate between the learned and optimal Bayes classifiers.

\begin{corollary}\label{cor:multiclass-classification} Suppose the conditions of Theorem  \ref{theorem:convergence_nonlinear} hold and suppose that there exists a margin parameter $\gamma>0$ such that, for every honest client $i \in \mathcal{H}$, \vspace{-0.1cm}
\begin{align}\label{eq:classification-margin}
\pi_{i,C_i^\star(X)}(X)-\max_{c\neq C_i^\star(X)}\pi_{i,c}(X)\geq\gamma
\end{align}\vspace{-0.1cm}
almost surely.  Then, with probability at least $1-2\delta$, it holds that
\begin{align}\label{eq:classification-recovery}
\bar{\mathcal C}_T\leq\frac{2 \bar E_T}{\gamma^2} \lesssim \frac{\Delta_0}{\gamma^2\sqrt T}
+\frac{(1+\kappa+\kappa^2)}{\gamma^2}\left(p_\sigma\mathfrak s_{\mathcal H,T}+\sigma^2\mathfrak s_T\right) +\left(\kappa+\kappa^2+\frac{1+\kappa}{|\mathcal H|}\right)
\frac{\sigma_{\mathsf{g}}^2\log(T/\delta)}{\gamma^2 \sqrt T}.
\end{align}\vspace{-0.4cm}
\end{corollary}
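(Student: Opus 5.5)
The plan is to reduce the classification statement to the prediction-error bound of Corollary~\ref{nonlinear_parameter_recovery_mb} through a pointwise margin argument. Fix an honest client $i$, a round $t$, and a sample $X=X_{i,k}$. Write $z:=\mathbf{h}_i^{(t+1)}\phi_{\mathbf{B}^{(t)}}(X)$ for the learned score vector and $\pi:=\pi_i(X)=\mathbf{h}_i^\star\phi_{\mathbf{B}^\star}(X)$, where the last equality is the realizability assumption. We will prove the pointwise inequality
\begin{align*}
\mathbf 1\{\widehat C_i^{(t+1)}(X)\neq C_i^\star(X)\}\leq \frac{2}{\gamma^2}\,\|z-\pi\|^2 .
\end{align*}
Averaging this over $k\in[\tau]$, $i\in\mathcal H$ and $t\in\{0,\ldots,T-1\}$ gives $\bar{\mathcal C}_T\leq 2\bar E_T/\gamma^2$ deterministically, because both sides are empirical averages over the same samples.

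\textbf{Pointwise inequality.} Suppose the two classifiers disagree at $X$. Let $c^\star=C_i^\star(X)$ and $\hat c=\widehat C_i^{(t+1)}(X)$, so $\hat c\neq c^\star$. Since $\hat c$ maximizes $z$, we have $[z]_{\hat c}\geq [z]_{c^\star}$. The margin condition \eqref{eq:classification-margin} gives $\pi_{c^\star}-\pi_{\hat c}\geq\gamma$. Combining the two,
\begin{align*}
\gamma\leq(\pi_{c^\star}-[z]_{c^\star})+([z]_{\hat c}-\pi_{\hat c}).
\end{align*}
By Cauchy--Schwarz on these two coordinates, $\gamma^2\leq 2\big((\pi_{c^\star}-[z]_{c^\star})^2+([z]_{\hat c}-\pi_{\hat c})^2\big)\leq 2\|z-\pi\|^2$. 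When the classifiers agree, the left side is zero and the inequality is trivial.

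The tie-breaking convention causes no difficulty. We only use that $\hat c$ is some maximizer of $z$. The margin $\gamma>0$ makes $c^\star$ the unique maximizer of $\pi$, so its tie-breaking rule never acts.

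\textbf{Second inequality.} We must check that multiclass squared loss fits the regression framework. The noise $V_i=Y_i-\pi_i(X)$ is conditionally mean-zero, as already noted. It satisfies $\|V_i\|\leq\|e_c\|+\|\pi\|\leq 2$, so Assumption~\ref{assm:subg} holds with an absolute constant $\sigma$ (Hoeffding's lemma). The remaining assumptions are imposed by hypothesis. Corollary~\ref{nonlinear_parameter_recovery_mb} then applies verbatim and bounds $\bar E_T$ with probability at least $1-2\delta$. Multiplying that bound by $2/\gamma^2$ yields \eqref{eq:classification-recovery}; the rates $\mathfrak s_{\mathcal H,T},\mathfrak s_T$ written there are the rates $\mathfrak s_{\mathcal H,\tau},\mathfrak s_\tau$. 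The first inequality holds surely, so it adds no failure probability.

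\textbf{Main obstacle.} The main point to verify is the legitimacy of the reduction, not the arithmetic. The one-hot labels $Y_{i,k}$ must be generated exactly as in the regression model, with the two independent batches and the realizability $\pi_i=\mathbf{h}_i^\star\phi_{\mathbf{B}^\star}$. Beyond that, the argument is a direct transfer of the earlier result.
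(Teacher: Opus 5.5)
Your proposal is correct and follows the paper's proof essentially step for step. You use the same pointwise margin argument: your Cauchy--Schwarz step is the paper's bound $|\widehat\pi_{i,a}-\pi_{i,a}|+|\widehat\pi_{i,b}-\pi_{i,b}|\le\sqrt{2}\|\widehat\pi_i-\pi_i\|$. You then average deterministically to get $\bar{\mathcal C}_T\le 2\bar E_T/\gamma^2$, check that the one-hot noise is bounded and hence sub-Gaussian (the paper gets $\sigma^2\le 1/2$ from the range $\max_c u_c-\min_c u_c\le\sqrt2$, you get an absolute constant from $\|V\|\le 2$), and plug in the prediction-error bound.
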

\noindent \textbf{Discussion.} The margin condition (i.e., inequality (\ref{eq:classification-margin})) ensures that the predicted class remains unchanged whenever the score-estimation error is sufficiently small relative to the separation between the most likely class and its closest competitor. Therefore, $\widehat C_i^{(t+1)}(X_{i,k})\neq C_i^\star(X_{i,k})$ can occur only if the learned class-probability scores incur a squared prediction error of order at least $\gamma^2$. Therefore, Corollary~\ref{cor:multiclass-classification} converts the parameter recovery error guarantee (\eqref{eq:parameter_recovery_mb})  into a classification guarantee: the excess error rate is bounded by the parameter recovery error, with the same dependence on the number of rounds $T$, local sample size $\tau$, number of honest clients $|\mathcal{H}|$, stochastic-gradient variance $\sigma_{\mathsf{g}}^2$ and label noise variance $\sigma^2$, up to $\gamma^{-2}$. The proof is in Appendix \ref{sec:classification-proof}.

\section{Experiments}

We now validate our theory on three heterogeneous federated datasets\footnote{Code to reproduce our results can be found at \url{https://github.com/LeoToso/Byz-robust-nonlinear-rep}.}: CIFAR-10 image classification \citep{krizhevsky2009learning}, FEMNIST character classification from the LEAF benchmark \citep{caldas2018leaf}, and School Exam Score regression \citep{zhou2011malsar,li2015multi}. CIFAR-10 is partitioned into $100$ clients, each with examples from two classes. FEMNIST keeps its natural writer-based partition. For the School Exam Score dataset each of the $139$ schools defines one regression client. We evaluate NNM-preprocessed Krum and coordinate-wise trimmed mean \citep{allouah2023fixing} under ALIE and Mimic attacks. Full details are provided in Appendix~\ref{sec:numerical-details}.\\

\noindent \textbf{Baseline.} We compare against the standard common-model formulation of Byzantine-robust FL \citep{karimireddy2020byzantine,allouah2023fixing,toso2026gradient}. The baseline robustly aggregates updates of one model shared by every client, whereas our approach aggregates only the common-representation updates and keeps the linear heads personalized.\\

\begin{table}[t]
\centering
\caption{Average local test accuracy (\%) on CIFAR-10 and FEMNIST with $50$ honest and $5$ Byzantine clients per round. Entries report mean $\pm$ standard deviation over three seeds.}
\label{tab:classification-results}
\small
\setlength{\tabcolsep}{3.5pt}
\begin{tabular}{lllcccc}
\toprule
& & & \multicolumn{2}{c}{NNM+Krum} & \multicolumn{2}{c}{NNM+TrMean} \\
\cmidrule(lr){4-5}\cmidrule(lr){6-7}
Dataset & Loss & Method & ALIE & Mimic & ALIE & Mimic \\
\midrule
CIFAR-10 & Cross-entropy & Baseline & $50.46\pm0.57$ & $46.39\pm1.76$ & $50.46\pm0.71$ & $47.17\pm1.58$ \\
\rowcolor{oursrow}
& Cross-entropy & Rep. learning & $\mathbf{88.79\pm0.42}$ & $\mathbf{88.50\pm0.59}$ & $\mathbf{88.76\pm0.44}$ & $\mathbf{88.54\pm0.61}$ \\
\rowcolor{oursrow}
& Multiclass & Rep. learning & $88.33\pm0.53$ & $87.95\pm0.51$ & $88.42\pm0.52$ & $87.89\pm0.60$ \\
\midrule
FEMNIST & Cross-entropy & Baseline & $72.95\pm1.95$ & $70.34\pm1.23$ & $73.09\pm2.03$ & $70.66\pm1.47$ \\
\rowcolor{oursrow}
& Cross-entropy & Rep. learning & $91.89\pm0.31$ & $92.43\pm0.15$ & $92.02\pm0.28$ & $92.51\pm0.07$ \\
\rowcolor{oursrow}
& Multiclass & Rep. learning & $\mathbf{93.56\pm0.24}$ & $\mathbf{93.60\pm0.07}$ & $\mathbf{93.57\pm0.22}$ & $\mathbf{93.67\pm0.13}$ \\
\bottomrule
\end{tabular}
\vspace{-0.3cm}
\end{table}

\noindent \textbf{Classification.} Table~\ref{tab:classification-results} reveals the fundamental limitation of adversarially robust FL when learning a common model parameter under heterogeneous honest clients. With cross-entropy, representation learning improves CIFAR-10 accuracy by $38.3$--$42.1$ percentage points and FEMNIST accuracy by $18.9$--$22.1$ points across all aggregator-attack pairs. The benefit persist for two distinct robust aggregators and attacks. This is precisely the result in Lemma~\ref{theorem:gradient_heterogeneity_nonlinear_mb}: after personalization, the honest-update heterogeneity is tied to representation recovery and statistical errors instead of a persistent model-heterogeneity bias.

The multiclass squared loss also attains $87.89$--$88.42\%$ on CIFAR-10 and $93.56$--$93.67\%$ on FEMNIST. On FEMNIST, it improves over cross-entropy by $1.16$--$1.65$ points. On CIFAR-10, the difference remains below $0.7$ points. These results directly complement Corollary~\ref{cor:multiclass-classification}: the regression guarantees can be translated to classification.

\begin{figure*}[t]
\centering
\includegraphics[width=\textwidth]{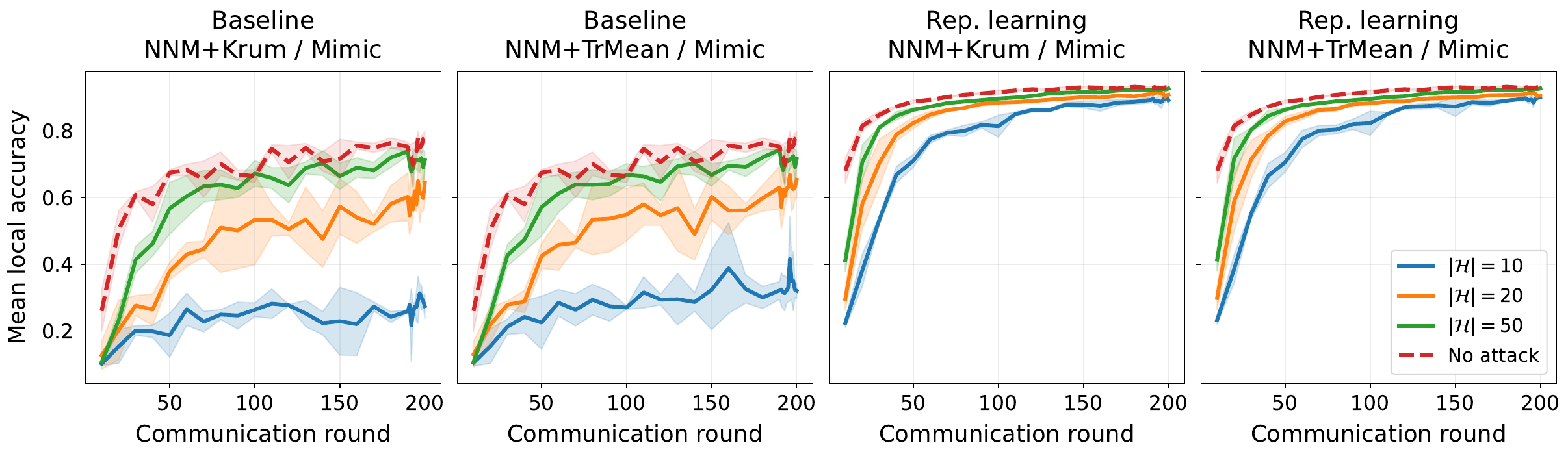}
\vspace{-0.6cm}
\caption{FEMNIST accuracy under the Mimic attack using cross-entropy loss and a fixed number of five Byzantine clients. Each curve reports the mean accuracy and standard deviation across three random seeds as the number of honest clients per round increases.}
\label{fig:femnist_cross_entropy}
\end{figure*}

Figure~\ref{fig:femnist_cross_entropy} shows the effect of increasing the number of honest clients $|\mathcal{H}|$ under the Mimic attack. For both aggregators, representation learning approaches the no-attack curve as $|\mathcal H|$ increases and remains near $90\%$ accuracy even with only ten honest updates. In contrast, the baseline is substantially below the no-attack curve and deteriorates sharply for small $|\mathcal H|$. This supports our results in Theorem~\ref{nonlinear_convergence_mb} and Corollary~\ref{nonlinear_parameter_recovery_mb}, whose client-averaged statistical and stochastic-gradient terms decrease with $|\mathcal H|$, while no irreducible model-heterogeneity term remains.\\

\begin{table}[t]
\centering
\caption{School Exam test MSE using the squared loss with $20$ honest and $5$ Byzantine clients per round. Results report mean $\pm$ standard deviation over three seeds. Lower is better.}
\label{tab:school_h20}
\small
\setlength{\tabcolsep}{4pt}
\begin{tabular}{lcccc}
\toprule
& \multicolumn{2}{c}{NNM+Krum}
& \multicolumn{2}{c}{NNM+TrMean} \\
\cmidrule(lr){2-3}\cmidrule(lr){4-5}
Method & ALIE & Mimic & ALIE & Mimic \\
\midrule
Baseline
& $0.7611\pm0.0452$
& $0.6532\pm0.0201$
& $0.7672\pm0.0465$
& $0.6531\pm0.0226$ \\
\rowcolor{oursrow}
Rep.\ learning
& $\mathbf{0.6324\pm0.0272}$
& $\mathbf{0.6300\pm0.0276}$
& $\mathbf{0.6316\pm0.0275}$
& $\mathbf{0.6301\pm0.0276}$ \\
\bottomrule
\end{tabular}

\vspace{0.12cm}

\begin{tabular}{cc}
\toprule
Pooled mean predictor & Per-school mean predictor \\
\midrule
$0.9913\pm0.0306$ & $0.8826\pm0.0263$ \\
\bottomrule
\end{tabular}
\vspace{-0.4cm}
\end{table}

\noindent \textbf{Regression.} On the School Exam Score dataset, representation learning achieves an MSE of approximately $0.63$ in every adversarial configuration, reducing the common-model baseline error by $3.5\%$--$17.7\%$ and outperforming both constant predictors (i.e., pooled mean and per-school mean). The pooled-mean predictor assigns every example the average training label across all schools, whereas the per-school-mean predictor assigns each example the average training label of its corresponding school. These results support Corollary~\ref{nonlinear_parameter_recovery_mb}, that is, robust aggregation can operate on a genuinely shared representation, while the personalized heads absorb school-specific heterogeneity, improving regression without incurring an irreducible model-parameter heterogeneity bias.

\vspace{-0.2cm}
\section{Conclusion and Future Work} \label{sec:conclusions}
\vspace{-0.2cm}

We established rigorous high-probability, non-asymptotic guarantees for nonlinear representation learning in heterogeneous federated systems with adversarial clients. By separating each client model into a personalized head and a common nonlinear representation, we demonstrated that the heterogeneity among honest representation gradients is governed by representation-recovery, initialization error, finite-sample, and stochastic gradient terms, without an irreducible model-heterogeneity bias. Our bounds explicitly reveal how recovery improves with the number of communication rounds $T$, local samples $\tau$, and honest clients $|\mathcal{H}|$. We further demonstrated that these guarantees extend to multiclass classification and validated our theoretical guarantees on CIFAR-10, FEMNIST, and School Exam Score datasets.

We leave for future work to determine whether the restriction on the robustness coefficient $\kappa$ (i.e., Eq. \eqref{eq:main-kappa-condition}) is fundamental. In particular, \cite{toso2026gradient}, for single-model Byzantine-robust linear and nonlinear regression demonstrated that a restriction on $\kappa$ can instead be replaced by a sufficiently large sample burn-in condition. Establishing whether a similar guarantee also holds for our setting would clarify the statistical limits of personalization under adversarial clients.

\section{Acknowledgments} 
Leonardo F. Toso is funded by the Center for AI and Responsible Financial Innovation (CAIRFI) Fellowship and the Columbia Presidential Fellowship. James Anderson is partially funded by NSF grants EECS 2144634 and CNS 2535097 and the Center of AI Technology (CAIT) in collaboration with Amazon.

\bibliographystyle{alpha}
\bibliography{references}

\newpage
\appendix

\newpage 
\appendix
\addcontentsline{toc}{section}{Appendix}
\part{Appendix}
\parttoc 
\newpage

\section{Appendix Roadmap}\label{app:roadmap}
The appendix is organized as follows. In Appendix \ref{app:related_work}, we position our results relative to FL and adversarially robust representation learning in the related work. In Appendix \ref{app:technical_preliminaries} we collect the concentration inequalities and supporting results used throughout the analysis. These results complement the model and assumptions stated in Sections \ref{sec:problem_formulation}--\ref{sec:assumptions}.

Appendix \ref{app:proof_roadmap} provides a roadmap to the nonlinear representation-learning proof. The detailed heterogeneity and local-curvature analysis is developed in Appendix \ref{appendix:Nonlinear rep}. In  Appendix~\ref{sec:convergence_nonlinear_rep}we then establish the ergodic convergence guarantee, Appendix~\ref{sec:parameter_recovery_nonlinear} converts this result into representation-parameter and prediction-error recovery bounds, and Appendix \ref{sec:classification-proof} proves the multiclass classification extension. We also provide additional details on the experiments implementation in Appendix \ref{sec:numerical-details}.

\section{Related Work}
\label{app:related_work}

We position our results relative to federated optimization and representation learning under adversarial clients.\\

\noindent \textbf{Federated optimization and personalization.} FedAvg and its variants reduce communication by performing multiple local updates between communication rounds~\citep{mcmahan2023communicationefficient,konecny2016federated}. The negative effect of model parameter heterogeneity in FedAvg has also motivated other works such as FedProx \citep{li2020federated} and SCAFFOLD \citep{karimireddy2020scaffold} to alleviate such bottleneck. However, in the setting of adversarially robust FL under heterogeneous honest clients, the model heterogeneity still persists. On the other hand, personalized FL instead allows for the model to vary across clients \citep{fallah2020personalized,dinh2020personalized,collins2022fedavg,mishchenko2023partially,zhang2024sample}. Our work also considers the representation learning setting, but differs by analyzing a nonlinear representation under adversarial clients.\\

\noindent \textbf{Byzantine-robust learning.} Byzantine-robust FL replaces averaging updates with a robust aggregation rule \citep{blanchard2017machine,yin2018byzantine,mhamdi2018hidden,acharya2022robust}. Recent work by \cite{karimireddy2020byzantine,farhadkhani2022byzantine,allouah2023fixing} characterize robust aggregation through a deterministic robustness coefficient $\kappa$ (typically in the order of the fraction of adversarial clients). We also leverage robust aggregators, but make the honest-update heterogeneity endogenous to the representation recovery. That is, the personalization absorbs the heterogeneity among clients, while robust aggregation handles the adversarial updates in the shared representation. In addition, building on the analysis of \citep{toso2026gradient}, we characterize the dynamics of the representation-gradient heterogeneity along the optimization trajectory rather than imposing a uniform gradient-dissimilarity bound.  We refer the reader to \citep{guerraoui2024robust} for a comprehensive study of distributed robust machine learning.\\

\noindent \textbf{Representation learning.} The statistical benefit of learning a common representation across related tasks is well established \citep{collins2021exploiting,zhang2024sample}. Linear models achieve improved sample complexity under task diversity \citep{tripuraneni2020theory,du2020few}, which motivates federated representation learning \citep{collins2021exploiting}. More recent results accommodate nonlinear representations with non-identical and dependent data \citep{zhang2024guarantees}, while common linear representations have also been leveraged in imitation learning and adaptive control \citep{zhang2022multitask,lee2024regret}. Most relevant to our work is \cite{mishchenko2023partially} that establish Byzantine-robust convergence for partially personalized objectives whose honest-client operators share a common representation, but do not provide finite-sample statistical guarantees for it. In contrast, we consider nonlinear representation learning and demonstrate explicitly that the honest-gradient heterogeneity is controlled by representation-recovery and finite-sample errors.

\section{Technical Preliminaries}\label{app:technical_preliminaries}
In this section we summarize the auxiliary results used in the nonlinear representation-learning proofs. The federated model, optimization approach, robust-aggregation condition, and standing assumptions are already stated in the main body of the paper.

\subsection{Matrix Concentration Inequalities}\label{app:matrix_tools}
\begin{lemma}[Weyl's inequality]
\label{lem:weyl}
Let $A,{B} \in \mathbb{R}^{s\times s}$ be symmetric matrices. Then, for all $j=1,\dots,s$,
\begin{align*}
\lambda_j(A+{B}) \in \left[\lambda_j(A) + \lambda_{\min}({B}),\; \lambda_j(A) + \lambda_{\max}({B})\right].
\end{align*}
In particular, for the smallest eigenvalue,
\begin{align*}
\lambda_{\min}(A+{B}) \geq \lambda_{\min}(A) + \lambda_{\min}({B}).
\end{align*}
\end{lemma}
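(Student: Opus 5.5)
We will prove Weyl's inequality through the Courant--Fischer min--max characterization of eigenvalues of symmetric matrices. Fix the ordering convention $\lambda_1(M)\geq\lambda_2(M)\geq\cdots\geq\lambda_s(M)$ for a symmetric $M\in\mathbb R^{s\times s}$; the argument is symmetric under the reverse ordering. Courant--Fischer gives
\begin{align*}
\lambda_j(M)=\max_{\substack{S\subseteq\mathbb R^s\\ \dim S=j}}\;\min_{\substack{x\in S\\ \|x\|=1}} x^\top M x .
\end{align*}
We will apply this formula to $M=A+{B}$ and compare it, subspace by subspace, with the same formula for $M=A$.

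The pointwise step is the Rayleigh-quotient bound $\lambda_{\min}({B})\leq x^\top {B}x\leq\lambda_{\max}({B})$ for every unit vector $x$, which follows from the spectral decomposition of ${B}$. Adding $x^\top A x$ gives, for every unit $x$,
\begin{align*}
x^\top A x+\lambda_{\min}({B})\;\leq\; x^\top(A+{B})x\;\leq\; x^\top A x+\lambda_{\max}({B}).
\end{align*}
The shifts $\lambda_{\min}({B})$ and $\lambda_{\max}({B})$ are constants independent of $x$ and $S$. Taking the minimum over unit $x\in S$ therefore preserves both inequalities for each fixed $j$-dimensional subspace $S$. Taking the maximum over such $S$ preserves them as well, since both operations are monotone and commute with adding a constant. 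Combining this with Courant--Fischer yields $\lambda_j(A)+\lambda_{\min}({B})\leq\lambda_j(A+{B})\leq\lambda_j(A)+\lambda_{\max}({B})$ for all $j$, which is the first claim.

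The ``in particular'' statement is the lower bound at $j=s$, where $\lambda_s=\lambda_{\min}$. It can also be seen directly: $\lambda_{\min}(A+{B})=\min_{\|x\|=1}x^\top(A+{B})x\geq\min_{\|x\|=1}x^\top Ax+\min_{\|x\|=1}x^\top{B}x$. There is no real obstacle here. The only point needing care is that the monotonicity argument is valid because the perturbation bounds are uniform in $x$. Also, Courant--Fischer itself is assumed as a standard fact; if needed, it can be proved by a dimension-counting intersection of $S$ with the span of the trailing $s-j+1$ eigenvectors.
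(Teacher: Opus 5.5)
Your proof is correct. The Rayleigh-quotient sandwich $x^\top A x+\lambda_{\min}(B)\le x^\top(A+B)x\le x^\top Ax+\lambda_{\max}(B)$ holds uniformly in $x$, so it survives both the inner minimum and the outer maximum in Courant--Fischer. You also apply the same ordering convention to $A$ and $A+B$, which is what the statement requires.

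The paper gives no proof of this lemma; it is quoted as a standard fact in the technical preliminaries, so your argument supplies the usual textbook derivation.

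The paper only uses the ``in particular'' part. This happens in Lemma~\ref{lem:G-cov-concentration}, where it writes $\widehat\Sigma_{i,\phi}(\mathbf{B})=\Sigma_{i,\phi}(\mathbf{B})+\bigl(\widehat\Sigma_{i,\phi}(\mathbf{B})-\Sigma_{i,\phi}(\mathbf{B})\bigr)$ and bounds the smallest eigenvalue of the perturbation below by minus its spectral norm. For that use, your direct one-line bound $\min_{\|x\|=1}x^\top(A+B)x\ge\min_{\|x\|=1}x^\top Ax+\min_{\|x\|=1}x^\top Bx$ is already enough, without Courant--Fischer.
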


\begin{theorem}[Matrix Bernstein for self-adjoint sums (Theorem 1.4 of {\citep{tropp2012user}})]
\label{thm:matrix_bernstein}
Let $A_1,\dots,A_\tau\in\mathbb{R}^{s\times s}$ be independent, mean-zero, self-adjoint matrices. Assume $\|A_k\|_2 \leq M$ almost surely, and define
$$
v := \left\|\sum_{k=1}^{\tau}\mathbb{E}[A_k^2]\right\|_2 .
$$
Then for all $t\geq 0$, it holds that
\begin{align*}
\mathbb{P} \left(\left\|\sum_{k=1}^{\tau} A_k\right\|_2 \geq t\right)
\leq 2d \exp \left( -\frac{t^2}{2v + \frac{2}{3}Mt}
\right).    
\end{align*}
\end{theorem}

\subsection{Auxiliary Inequalities}
\label{app:aux-ineq}

We list here standard inequalities that will be invoked throughout our analysis.

\begin{lemma}[Triangle inequality and Jensen]
\label{lem:tri-jensen}
For any random vector $Z$,
$\|\mathbb{E}[Z]\|\le \mathbb{E}\|Z\|$ (Jensen).
For any vectors $a,b$, $\|a-b\|\leq \|a\|+\|b\|$ (triangle inequality).
\end{lemma}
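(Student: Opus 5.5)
For the triangle inequality, I will first derive the Cauchy--Schwarz inequality $|\langle a,b\rangle|\le\|a\|\,\|b\|$ from the nonnegativity of $\|a-sb\|^2$ as a quadratic in $s\in\mathbb{R}$. Expanding gives
\begin{align*}
\|a-b\|^2=\|a\|^2-2\langle a,b\rangle+\|b\|^2\le\|a\|^2+2\|a\|\,\|b\|+\|b\|^2=(\|a\|+\|b\|)^2 .
\end{align*}
Taking square roots, which is valid since both sides are nonnegative, yields $\|a-b\|\le\|a\|+\|b\|$.

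For the Jensen statement $\|\mathbb{E}[Z]\|\le\mathbb{E}\|Z\|$, I assume $\mathbb{E}\|Z\|<\infty$; otherwise the claim is trivial. Integrability then ensures that $\mathbb{E}[Z]$ is well defined. If $\mathbb{E}[Z]=0$ there is nothing to prove. Otherwise I set $u:=\mathbb{E}[Z]/\|\mathbb{E}[Z]\|$, a unit vector. By linearity of expectation, Cauchy--Schwarz and monotonicity,
\begin{align*}
\|\mathbb{E}[Z]\|=\langle u,\mathbb{E}[Z]\rangle=\mathbb{E}\langle u,Z\rangle\le\mathbb{E}\big[\|u\|\,\|Z\|\big]=\mathbb{E}\|Z\| .
\end{align*}
This argument also covers matrices with the Frobenius norm via $\langle\cdot,\cdot\rangle_F$. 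For the spectral norm I will use an alternative route: write $\|M\|=\sup_{\|x\|=\|y\|=1}x^\top My$, which is a supremum of linear functionals, and apply the same linearity step to each functional.

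No step here is a real obstacle. The only points needing care are the integrability caveat and the degenerate case $\mathbb{E}[Z]=0$, and both are handled above.
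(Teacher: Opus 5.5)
Your proof is correct and is the standard argument; the paper itself gives no proof of this lemma and simply invokes it as a standard fact (Jensen's inequality for the convex map $z\mapsto\|z\|$, together with the triangle inequality). Your choice $u=\mathbb{E}[Z]/\|\mathbb{E}[Z]\|$ is exactly a supporting hyperplane of the Euclidean norm at $\mathbb{E}[Z]$, so you are running the usual proof of Jensen specialized to the norm, and your Cauchy--Schwarz route to $\|a-b\|\le\|a\|+\|b\|$ and your handling of the integrability and $\mathbb{E}[Z]=0$ cases are fine.
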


\begin{lemma}[Cauchy--Schwarz]
\label{lem:cs}
For any nonnegative random variable $U$, $\mathbb{E}[U]\leq \sqrt{\mathbb{E}[U^2]}$.
\end{lemma}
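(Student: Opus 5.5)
The statement to prove is Lemma~\ref{lem:cs}: for a nonnegative random variable $U$, $\mathbb{E}[U]\leq\sqrt{\mathbb{E}[U^2]}$. My plan is a one-line application of the Cauchy--Schwarz inequality in $L^2$, written as $U=U\cdot 1$.

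First I dispose of the degenerate case. If $\mathbb{E}[U^2]=\infty$, the right-hand side is $+\infty$ and there is nothing to prove. So assume $\mathbb{E}[U^2]<\infty$. Then $U\in L^2$, and since the constant $1$ is also in $L^2$ on a probability space, the product $U\cdot 1$ is integrable.

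Second, I apply Cauchy--Schwarz for the inner product $\langle X,Y\rangle=\mathbb{E}[XY]$:
\begin{align*}
\mathbb{E}[U]=\mathbb{E}[U\cdot 1]\leq \sqrt{\mathbb{E}[U^2]}\,\sqrt{\mathbb{E}[1^2]}=\sqrt{\mathbb{E}[U^2]}.
\end{align*}
Nonnegativity of $U$ guarantees $\mathbb{E}[U]\geq 0$, so the claimed inequality is exactly this bound.

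As a cross-check, the same result follows from the variance identity $0\leq \operatorname{Var}(U)=\mathbb{E}[U^2]-(\mathbb{E}[U])^2$. Taking nonnegative square roots gives the claim, which is legitimate because $\mathbb{E}[U]\geq 0$. One could equally invoke Jensen's inequality for the convex function $x\mapsto x^2$, consistent with Lemma~\ref{lem:tri-jensen}.

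There is no real obstacle here. The only care needed is handling the case where the second moment is infinite and noting where nonnegativity is used, namely so that $\sqrt{(\mathbb{E}U)^2}=\mathbb{E}U$ rather than $|\mathbb{E}U|$.
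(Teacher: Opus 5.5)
Your proof is correct: the paper states this lemma without proof as a standard fact, and writing $U=U\cdot 1$ and applying Cauchy--Schwarz in $L^2$, with the case $\mathbb{E}[U^2]=\infty$ handled separately, is exactly the argument the lemma's name refers to. One small remark: once $U\in L^2$, the chain $\mathbb{E}[U]\le|\mathbb{E}[U]|\le\sqrt{\mathbb{E}[U^2]}$ holds regardless of sign, so nonnegativity is really needed only to ensure $\mathbb{E}[U]$ is well defined in $[0,\infty]$ when the second moment is infinite, not for the square-root step.
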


\begin{lemma}[Young's inequality]
\label{lem:young}
For any $a,b \in \mathbb{R}$ and any $\gamma>0$,
\begin{align*}
2ab \leq \gamma a^2 + \frac{1}{\gamma} b^2.
\end{align*}
Equivalently, for any $u,v \in \mathbb{R}$, we have 
\begin{align*}
(u+v)^2
\leq
(1+\gamma)u^2 + \left(1+\frac{1}{\gamma}\right)v^2.
\end{align*}
\end{lemma}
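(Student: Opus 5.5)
The plan is a one-line completion of squares, followed by an expansion of the square to get the equivalent form. Since $\gamma>0$, the scalar $\sqrt{\gamma}$ is well defined. For any $a,b\in\mathbb{R}$ we have
\begin{align*}
0\leq \left(\sqrt{\gamma}\,a-\frac{1}{\sqrt{\gamma}}\,b\right)^2=\gamma a^2-2ab+\frac{1}{\gamma}b^2 .
\end{align*}
Rearranging gives $2ab\leq \gamma a^2+\frac{1}{\gamma}b^2$, which is the first claim. Equality holds exactly when $\gamma a=b$.

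For the second form, I will expand $(u+v)^2=u^2+2uv+v^2$ and apply the first inequality to the cross term with $a=u$ and $b=v$. This gives $2uv\leq \gamma u^2+\frac{1}{\gamma}v^2$. Substituting,
\begin{align*}
(u+v)^2\leq u^2+\gamma u^2+v^2+\frac{1}{\gamma}v^2=(1+\gamma)u^2+\left(1+\frac{1}{\gamma}\right)v^2 .
\end{align*}

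The two statements are in fact equivalent, which justifies the word ``equivalently'' in the lemma. Subtracting $u^2+v^2$ from both sides of the second inequality recovers the first with $(a,b)=(u,v)$. Both directions are purely algebraic, so neither step poses a real obstacle; the only point to check is that $\gamma>0$, which is what makes $\sqrt{\gamma}$ and $1/\gamma$ legitimate.
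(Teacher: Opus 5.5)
Your proof is correct: completing the square via $0\le(\sqrt{\gamma}\,a-b/\sqrt{\gamma})^2$ gives the first inequality, and applying it to the cross term of $(u+v)^2=u^2+2uv+v^2$ gives the second, with equivalence following by subtracting $u^2+v^2$. The paper lists this lemma as a standard inequality without proof, and your argument is the standard one it implicitly relies on.
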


\begin{lemma}[Averaging inequality]
\label{lem:avg-ineq}
For any matrices $A_1,\dots,A_\tau$ of the same dimensions,
\begin{align*}
\left\|\frac{1}{\tau}\sum_{k=1}^{\tau}A_k\right\|^2 \leq \frac{1}{\tau}\sum_{k=1}^{\tau}\|A_k\|^2.    
\end{align*}
\end{lemma}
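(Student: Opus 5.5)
The averaging inequality follows from the triangle inequality for the norm combined with convexity of the square, and it holds for any norm, in particular the spectral norm $\|\cdot\|$ used throughout and the Frobenius norm.

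First, by the triangle inequality and absolute homogeneity of the norm,
\begin{align*}
\left\|\frac{1}{\tau}\sum_{k=1}^{\tau}A_k\right\| \leq \frac{1}{\tau}\sum_{k=1}^{\tau}\|A_k\|.
\end{align*}
Both sides are nonnegative, so squaring preserves the inequality.

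Second, I bound the square of the scalar average $\frac{1}{\tau}\sum_k a_k$, with $a_k:=\|A_k\|\geq 0$. I apply the Cauchy--Schwarz inequality in $\mathbb R^\tau$ to the vectors $(a_1,\ldots,a_\tau)$ and $(1,\ldots,1)$:
\begin{align*}
\Big(\sum_{k=1}^{\tau}a_k\Big)^2\leq \tau\sum_{k=1}^{\tau}a_k^2 .
\end{align*}
Equivalently, this is Jensen's inequality for the convex map $x\mapsto x^2$ under the uniform distribution on $[\tau]$, which is the form of Lemma~\ref{lem:cs}. Dividing by $\tau^2$ gives
\begin{align*}
\Big(\frac{1}{\tau}\sum_{k=1}^{\tau} a_k\Big)^2\leq \frac{1}{\tau}\sum_{k=1}^{\tau} a_k^2 .
\end{align*}

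Chaining the two steps yields the claim. There is no real obstacle here. The only point to check is that the squaring in the first step is legitimate, and it is, because both sides are nonnegative.
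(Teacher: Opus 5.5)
Your proof is correct and is essentially the paper's argument: the paper simply cites Jensen's inequality (convexity of $A\mapsto\|A\|^2$ under the uniform distribution on $[\tau]$). You have unpacked that into the triangle inequality for the norm followed by scalar Jensen/Cauchy--Schwarz for $x\mapsto x^2$, which also correctly shows the bound holds for any norm, including the spectral norm in the statement.
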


\begin{proof}
The proof of this lemma follows from Jensen's inequality.
\end{proof}

\begin{lemma}
\label{lem:variance_decomposition}
Let $a_1, \dots, a_n \in \mathbb{R}^s$ and define their average
$$
\bar a := \frac{1}{n}\sum_{i=1}^n a_i.
$$
Then, it holds that
\begin{align*}
\frac{1}{n}\sum_{i=1}^n \|a_i - \bar a\|^2 = \frac{1}{n}\sum_{i=1}^n \|a_i\|^2 - \|\bar a\|^2.
\end{align*}
\end{lemma}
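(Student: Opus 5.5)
The identity in Lemma~\ref{lem:variance_decomposition} is a purely algebraic fact, so the plan is simply to expand the squared norm and use the definition of $\bar a$. We do not need any of the probabilistic or concentration tools collected above, and no assumption on the vectors $a_i$ beyond finiteness of $n$.

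\textbf{Step 1: expand each term.} For each $i\in[n]$, expand via the inner product:
\begin{align*}
\|a_i-\bar a\|^2=\|a_i\|^2-2\langle a_i,\bar a\rangle+\|\bar a\|^2 .
\end{align*}

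\textbf{Step 2: average over $i$.} Averaging the expansion gives
\begin{align*}
\frac{1}{n}\sum_{i=1}^n\|a_i-\bar a\|^2=\frac{1}{n}\sum_{i=1}^n\|a_i\|^2-2\Big\langle \frac{1}{n}\sum_{i=1}^n a_i,\bar a\Big\rangle+\|\bar a\|^2 .
\end{align*}
In the cross term we pull the sum inside the inner product, which is allowed by bilinearity, since $\bar a$ does not depend on $i$. By the definition of $\bar a$, the cross term then equals $-2\|\bar a\|^2$.

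\textbf{Step 3: combine.} Adding $-2\|\bar a\|^2$ and $\|\bar a\|^2$ leaves $-\|\bar a\|^2$, which yields the claimed identity.

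There is no genuine obstacle here. The only point requiring care is Step 2, where the cross term must be collapsed using the definition of $\bar a$ rather than bounded, so that the result is an equality and not merely an inequality.

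In the later heterogeneity analysis this identity will be applied with $a_i=\nabla_{\mathbf{B}}L_i(\mathbf{h}_i^{(t+1)},\mathbf{B}^{(t)})$ to rewrite $\bar G^{(t)}$ as the average squared honest gradient norm minus $\|\nabla_{\mathbf{B}}L_{\mathcal H}\|^2$.
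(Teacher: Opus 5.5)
Your proposal is correct: expanding the square and collapsing the cross term to $-2\|\bar a\|^2$ via the definition of $\bar a$ gives exactly the identity. The paper states this lemma as a standard auxiliary fact without proof, and your expansion is the argument it implicitly relies on. Your remark about the later use also matches the paper, which applies the identity to $\bar G^{(t)}$ and drops the $-\|\nabla_{\mathbf{B}}L_{\mathcal H}\|^2$ term to get an upper bound.
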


\subsection{Supporting Concentration Inequalities}

\begin{lemma}[Average of sub-Gaussian random matrices]
\label{lem:matrix_subg_mean}
Let $M_1,\dots,M_m \in \mathbb{R}^{u \times s}$ be independent, mean-zero random matrices. Assume they are sub-Gaussian in the sense that
\begin{align*}
\|M_j\|_{\psi_2} := \sup_{U \in \mathbb{S}_F^{u \times s}}
\|\langle U,M_j\rangle\|_{\psi_2} \leq K, \text{ for } j=1,\dots,m,
\end{align*}
where $\mathbb{S}_F^{u \times s} :=
\left\{U \in \mathbb{R}^{u \times s}:\|U\|_F = 1\right\},$
and $\langle U,M\rangle := \mathrm{tr}(U^\top M)$ is the Frobenius inner product.

Then there exists a constant $C_1>0$ such that, for every $\delta \in (0,1)$, with probability at least $1-\delta$, it holds that
\begin{align*}
\left\|\frac{1}{m}\sum_{j=1}^m M_j\right\|_F\leq C_1 K \sqrt{\frac{us+\log(1/\delta)}{m}}.
\end{align*}
\end{lemma}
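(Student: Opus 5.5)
The statement to prove is the last one stated: Lemma~\ref{lem:matrix_subg_mean}, a standard concentration bound for the average of independent sub-Gaussian random matrices. I would prove it by vectorization, a one-dimensional sub-Gaussian tail bound for each fixed direction, and an $\varepsilon$-net argument over the Frobenius unit sphere.

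\textbf{Step 1 (vectorize).} Identify $\mathbb R^{u\times s}$ with $\mathbb R^{D}$, where $D=us$. The Frobenius inner product then becomes the Euclidean one, so $\|\cdot\|_F$ is the Euclidean norm and $\mathbb S_F^{u\times s}$ is the sphere $\mathbb S^{D-1}$. Write $S:=\frac1m\sum_j M_j$.

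\textbf{Step 2 (fixed direction).} Fix $U\in\mathbb S^{D-1}$. The scalars $\langle U,M_j\rangle$ are independent and mean zero, with $\|\langle U,M_j\rangle\|_{\psi_2}\le K$.
\begin{itemize}
\item By the general Hoeffding inequality for sums of independent sub-Gaussians (Vershynin, Prop.~2.6.1), $\|\sum_j\langle U,M_j\rangle\|_{\psi_2}^2\le C\sum_j K^2 = CmK^2$.
\item Hence $\|\langle U,S\rangle\|_{\psi_2}\le C'K/\sqrt m$.
\item This gives the tail bound $\mathbb P(|\langle U,S\rangle|\ge t)\le 2\exp(-c m t^2/K^2)$.
\end{itemize}

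\textbf{Step 3 (net).} Let $\mathcal N$ be a $1/2$-net of $\mathbb S^{D-1}$ with $|\mathcal N|\le 5^{D}$.
\begin{itemize}
\item For any vector $w$, pick $U^\star:=w/\|w\|$ and a net point $U\in\mathcal N$ with $\|U-U^\star\|\le\frac12$. Then $\langle U,w\rangle = \|w\| - \langle U^\star-U,w\rangle \ge \|w\|-\tfrac12\|w\|$, so $\|w\|\le 2\max_{U\in\mathcal N}\langle U,w\rangle$.
\item Apply this with $w=S$ and take a union bound over $\mathcal N$ using Step 2:
\[
\mathbb P\big(\|S\|_F\ge 2t\big)\le 2\cdot 5^{D}\exp(-cmt^2/K^2).
\]
\item Choose $t = C''K\sqrt{(D\log 5+\log(2/\delta))/(cm)}$ so that the right-hand side is at most $\delta$.
\item Absorb the constants into $C_1$ to obtain $\|S\|_F\le C_1K\sqrt{(us+\log(1/\delta))/m}$ with probability at least $1-\delta$.
\end{itemize}

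\textbf{Main obstacle.} The only delicate point is Step 2, namely that $\psi_2$-norms of independent, mean-zero variables add in quadrature. This is standard and can be cited from the text the paper already uses for its sub-Gaussian definitions, \cite{vershynin2018high}. The rest is bookkeeping of absolute constants, in particular turning $\log(2/\delta)$ into $\log(1/\delta)$ by enlarging $C_1$. That step uses $D\ge1$, which gives $\log 2\lesssim D$.
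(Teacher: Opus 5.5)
Your proposal is correct and is essentially the paper's proof. Both arguments fix a direction $U$ and apply Hoeffding's inequality to the independent mean-zero scalars $\langle U, M_j\rangle$, pass to $\|\cdot\|_F$ through a $1/2$-net of the Frobenius unit sphere (so that $\|\bar M\|_F \le 2\max_{U\in\mathcal N}\langle U,\bar M\rangle$), take a union bound, and absorb $\log 2$ and the net-size constant into $C_1$ using $us\ge 1$. The differences are cosmetic: the paper keeps a general $\varepsilon$ with cardinality bound $(3/\varepsilon)^{us}$ before setting $\varepsilon=1/2$, whereas you use $5^{us}$ directly and obtain the per-direction tail through the quadrature bound on $\psi_2$-norms.
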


\begin{proof}
We begin by setting
\begin{align*}
\bar M := \frac{1}{m}\sum_{j=1}^m M_j.
\end{align*}
Fix $\varepsilon \in (0,1)$ and let $\mathcal{N} \subset \mathbb{S}_F^{u\times s}$ be an $\varepsilon$-net of the Frobenius unit sphere. Since $\mathbb{R}^{u\times s}$ is isometric to $\mathbb{R}^{us}$ under the Frobenius norm, by Corollary 4.2.11 in \citep{vershynin2018high}, one can choose $\mathcal{N}$ such that
\begin{align*}
|\mathcal{N}| \leq \left(\frac{3}{\varepsilon}\right)^{pq}.
\end{align*}

Therefore, by using a standard net argument, we can write 
\begin{align*}
\|\bar M\|_F =\sup_{U \in \mathbb{S}_F^{u\times s}} \langle U,\bar M\rangle
\leq\frac{1}{1-\varepsilon} \max_{U\in\mathcal{N}} \langle U,\bar M\rangle.
\end{align*}

Now let us fix $U \in \mathcal{N}$. The scalar random variables then
\begin{align*}
Z_j(U) := \langle U,M_j\rangle
\end{align*}
are independent, mean-zero, and sub-Gaussian. Moreover,
\begin{align*}
\|Z_j(U)\|_{\psi_2} \leq \|M_j\|_{\psi_2} \leq K.
\end{align*}

By applying Hoeffding's inequality for sub-Gaussian random variables \citep[Theorem 2.2.1]{vershynin2018high} with weights $a_j = 1/m$, we obtain
\begin{align*}
\mathbb{P}\left( \left| \frac{1}{m}\sum_{j=1}^m Z_j(U) \right| \geq t \right) \leq 2\exp\left( -c\frac{m t^2}{K^2}\right), \text{ for } t\geq 0,
\end{align*}
for a constant $c>0$.

Then, by taking a union bound over $U \in \mathcal{N}$, we obtain
\begin{align*}
\mathbb{P}\left( \max_{U\in\mathcal{N}} |\langle U,\bar M\rangle| \geq t
\right) \leq 2|\mathcal{N}| \exp\left( -c\frac{m t^2}{K^2} \right).
\end{align*}

We proceed, by choosing $t$ so that the right-hand side is at most $\delta$, namely
\begin{align*}
c\frac{m t^2}{K^2} \geq\log\frac{2}{\delta} + pq\log\frac{3}{\varepsilon}.
\end{align*}
Thus, with probability at least $1-\delta$, we have that
\begin{align*}
\max_{U\in\mathcal{N}} |\langle U,\bar M\rangle|
\leq \frac{K}{\sqrt{c}} \sqrt{ \frac{\log(2/\delta)+pq\log(3/\varepsilon)}{m}},
\end{align*}
Then, by taking $\varepsilon = 1/2$ and absorbing constants into a constant $C_1$ yields
\begin{align*}
\|\bar M\|_F \leq C_1 K \sqrt{\frac{us+\log(1/\delta)}{m}},
\end{align*}
which completes the proof.
\end{proof}

\subsection{Supporting Results for Representation Momentum}\label{subsec:auxiliary_momentum}

We denote by $\{\mathcal{F}_t\}_{t\geq 0}$ the natural filtration generated by the algorithm up to iteration $t$, defined as
\begin{align*}
\mathcal{F}_t:= \sigma\left( \mathbf{B}^{(0)},\ldots,\mathbf{B}^{(t)},
m_i^{(0)},\ldots,m^{(t-1)}_{i}: i=1,\ldots,n \right).\end{align*}
By convention, $\mathcal{F}_0 = \sigma(\mathbf{B}^{(0)})$. We denote by $\mathbb{E}_t[\cdot] := \mathbb{E}[\cdot \mid \mathcal{F}_t]$ the conditional expectation given $\mathcal{F}_t$, and by $\mathbb{E}[\cdot]$ the total expectation. Thus,
\begin{align*}
\mathbb{E}[Z] = \mathbb{E}\big[\mathbb{E}[Z\mid \mathcal{F}_t]\big].
\end{align*}

\begin{lemma}[Conditional Markov inequality]
\label{lem:markov}
Let $Z$ be a non-negative random variable and let $\mathcal{F}$ be a sigma-algebra. Then, for any $a>0$,
\begin{align*}
\mathbb{P}(Z \ge a \mid \mathcal{F})
\le
\frac{\mathbb{E}[Z \mid \mathcal{F}]}{a}.
\end{align*}
\end{lemma}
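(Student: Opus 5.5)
The plan is to reduce the claim to the defining property of conditional expectation, applied to an indicator domination argument, exactly as in the unconditional Markov inequality.

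First, I would use the pointwise inequality
\begin{align*}
a\,\mathbf 1\{Z\geq a\}\leq Z,
\end{align*}
valid everywhere because $Z\geq 0$ and $a>0$. On the event $\{Z\geq a\}$ the left-hand side equals $a\leq Z$. Off that event it equals $0\leq Z$.

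Next, I would take conditional expectation given $\mathcal F$ on both sides. Monotonicity of conditional expectation for nonnegative random variables gives, almost surely,
\begin{align*}
a\,\mathbb E[\mathbf 1\{Z\geq a\}\mid\mathcal F]\leq\mathbb E[Z\mid\mathcal F].
\end{align*}
This holds in the extended sense, where the right side may be $+\infty$. I would then identify $\mathbb E[\mathbf 1\{Z\geq a\}\mid\mathcal F]$ with $\mathbb P(Z\geq a\mid\mathcal F)$ by definition. Dividing by $a>0$ yields the claimed bound almost surely.

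The only point requiring care is that monotonicity of conditional expectation holds only almost surely. It follows from the fact that $\mathbb E[W\mid\mathcal F]\geq 0$ a.s. for $W\geq 0$. To see this, integrate over the $\mathcal F$-measurable set $\{\mathbb E[W\mid\mathcal F]<0\}$: the integral must be nonnegative, so that set has probability zero. No integrability of $Z$ is needed, since conditional expectation of nonnegative variables is well defined. I expect no real obstacle here.
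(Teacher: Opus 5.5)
Your proposal is correct and follows the paper's proof: the pointwise bound $a\mathbf 1_{\{Z\ge a\}}\le Z$, then conditional expectation given $\mathcal F$, then identifying $\mathbb E[\mathbf 1_{\{Z\ge a\}}\mid\mathcal F]$ with $\mathbb P(Z\ge a\mid\mathcal F)$, then dividing by $a$. Your remarks that the inequality holds only almost surely and that $Z$ need not be integrable add rigor the paper leaves implicit.
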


\begin{proof}
Since $Z \ge 0$, we have
\begin{align*}
Z \ge a \mathbf{1}_{\{Z \ge a\}}.
\end{align*}
Taking conditional expectation with respect to $\mathcal{F}$ on both sides yields
\begin{align*}
\mathbb{E}[Z \mid \mathcal{F}]
\geq a \mathbb{E}[\mathbf{1}_{\{Z \ge a\}} \mid \mathcal{F}]
= a \mathbb{P}(Z \ge a \mid \mathcal{F}),
\end{align*}
and by dividing both sides by $a>0$ we conclude the proof.
\end{proof}

\begin{lemma}[Control of stochastic gradient]
\label{lem:noise_hp}
Suppose Assumption \ref{assump:subgaussian_gradient_noise} holds. Then there exists a constant $C_2>0$ such that, for every $t\ge 0$ and every $\delta \in (0,1)$, conditioned on $\mathcal F_t$, with probability at least $1-\delta$, the following bounds hold simultaneously:
\begin{align*}
\frac{1}{|\mathcal H|}\sum_{i\in\mathcal H}
\|\widehat{\nabla}_{\mathbf{B}}L_i^{(t)}-\nabla_{\mathbf{B}} L_i(\mathbf{h}_i^{(t+1)},\mathbf{B}^{(t)})\|^2
&\leq
C_2 \sigma_{\mathsf{g}}^2 \log\left(\frac{2}{\delta}\right), \\
\|\widehat{\nabla}_{\mathbf{B}}L_{\mathcal H}^{(t)}-\nabla_{\mathbf{B}} L_{\mathcal H}(\{\mathbf{h}_i^{(t+1)}\}_i,\mathbf{B}^{(t)})\|^2
&\leq
C_2 \frac{\sigma_{\mathsf{g}}^2}{|\mathcal H|}\log\left(\frac{2}{\delta}\right).
\end{align*}
\end{lemma}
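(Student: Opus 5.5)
The plan is to apply the standard sub-Gaussian concentration tools clientwise and on the honest average, using the conditional independence in Assumption~\ref{assump:subgaussian_gradient_noise}. Throughout, we condition on $\mathcal F_t$ and write $\nu_i:=\nu_i^{(t)}\in\mathbb R^p$ for the honest noise vectors. These are conditionally independent, mean-zero, and satisfy $\|\nu_i\|_{\psi_2\mid\mathcal F_t}\le\sigma_{\mathsf g}$. We prove each displayed bound with failure probability $\delta/2$ and take a union bound, so both hold simultaneously with probability $1-\delta$. The constant $C_2$ is allowed to absorb a dimension factor: with $\|\cdot\|_{\psi_2}$ defined as a supremum over directions, one has $\|\nu_i\|^2\lesssim\sigma_{\mathsf g}^2(p+\log(1/\delta))$. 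I would therefore read $\sigma_{\mathsf g}^2$ as the norm-level sub-Gaussian proxy, i.e. $\|\|\nu_i\|\|_{\psi_2}\lesssim\sigma_{\mathsf g}$, which is consistent with how the paper uses the lemma. If instead we keep the directional reading, the same argument goes through with an extra $p$ factor absorbed into $C_2$.

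\textbf{Second bound (honest average).} The honest average is $\bar\nu:=\frac{1}{|\mathcal H|}\sum_{i\in\mathcal H}\nu_i$, which is exactly $\widehat\nabla_{\mathbf B}L^{(t)}_{\mathcal H}-\nabla_{\mathbf B}L_{\mathcal H}$. By independence, the sum of independent mean-zero sub-Gaussian vectors satisfies $\|\sum_i\nu_i\|_{\psi_2}^2\lesssim\sum_i\|\nu_i\|_{\psi_2}^2$ (Vershynin, Prop.~2.6.1, applied directionally). Hence $\|\bar\nu\|_{\psi_2}\lesssim\sigma_{\mathsf g}/\sqrt{|\mathcal H|}$. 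This is Lemma~\ref{lem:matrix_subg_mean} with $m=|\mathcal H|$ and $K=\sigma_{\mathsf g}$. Applying the norm tail bound gives $\|\bar\nu\|^2\le C\frac{\sigma_{\mathsf g}^2}{|\mathcal H|}\log(4/\delta)$ with probability $1-\delta/2$. We then absorb $\log(4/\delta)\le 2\log(2/\delta)$ into the constant.

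\textbf{First bound (average of squared norms).} Set $Z_i:=\|\nu_i\|^2$. Each $Z_i$ is sub-exponential with $\|Z_i\|_{\psi_1}=\|\|\nu_i\|\|_{\psi_2}^2\lesssim\sigma_{\mathsf g}^2$, and the $Z_i$ are conditionally independent. Write $\frac1{|\mathcal H|}\sum_i Z_i=\frac1{|\mathcal H|}\sum_i\mathbb E_tZ_i+\frac1{|\mathcal H|}\sum_i(Z_i-\mathbb E_tZ_i)$. The mean term is at most $C\sigma_{\mathsf g}^2$. For the centered term, Bernstein's inequality for sub-exponential variables gives a deviation of at most $C\sigma_{\mathsf g}^2\bigl(\sqrt{\log(4/\delta)/|\mathcal H|}+\log(4/\delta)/|\mathcal H|\bigr)\le C\sigma_{\mathsf g}^2\log(4/\delta)$ with probability $1-\delta/2$. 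A cruder alternative avoids Bernstein: use the moment generating function of $Z_i/\sigma_{\mathsf g}^2$ with Markov (Lemma~\ref{lem:markov}) applied to $\exp(c\sum_iZ_i/(|\mathcal H|\sigma_{\mathsf g}^2))$, which Jensen/Hölder bound by $2$. This gives the $\log(2/\delta)$ form directly.

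\textbf{Main obstacle.} The only delicate point is the dimension issue in the first paragraph. Passing from the directional $\psi_2$ control in Assumption~\ref{assump:subgaussian_gradient_noise} to control of the Euclidean norm requires either a net argument, as in the proof of Lemma~\ref{lem:matrix_subg_mean}, which adds $p$, or interpreting $\sigma_{\mathsf g}$ as a norm-level proxy. I would state this explicitly and fold $p$ into $C_2$. The remaining steps are routine.
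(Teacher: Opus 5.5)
Your proposal is correct and is essentially the paper's argument. The paper proves the first bound exactly by your ``cruder alternative'': Jensen on the exponential of the average, then the conditional Markov inequality with $u=\sigma_{\mathsf g}^2(1+\log(2/\delta))$. It then cites a sub-Gaussian vector concentration result for $\bar\nu^{(t)}$ and takes a union bound. You are also right that the paper silently uses the norm-level condition $\mathbb E_t\exp(\|\nu_i^{(t)}\|^2/\sigma_{\mathsf g}^2)\le e$, which does not follow from the directional $\psi_2$ bound in the assumption without a factor $p$.

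One caveat concerns your second bound under the norm-level reading you prefer. Proposition~2.6.1 applied directionally, combined with the net argument of Lemma~\ref{lem:matrix_subg_mean}, only yields $\|\bar\nu\|^2\lesssim\sigma_{\mathsf g}^2(p+\log(1/\delta))/|\mathcal H|$. The dimension-free tail therefore needs a genuinely Hilbert-space inequality for sums of independent norm-sub-Gaussian vectors (e.g.\ a Pinelis-type bound), a step the paper also leaves to an unspecified citation.
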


\begin{proof}
Fix $t\ge 0$, and define 
\begin{align*}
\nu_i^{(t)} := \widehat{\nabla}_{\mathbf{B}}L_i^{(t)}-\nabla_{\mathbf{B}} L_i(\mathbf{h}_i^{(t+1)},\mathbf{B}^{(t)}),\; i\in\mathcal H,
\end{align*}
and
\begin{align*}
\bar \nu^{(t)} := \frac{1}{|\mathcal H|}\sum_{i\in\mathcal H}\nu_i^{(t)}
= \widehat{\nabla}_{\mathbf{B}}L_{\mathcal H}^{(t)}-\nabla_{\mathbf{B}} L_{\mathcal H}(\{\mathbf{h}_i^{(t+1)}\}_i,\mathbf{B}^{(t)}).
\end{align*}

We note that by Assumption \ref{assump:subgaussian_gradient_noise}, conditioned on $\mathcal F_t$, the vectors $\{\nu_i^{(t)}\}_{i\in\mathcal H}$ are independent, mean-zero, and satisfy
\begin{align*}
\mathbb E_t\left[\exp\left(\frac{\|\nu_i^{(t)}\|^2}{\sigma_{\mathsf{g}}^2}\right)\right]\leq \exp(1)
 \text{ for all } i\in\mathcal H.
\end{align*}

Let us then proceed to control of the average squared noise. Define
\begin{align*}
S_t := \frac{1}{|\mathcal H|}\sum_{i\in\mathcal H}\|\nu_i^{(t)}\|^2.
\end{align*}
Since the exponential function is convex, Jensen's inequality yields
\begin{align*}
\exp\left(\frac{S_t}{\sigma_{\mathsf{g}}^2}\right)
=\exp\left(
\frac{1}{|\mathcal H|}\sum_{i\in\mathcal H}\frac{\|\nu_i^{(t)}\|^2}{\sigma_{\mathsf{g}}^2}
\right)\leq \frac{1}{|\mathcal H|}\sum_{i\in\mathcal H}
\exp\left(\frac{\|\nu_i^{(t)}\|^2}{\sigma_{\mathsf{g}}^2}\right).
\end{align*}
Taking conditional expectation and using the assumption gives
\begin{align*}
\mathbb E_t\left[\exp\left(\frac{S_t}{\sigma_{\mathsf{g}}^2}\right)\right]
\le
\frac{1}{|\mathcal H|}\sum_{i\in\mathcal H}
\mathbb E_t\left[\exp\left(\frac{\|\nu_i^{(t)}\|^2}{\sigma_{\mathsf{g}}^2}\right)\right]
\leq \exp(1).
\end{align*}

Hence, by conditional Markov's inequality (Lemma \ref{lem:markov}), for any $u>0$,
\begin{align*}
\mathbb P_t\left(S_t\ge u\right) &=
\mathbb P_t\left( \exp\left(\frac{S_t}{\sigma_{\mathsf{g}}^2}\right)
\geq \exp\left(\frac{u}{\sigma_{\mathsf{g}}^2}\right)
\right) \\
&\leq \exp\left(-\frac{u}{\sigma_{\mathsf{g}}^2}\right)
\mathbb E_t\left[\exp\left(\frac{S_t}{\sigma_{\mathsf{g}}^2}\right)\right]
\leq \exp\left(1-\frac{u}{\sigma_{\mathsf{g}}^2}\right).
\end{align*}

Therefore, by choosing
\begin{align*}
u = \sigma_{\mathsf{g}}^2\left(1+\log\left(\frac{2}{\delta}\right)\right)
\end{align*}
we obtain
\begin{align*}
\mathbb P_t\left(
S_t \ge \sigma_{\mathsf{g}}^2\left(1+\log\left(\frac{2}{\delta}\right)\right)
\right)
\le \frac{\delta}{2}.
\end{align*}
Then, as  $1+\log(2/\delta)\leq 2\log(2/\delta)$ for $\delta\in(0,1)$, we conclude that with conditional probability at least $1-\delta/2$, we have
\begin{align*}
S_t = \frac{1}{|\mathcal{H}|}\sum_{i\in\mathcal H}\|\nu_i^{(t)}\|^2
\leq 2\sigma_{\mathsf{g}}^2\log\left(\frac{2}{\delta}\right).
\end{align*}

Finally, we consider controlling the honest average noise. For this, we have
\begin{align*}
\bar \nu^{(t)} = \frac{1}{|\mathcal{H}|}\sum_{i\in\mathcal H}\nu_i^{(t)}.
\end{align*}

As the vectors $\{\nu_i^{(t)}\}_{i\in\mathcal H}$ are conditionally independent, mean-zero, and conditionally sub-Gaussian, concentration for sums of independent sub-Gaussian random vectors \cite{vershynin2018high} implies that there exists a constant $C_3>0$ such that, conditioned on $\mathcal F_t$, for every $\delta\in(0,1)$,
\begin{align*}
\mathbb P_t\left( \|\bar \nu^{(t)}\|^2 \geq
C_3 \frac{\sigma_{\mathsf{g}}^2}{|\mathcal{H}|}\log\left(\frac{2}{\delta}\right) \right)
\leq \frac{\delta}{2}.
\end{align*}
As we have
\begin{align*}
\bar \nu^{(t)} =
\widehat{\nabla}_{\mathbf{B}}L_{\mathcal H}^{(t)}-\nabla_{\mathbf{B}} L_{\mathcal H}(\{\mathbf{h}_i^{(t+1)}\}_i,\mathbf{B}^{(t)}),
\end{align*}
we obtain that with conditional probability at least $1-\delta/2$,
\begin{align*}
\|\widehat{\nabla}_{\mathbf{B}}L_{\mathcal H}^{(t)}-\nabla_{\mathbf{B}} L_{\mathcal H}(\{\mathbf{h}_i^{(t+1)}\}_i,\mathbf{B}^{(t)})\|^2
\leq C_3 \frac{\sigma_{\mathsf{g}}^2}{|\mathcal{H}|}\log\left(\frac{2}{\delta}\right).
\end{align*}

By a union bound, both events hold simultaneously with conditional probability at least
\begin{align*}
1-\frac{\delta}{2}-\frac{\delta}{2}=1-\delta.
\end{align*}
Therefore, with conditional probability at least $1-\delta$, we have that
\begin{align*}
\frac{1}{|\mathcal{H}|}\sum_{i\in\mathcal H}\|\xi_i^{(t)}\|^2
&\leq
2\sigma_{\mathsf{g}}^2\log\left(\frac{2}{\delta}\right), \\
\|\bar \xi^{(t)}\|^2
&\leq
C_3 \frac{\sigma_{\mathsf{g}}^2}{|\mathcal{H}|}\log\left(\frac{2}{\delta}\right).
\end{align*}
Recalling the definitions of $\nu_i^{(t)}$ and $\bar \nu^{(t)}$, the result follows by taking
\begin{align*}
C_2:=\max\{2,C_3\},
\end{align*}
completes the proof.
\end{proof}

We proceed, by letting $\bar{m}^{(t)}:= \frac{1}{|\mathcal{H}|}\sum_{i \in \mathcal{H}}m_{\mathbf{B},i}^{(t)}$ be the average of the honest client momenta.

\begin{lemma}
\label{lem:conditional_disagreement}
Suppose Assumption \ref{assump:subgaussian_gradient_noise} holds. Define
\begin{align*}
D^{(t)} &:= \frac{1}{|\mathcal H|}\sum_{i\in\mathcal H}\|m_{\mathbf{B},i}^{(t)}-\bar m_{\mathbf{B}}^{(t)}\|^2, \; D_T := \frac{1}{T}\sum_{t=0}^{T-1}D^{(t)}, 
\end{align*}
and $\bar G^{(t)}:= \frac{1}{|\mathcal{H}|}\sum_{i \in \mathcal{H}} \|\nabla_{\mathbf{B}} L_i(\mathbf{h}_i^{(t+1)},\mathbf{B}^{(t)}) - \nabla_{\mathbf{B}} L_{\mathcal{H}}(\{\mathbf{h}_i^{(t+1)}\}_i,\mathbf{B}^{(t)})\|^2$. Given $\delta \in (0,1)$, then it holds, with probability $1-\delta$, that
\begin{align*}
D_T &\leq  2C_2 \frac{1-\beta}{1+\beta}\,
\sigma_{\mathsf{g}}^2 \log(T/\delta)\left(1+\frac{1}{|\mathcal H|}\right)
+2\bar G_T.
\end{align*} 
\end{lemma}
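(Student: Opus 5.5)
We first unroll the momentum recursion. Write $g_i^{(t)}:=\nabla_{\mathbf{B}} L_i(\mathbf{h}_i^{(t+1)},\mathbf{B}^{(t)})$ and $\widehat g_i^{(t)}:=g_i^{(t)}+\nu_i^{(t)}$. Since $m_{\mathbf{B},i}^{(0)}=\mathbf{0}$ (with the convention $m_{\mathbf{B},i}^{(-1)}=\mathbf{0}$), we have $m_{\mathbf{B},i}^{(t)}=(1-\beta)\sum_{s\le t}\beta^{t-s}\widehat g_i^{(s)}$. Averaging over honest clients and subtracting gives
\begin{align*}
m_{\mathbf{B},i}^{(t)}-\bar m^{(t)}=(1-\beta)\sum_{s\le t}\beta^{t-s}\big[(g_i^{(s)}-\bar g^{(s)})+(\nu_i^{(s)}-\bar\nu^{(s)})\big].
\end{align*}
By $\|a+b\|^2\le 2\|a\|^2+2\|b\|^2$, $D^{(t)}$ splits into a gradient-heterogeneity part and a noise part, which we treat separately. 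This splitting is the source of the factor $2$ in front of both terms of the claimed bound.

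For the heterogeneity part, apply Jensen to the weights $(1-\beta)\beta^{t-s}$, whose total mass is at most $1$. This gives $\|(1-\beta)\sum_s\beta^{t-s}a_s\|^2\le(1-\beta)\sum_s\beta^{t-s}\|a_s\|^2$. Averaging over $i$ yields a bound of $(1-\beta)\sum_s\beta^{t-s}\bar G^{(s)}$. Summing over $t=0,\dots,T-1$ and exchanging sums, each $\bar G^{(s)}$ picks up total weight at most $(1-\beta)\sum_{k\ge0}\beta^k=1$. Hence the time-averaged contribution is at most $\bar G_T$, and after the factor $2$ it becomes $2\bar G_T$.

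For the noise part, Jensen would only give $\sigma_{\mathsf g}^2$ with no gain. To obtain the factor $\frac{1-\beta}{1+\beta}$ we instead exploit the martingale-difference structure: conditioned on $\mathcal F_s$, $\nu_i^{(s)}-\bar\nu^{(s)}$ is mean-zero and independent of the past noises. Cross terms therefore vanish in conditional expectation, and the second moment of the weighted sum is $(1-\beta)^2\sum_s\beta^{2(t-s)}\mathbb E\|\nu_i^{(s)}-\bar\nu^{(s)}\|^2\le\frac{(1-\beta)^2}{1-\beta^2}\cdot(\cdot)=\frac{1-\beta}{1+\beta}(\cdot)$. The per-round quantity $\frac1{|\mathcal H|}\sum_i\|\nu_i^{(s)}-\bar\nu^{(s)}\|^2\le\frac{2}{|\mathcal H|}\sum_i\|\nu_i^{(s)}\|^2+2\|\bar\nu^{(s)}\|^2$ is controlled by Lemma~\ref{lem:noise_hp}, giving $\lesssim C_2\sigma_{\mathsf g}^2\log(\cdot)(1+1/|\mathcal H|)$.

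The main obstacle is converting this expectation argument into a high-probability statement for the weighted sum of dependent sub-Gaussian vectors. The plan is to note that, for fixed $t$, $\sum_s\beta^{t-s}(\nu_i^{(s)}-\bar\nu^{(s)})$ is a martingale with conditionally sub-Gaussian increments of $\psi_2$-norm $\lesssim\beta^{t-s}\sigma_{\mathsf g}$. A vector Azuma/Freedman-type sub-Gaussian martingale bound then shows that its squared norm is $\lesssim\frac{1}{1-\beta^2}\sigma_{\mathsf g}^2\log(1/\delta')$ with probability $1-\delta'$; for the averaged term $\bar\nu$ the same bound carries a $1/|\mathcal H|$ factor. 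Applying the bound per round with $\delta'=\delta/T$ and taking a union bound over $t\in\{0,\dots,T-1\}$ produces the $\log(T/\delta)$ factor. Absorbing numeric constants into $C_2$ (redefined if needed) and averaging over $t$ gives
\begin{align*}
D_T\le 2C_2\tfrac{1-\beta}{1+\beta}\sigma_{\mathsf g}^2\log(T/\delta)\big(1+\tfrac{1}{|\mathcal H|}\big)+2\bar G_T .
\end{align*}
If the martingale bound proves delicate, the fallback is Jensen on the weighted sum together with Lemma~\ref{lem:noise_hp} and a union bound over rounds. This loses the $(1-\beta)/(1+\beta)$ gain, replacing it by $1$.
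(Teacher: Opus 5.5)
Your proposal follows essentially the same route as the paper. You unroll the momentum, split $D^{(t)}$ by Young's inequality into a noise part and a gradient-heterogeneity part, and bound the heterogeneity part by Jensen with weights $(1-\beta)\beta^{t-s}$ followed by exchanging the sums over $t$ and $s$. You bound the noise part by a sub-Gaussian estimate on the geometrically weighted noise sum, using $(1-\beta)^2\sum_s\beta^{2(t-s)}\le\frac{1-\beta}{1+\beta}$, and then take a union bound over the $T$ rounds. Your explicit use of a vector martingale concentration inequality for the noise term is, if anything, more careful than the paper, which simply appeals to independence and the single-round Lemma~\ref{lem:noise_hp}.
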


\begin{proof}
For simplicity, throughout this proof we write
$\widehat{\nabla}_{\mathbf{B}}L_i^{(s)}$
for
$\widehat{\nabla}_{\mathbf{B}}L_i^{(s)}(\mathbf{h}_i^{(s+1)},\mathbf{B}^{(s)})$
and let
$\widehat{\nabla}_{\mathbf{B}}L_{\mathcal H}^{(s)}:=\frac{1}{|\mathcal{H}|}\sum_{i\in\mathcal H}\widehat{\nabla}_{\mathbf{B}}L_i^{(s)}$.
We first recall that the momentum admits the following unrolled expression:
\begin{align*}
m_{\mathbf{B},i}^{(t)} = (1-\beta)\sum_{s=1}^t \beta^{t-s} \widehat{\nabla}_{\mathbf{B}}L_i^{(s)}, \;
\bar m_{\mathbf{B}}^{(t)} = (1-\beta)\sum_{s=1}^t \beta^{t-s} \widehat{\nabla}_{\mathbf{B}}L_{\mathcal H}^{(s)}.
\end{align*}
Hence, for each $i\in\mathcal H$, we have
\begin{align*}
m_{\mathbf{B},i}^{(t)} - \bar m_{\mathbf{B}}^{(t)}
= (1-\beta)\sum_{s=1}^t \beta^{t-s}\big(\widehat{\nabla}_{\mathbf{B}}L_i^{(s)} - \widehat{\nabla}_{\mathbf{B}}L_{\mathcal H}^{(s)}\big).
\end{align*}

By adding and subtracting gradients, we obtain
\begin{align*}
\widehat{\nabla}_{\mathbf{B}}L_i^{(s)} - \widehat{\nabla}_{\mathbf{B}}L_{\mathcal H}^{(s)}
&= \big(\widehat{\nabla}_{\mathbf{B}}L_i^{(s)} - \nabla_{\mathbf{B}} L_i(\mathbf{h}_i^{(s+1)},\mathbf{B}^{(s)})\big)
+\big(\nabla_{\mathbf{B}} L_{\mathcal H}(\{\mathbf{h}_i^{(s+1)}\}_i,\mathbf{B}^{(s)}) - \widehat{\nabla}_{\mathbf{B}}L_{\mathcal H}^{(s)}\big) \\
&+ \big(\nabla_{\mathbf{B}} L_i(\mathbf{h}_i^{(s+1)},\mathbf{B}^{(s)}) - \nabla_{\mathbf{B}} L_{\mathcal H}(\{\mathbf{h}_i^{(s+1)}\}_i,\mathbf{B}^{(s)})\big).
\end{align*}

Thus, we have 
\begin{align*}
m_{\mathbf{B},i}^{(t)} - \bar m_{\mathbf{B}}^{(t)}
&=(1-\beta)\sum_{s=1}^t \beta^{t-s}
\Big[\nu_i^{(s)}+ \big(\nabla_{\mathbf{B}} L_i(\mathbf{h}_i^{(s+1)},\mathbf{B}^{(s)}) - \nabla_{\mathbf{B}} L_{\mathcal H}(\{\mathbf{h}_i^{(s+1)}\}_i,\mathbf{B}^{(s)})\big)- \bar \nu^{(s)}\Big],
\end{align*}
where we defined
$$
\nu_i^{(s)} := \widehat{\nabla}_{\mathbf{B}}L_i^{(s)} - \nabla_{\mathbf{B}} L_i(\mathbf{h}_i^{(s+1)},\mathbf{B}^{(s)}), \;
\bar\nu ^{(s)} := \frac{1}{|\mathcal H|}\sum_{j\in\mathcal H}\nu_j^{(s)}.
$$

We now bound the squared norm. By using Young's inequality, we obtain
\begin{align*}
\|m_{\mathbf{B},i}^{(t)} - \bar m_{\mathbf{B}}^{(t)}\|^2 \leq 2\|Z_i^{(t)}\|^2 + 2\|H_i^{(t)}\|^2,
\end{align*}
where
\begin{align*}
Z_i^{(t)} &:=
(1-\beta)\sum_{s=1}^t \beta^{t-s}(\nu_i^{(s)} - \bar\nu^{(s)}), \\
H_i^{(t)}&:=
(1-\beta)\sum_{s=1}^t \beta^{t-s}
\big(\nabla_{\mathbf{B}} L_i(\mathbf{h}_i^{(s+1)},\mathbf{B}^{(s)}) - \nabla_{\mathbf{B}} L_{\mathcal H}(\{\mathbf{h}_i^{(s+1)}\}_i,\mathbf{B}^{(s)})\big).
\end{align*}

By independence and Assumption~\ref{assump:subgaussian_gradient_noise}, the weighted sum $Z_i^{(t)}$ is conditionally sub-Gaussian and
\begin{align*}
(1-\beta)^2 \sum_{s=0}^t \beta^{2(t-s)}
\leq
\frac{1-\beta}{1+\beta}.
\end{align*}
Therefore, by Lemma \ref{lem:noise_hp}, with probability $1-\delta$, for all iterations $t \in \{0,1,\ldots,T-1\}$, we have that
\begin{align*}
\|Z_i^{(t)}\|^2 \leq  C_2 \frac{1-\beta}{1+\beta}\,
\sigma_{\mathsf{g}}^2 \log(T/\delta)\left(1+\frac{1}{|\mathcal H|}\right),
\end{align*}
where we absorb  constants within $C_2$. Moreover, by using Jensen's inequality, we obtain
\begin{align*}
\|H_i^{(t)}\|^2 &\leq
(1-\beta)\sum_{s=1}^t \beta^{t-s}
\|\nabla_{\mathbf{B}} L_i(\mathbf{h}_i^{(s+1)},\mathbf{B}^{(s)}) - \nabla_{\mathbf{B}} L_{\mathcal H}(\{\mathbf{h}_i^{(s+1)}\}_i,\mathbf{B}^{(s)})\|^2.
\end{align*}

In addition, for the per-iteration gradient heterogeneity term, we obtain
\begin{align*}
\frac{1}{|\mathcal H|}\sum_{i\in\mathcal H}\|H_i^{(t)}\|^2
\leq (1-\beta)\sum_{s=1}^t \beta^{t-s} \bar G^{(s)},
\end{align*}
which implies
\begin{align*}
D^{(t)} \leq 2C_2 \frac{1-\beta}{1+\beta}
\sigma_{\mathsf{g}}^2 \log(T/\delta)\left(1+\frac{1}{|\mathcal H|}\right)
+2(1-\beta)\sum_{s=1}^t \beta^{t-s} \bar G^{(s)}.
\end{align*}

Moreover, we have that
\begin{align*}
\frac{1}{T}\sum_{t=0}^{T-1}(1-\beta)\sum_{s=0}^t \beta^{t-s}\bar G^{(s)}
&= \frac{1}{T}\sum_{s=1}^{T-1} \bar G^{(s)} (1-\beta)\sum_{t=s}^{T-1}\beta^{t-s} \\
&=\frac{1}{T}\sum_{s=1}^{T-1} \bar G^{(s)} (1-\beta)\sum_{k=0}^{T-1-s}\beta^k \\
&=\frac{1}{T}\sum_{s=1}^{T-1}\left(1-\beta^{T-s}\right)\bar G^{(s)} \\
&\leq \frac{1}{T}\sum_{s=1}^{T-1}\bar G^{(s)} \leq  \bar G_T.
\end{align*}

Therefore, by taking the average and union bounding over $t=0,\dots,T-1$, we obtain
\begin{align*}
D_T \leq 2C_2 \frac{1-\beta}{1+\beta}\,
\sigma_{\mathsf{g}}^2 \log(T/\delta)\left(1+\frac{1}{|\mathcal H|}\right)
+2\bar G_T,
\end{align*}
with probability $1-\delta$.
\end{proof}

Let $\zeta^{(t)}:= \mathsf{F}(m^{(t)}_1,\ldots,m^{(t)}_n) - \bar{m}^{(t)}$ be the aggregation error.

\begin{lemma}[Aggregation error]
\label{lem:agg_error}
Suppose Assumption \ref{assump:subgaussian_gradient_noise} holds. Given $\delta \in (0,1)$, then it holds, with probability $1-\delta$, that
\begin{align*}
\frac{1}{T}\sum_{t=0}^{T-1}\left\|\zeta^{(t)}\right\|^2 \leq  \kappa D_T \leq 2\kappa C_2 \frac{1-\beta}{1+\beta}\,
\sigma_{\mathsf{g}}^2 \log(T/\delta)\left(1+\frac{1}{|\mathcal H|}\right)
+2\kappa \bar G_T
\end{align*}
\end{lemma}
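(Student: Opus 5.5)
The plan is to apply the $(f,\kappa)$-robustness of Definition~\ref{def:robust-agg} round by round and then average over rounds. The second inequality will follow immediately from Lemma~\ref{lem:conditional_disagreement}.

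Fix a round $t\in\{0,\ldots,T-1\}$. The server aggregates the momenta $m_{\mathbf{B},1}^{(t)},\ldots,m_{\mathbf{B},n}^{(t)}$, and the honest average of these inputs is exactly $\bar m^{(t)}$. Definition~\ref{def:robust-agg} holds deterministically for every collection of inputs, including arbitrary Byzantine vectors and inputs that depend on the randomness of the history. It therefore gives, pathwise,
\begin{align*}
\|\zeta^{(t)}\|^2=\left\|\mathsf F(m_{\mathbf{B},1}^{(t)},\ldots,m_{\mathbf{B},n}^{(t)})-\bar m^{(t)}\right\|^2\leq \frac{\kappa}{|\mathcal H|}\sum_{i\in\mathcal H}\|m_{\mathbf{B},i}^{(t)}-\bar m^{(t)}\|^2=\kappa D^{(t)}.
\end{align*}
Averaging over $t=0,\ldots,T-1$ yields $\frac1T\sum_t\|\zeta^{(t)}\|^2\leq\kappa D_T$ surely. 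No probability is consumed at this step, because the identity of $\mathcal H$ never enters the server's computation; it only serves to define the reference average.

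For the second inequality, I invoke Lemma~\ref{lem:conditional_disagreement}. On an event of probability at least $1-\delta$, it gives
\[
D_T\leq 2C_2\frac{1-\beta}{1+\beta}\sigma_{\mathsf g}^2\log(T/\delta)\left(1+\frac{1}{|\mathcal H|}\right)+2\bar G_T.
\]
Multiplying by $\kappa\geq0$ gives the claim on that same event. Since the first inequality holds surely, the full chain holds with probability at least $1-\delta$.

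The only point that needs care is that the robustness definition is applied to the momenta and not to the raw gradients. This is consistent because the server aggregates the momenta in \eqref{alternating_descent}, and $D^{(t)}$ in Lemma~\ref{lem:conditional_disagreement} is defined in terms of these same momenta. Beyond this, there is no substantive obstacle: all the stochastic difficulty is already contained in Lemma~\ref{lem:conditional_disagreement}.
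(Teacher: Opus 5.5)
Your proposal is correct and follows the paper's proof exactly. You apply $(f,\kappa)$-robustness (Definition~\ref{def:robust-agg}) to the momenta at each round to get $\|\zeta^{(t)}\|^2\leq\kappa D^{(t)}$, average over $t$, and then invoke Lemma~\ref{lem:conditional_disagreement} and multiply by $\kappa$. Your added remark that the first inequality holds surely, so the only probability cost comes from Lemma~\ref{lem:conditional_disagreement}, is a correct and slightly more careful account of the same argument.
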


\begin{proof}
We apply $(f,\kappa)$-robustness to the set $\{m_{1}^{(t)},\ldots,m_{n}^{(t)}\}$ (Definition \ref{def:robust-agg}):
\begin{align*}
\left\|\zeta^{(t)}\right\|^2 =\left\|\mathsf{F}^{(t)} - \bar{m}^{(t)}\right\|^2
\leq \frac{\kappa}{|\mathcal H|}\sum_{i\in\mathcal H}\left\|m_{i}^{(t)} - \bar{m}^{(t)}\right\|^2 \leq \kappa D^{(t)}
\end{align*}
We then average over the iterations and apply Lemma \ref{lem:conditional_disagreement} to obtain
\begin{align*}
\frac{1}{T}\sum_{t=0}^{T-1}\left\|\zeta^{(t)}\right\|^2 \leq  \kappa D_T \leq 2\kappa C_2 \frac{1-\beta}{1+\beta}\,
\sigma_{\mathsf{g}}^2 \log(T/\delta)\left(1+\frac{1}{|\mathcal H|}\right)
+2\kappa \bar G_T
\end{align*}
with probability $1-\delta$.
\end{proof}

Let $\delta^{(t)}:= \bar{m}^{(t)} - \nabla L_{\mathcal H} (\{\mathbf{h}_i^{(t+1)}\}_i,\mathbf{B}^{(t)})$ be the momentum deviation.

\begin{lemma}[Momentum deviation]
\label{lem:momentum_deviation_hp}
Suppose Assumptions \ref{assump:subgaussian_gradient_noise} and \ref{assump:unbiased} hold, and $L_{\mathcal H}$ is $L_1$-smooth in $\mathbf{B}$. Let
\begin{align*}
\bar a &:= \beta^2(1+\eta L_1)(1+4\eta L_1), \\
\bar b &:= 4\eta L_1(1+\eta L_1)\beta^2,
\end{align*}
with $\beta^2 := 1-24\eta L_1$ and $\eta \leq \frac{1}{24L_1}$. Then there exists a constant $C_2>0$ such that, for every $\delta\in(0,1)$, with probability at least $1-\delta$, it holds
\begin{align*}
\frac{1}{T}\sum_{t=0}^{T-1} \|\delta^{(t)}\|^2
&\leq  \frac{\bar b}{(1-\bar a)T}\sum_{t=0}^{T-1} \|\nabla_{\mathbf{B}} L_{\mathcal H}(\{\mathbf{h}_i^{(t+1)}\}_i,\mathbf{B}^{(t)})\|^2  +
C_2\frac{(1-\beta)^2}{|\mathcal H|(1-\bar a)}\sigma_{\mathsf{g}}^2\log(T/\delta) \\
&+\frac{C_2\eta L_1}{1-\bar a}(1+\eta L_1)\beta^2\kappa
\left[\frac{1-\beta}{1+\beta}\sigma_{\mathsf{g}}^2\log(T/\delta)
+\bar G_T\right],
\end{align*}
\end{lemma}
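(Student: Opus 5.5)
We prove this with the standard momentum-deviation recursion used in Byzantine-robust SGD with momentum (as in Karimireddy et al. and Allouah et al.), combined with high-probability noise control from Lemma~\ref{lem:noise_hp} and the aggregation-error bound of Lemma~\ref{lem:agg_error}. Write $g^{(t)}:=\nabla_{\mathbf{B}} L_{\mathcal H}(\{\mathbf{h}_i^{(t+1)}\}_i,\mathbf{B}^{(t)})$ and $\bar\nu^{(t)}$ for the honest-average stochastic noise. The momentum update gives $\bar m^{(t)}=\beta\bar m^{(t-1)}+(1-\beta)(g^{(t)}+\bar\nu^{(t)})$. Hence
\begin{align*}
\delta^{(t)}=\beta\delta^{(t-1)}+\beta\big(g^{(t-1)}-g^{(t)}\big)+(1-\beta)\bar\nu^{(t)}.
\end{align*}

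\textbf{Step 1: bound the drift $g^{(t-1)}-g^{(t)}$.} By $L_1$-smoothness in $\mathbf{B}$ (invoked along the iterates, which stay in the neighborhood), $\|g^{(t)}-g^{(t-1)}\|\le L_1\|\mathbf{B}^{(t)}-\mathbf{B}^{(t-1)}\|=\eta L_1\|\bar m^{(t-1)}+\zeta^{(t-1)}\|$. Here the head change between rounds is absorbed into the smoothness constant, treating $L_{\mathcal H}$ with the heads refitted as smooth in $\mathbf{B}$. Writing $\bar m^{(t-1)}=g^{(t-1)}+\delta^{(t-1)}$ gives
\begin{align*}
\|g^{(t)}-g^{(t-1)}\|^2\le 3\eta^2L_1^2\big(\|\delta^{(t-1)}\|^2+\|g^{(t-1)}\|^2+\|\zeta^{(t-1)}\|^2\big).
\end{align*}

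\textbf{Step 2: one-step recursion.} Expand $\|\delta^{(t)}\|^2$. The cross term between $\bar\nu^{(t)}$ and the $\mathcal F_t$-measurable part vanishes in conditional expectation. In the high-probability version, we keep it as a martingale-difference sum and bound it later. The remaining deterministic part is handled by Young's inequality with parameter $\eta L_1$:
\begin{align*}
\|\delta^{(t)}\|^2\le\beta^2(1+\eta L_1)\|\delta^{(t-1)}\|^2+\beta^2\Big(1+\tfrac{1}{\eta L_1}\Big)\|g^{(t)}-g^{(t-1)}\|^2+(1-\beta)^2\|\bar\nu^{(t)}\|^2+M_t .
\end{align*}
Inserting Step 1, with the constants arranged as in the statement, yields
\begin{align*}
\|\delta^{(t)}\|^2\le\bar a\|\delta^{(t-1)}\|^2+\bar b\|g^{(t-1)}\|^2+c\,\eta L_1(1+\eta L_1)\beta^2\|\zeta^{(t-1)}\|^2+(1-\beta)^2\|\bar\nu^{(t)}\|^2+M_t .
\end{align*}
The choice $\beta^2=1-24\eta L_1$ with $\eta\le1/(24L_1)$ ensures $\bar a<1$ and $1-\bar a\asymp\eta L_1$.

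\textbf{Step 3: sum and unroll.} Sum over $t=0,\dots,T-1$ and use $\delta^{(0)}$ with $m^{(0)}=0$. Moving $\bar a\sum_t\|\delta^{(t)}\|^2$ to the left-hand side and dividing by $(1-\bar a)T$ produces the three terms of the bound:
\begin{itemize}
\item the $\bar b/(1-\bar a)$ gradient term;
\item the term $(1-\beta)^2\|\bar\nu^{(t)}\|^2$, bounded through the second inequality of Lemma~\ref{lem:noise_hp} by $C_2\sigma_{\mathsf g}^2\log(T/\delta)/|\mathcal H|$ after a union bound over $t$;
\item the aggregation term $\frac1T\sum_t\|\zeta^{(t)}\|^2$, replaced by Lemma~\ref{lem:agg_error}, which yields $\kappa[\frac{1-\beta}{1+\beta}\sigma_{\mathsf g}^2\log(T/\delta)+\bar G_T]$.
\end{itemize}

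\textbf{Main obstacle.} The hardest part is the high-probability treatment of the cross terms $M_t=2(1-\beta)\langle\beta\delta^{(t-1)}+\beta(g^{(t-1)}-g^{(t)}),\bar\nu^{(t)}\rangle$. The plan is to note that $\bar\nu^{(t)}$ is conditionally sub-Gaussian with parameter $\sigma_{\mathsf g}/\sqrt{|\mathcal H|}$, and to apply a Freedman/Azuma-type inequality for sub-Gaussian martingale differences. We then use Young's inequality to absorb the resulting $\sqrt{\sum_t\|\delta^{(t-1)}\|^2}$ factor into the left-hand side at the cost of a small fraction of $1-\bar a$. The leftover contributes an extra $(1-\beta)^2\sigma_{\mathsf g}^2\log(T/\delta)/(|\mathcal H|(1-\bar a))$, which is absorbed into $C_2$. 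A final union bound splits $\delta$ across Lemma~\ref{lem:noise_hp}, Lemma~\ref{lem:agg_error} and the martingale bound.
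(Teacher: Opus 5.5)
Your route is the paper's. You use the same recursion $\delta^{(t)}=\beta\delta^{(t-1)}+\beta(g^{(t-1)}-g^{(t)})+(1-\beta)\bar\nu^{(t)}$. You bound the drift by $\eta L_1\|\mathsf F^{(t-1)}\|$ through a postulated smoothness of the refitted-head objective. You then apply Young to the predictable part, Lemma~\ref{lem:noise_hp} to $(1-\beta)^2\|\bar\nu^{(t)}\|^2$ and Lemma~\ref{lem:agg_error} to $\zeta$, and rearrange using $\bar a<1$. Your constants $\beta^2(1+\eta L_1)(1+3\eta L_1)\le\bar a$ and $3\eta L_1(1+\eta L_1)\beta^2\le\bar b$ are correct and leave slack for the absorption. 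On the cross term you are more careful than the paper's sketch, which credits it to Lemma~\ref{lem:noise_hp}. That lemma is only a per-round norm bound, and routing the cross term through it via Young would inflate the noise coefficient by $1/(\eta L_1)$. An exponential-supermartingale (Freedman-type) bound with absorption is what actually delivers the $(1-\beta)^2/(|\mathcal H|(1-\bar a))$ scaling. For this to be a martingale, you must condition on the history together with the round-$t$ batches, since $g^{(t)}$ and $\mathbf h_i^{(t+1)}$ are not $\mathcal F_t$-measurable.

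The genuine gap is in Step~3, and it is inherited from the statement itself. With $m^{(0)}=0$ you have $\delta^{(0)}=-g^{(0)}\neq0$. Summing the recursion over $t\ge1$ and adding $\|\delta^{(0)}\|^2$ therefore leaves $\|g^{(0)}\|^2/((1-\bar a)T)\asymp\|g^{(0)}\|^2/(\eta L_1T)$ on the right. This cannot be folded into the $\bar b/(1-\bar a)$ term. Indeed, $1-\bar a=19\eta L_1+116(\eta L_1)^2+96(\eta L_1)^3$ gives $\bar b/(1-\bar a)<4/19$. For $T=1$, $\kappa=0$ and small $\sigma_{\mathsf g}$, the display then reads $\|g^{(0)}\|^2\le\frac{\bar b}{1-\bar a}\|g^{(0)}\|^2+O(\sigma_{\mathsf g}^2)$, which is false. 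So no argument yields the bound exactly as displayed; the paper's sketch drops this term silently. What you actually prove is the display plus $\|\delta^{(0)}\|^2/((1-\bar a)T)$.

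Your Freedman leftover is of the same type. With absorption parameter $\asymp 1-\bar a$ it is a one-time cost $\asymp(1-\beta)^2\sigma_{\mathsf g}^2\log(1/\delta)/(|\mathcal H|(1-\bar a))$. After averaging and rearranging it contributes $\asymp(1-\beta)^2\sigma_{\mathsf g}^2\log(1/\delta)/(|\mathcal H|(1-\bar a)^2T)$, which fits inside $C_2$ only when $(1-\bar a)T\gtrsim1$. Both extra terms are harmless downstream, since $\Delta_0/(\eta T)$ already contains $\|\nabla_{\mathbf B}L_{\mathcal H}(\{\mathbf h_i^{(1)}\}_i,\mathbf B^{(0)})\|^2/(L_1\eta T)$, but the lemma should carry them.

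Finally, your Step~1, like the paper's proof, needs a hypothesis beyond Assumption~\ref{assm:smooth}. Besides the head change, $g^{(t)}$ and $g^{(t-1)}$ are empirical gradients on independent fresh batches. So $\|g^{(t)}-g^{(t-1)}\|\le L_1\|\mathbf B^{(t)}-\mathbf B^{(t-1)}\|$ does not follow from smoothness of a fixed objective and should be stated as an added assumption.
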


\begin{proof}
Let $\xi^{(t)}:=\widehat{\nabla}_{\mathbf{B}}L_{\mathcal H}^{(t)}(\{\mathbf{h}_i^{(t+1)}\}_i,\mathbf{B}^{(t)})-\nabla_{\mathbf{B}} L_{\mathcal H}(\{\mathbf{h}_i^{(t+1)}\}_i,\mathbf{B}^{(t)})$ and $\zeta^{(t)}:=\mathsf F^{(t)}-\bar m_{\mathbf{B}}^{(t)}$. The momentum recursion yields the following expression:
\begin{align*}
\delta^{(t)}=\beta\delta^{(t-1)}+(1-\beta)\xi^{(t)}+\beta d^{(t)},
\end{align*}
where
\begin{align*}
d^{(t)}:=\nabla_{\mathbf{B}} L_{\mathcal H}(\{\mathbf{h}_i^{(t)}\}_i,\mathbf{B}^{(t-1)})-\nabla_{\mathbf{B}} L_{\mathcal H}(\{\mathbf{h}_i^{(t+1)}\}_i,\mathbf{B}^{(t)}).
\end{align*}
The profiled-gradient part of Assumption~\ref{assm:smooth} and the server update imply
\begin{align*}
\|d^{(t)}\|\leq L_1\|\mathbf{B}^{(t)}-\mathbf{B}^{(t-1)}\|=\eta L_1\|\mathsf F^{(t-1)}\|.
\end{align*}
Applying Young's inequality to the predictable drift terms and the conditional vector-martingale concentration inequality of Lemma~\ref{lem:noise_hp} to the sums containing $\xi^{(t)}$ yields, simultaneously over the first $T$ iterations,
\begin{align*}
\frac1T\sum_{t=0}^{T-1}\|\delta^{(t)}\|^2
&\leq \bar a\frac1T\sum_{t=0}^{T-1}\|\delta^{(t)}\|^2+\bar b\bar Q_T
+C_2\frac{(1-\beta)^2}{|\mathcal H|}\sigma_{\mathsf{g}}^2\log(T/\delta)\\
&\quad+C_2\eta L_1(1+\eta L_1)\beta^2\kappa\left(\frac{1-\beta}{1+\beta}\sigma_{\mathsf{g}}^2\log(T/\delta)+\bar G_T\right).
\end{align*}
As $\bar a<1$, rearranging completes the proof.
\end{proof}

\section{Proof Roadmap}\label{app:proof_roadmap}

Our nonlinear representation-learning analysis proceeds in four steps. First, we reduce the representation-gradient heterogeneity term $\bar{G}_T$ to the empirical prediction error $\bar E_T$. Second, we control $\bar E_T$ by proving that the near-ERM client-specific heads recover the true heads up to representation error and a finite-sample label noise term. Third, we establish a local curvature condition for the nonlinear representation parameter $\mathbf{B}$, which yields a recursion for the representation-recovery error $\Delta_{\mathbf{B},T+1}$. Finally, we combine the bounds on $\bar{G}_T$, $\Delta_{\mathbf{B},T+1}$ to obtain the the ergodic convergence bound for $\bar Q_T$ and to instantiate the parameter recovery error bound.\\

\noindent \textbf{Bounding the nonlinear representation-gradient heterogeneity.}
The goal is to bound
$$
\bar{G}_T := \frac{1}{T}\sum_{t=0}^{T-1} \frac{1}{|\mathcal H|}
\sum_{i\in\mathcal H} \left\| \nabla_\mathbf{\mathbf{B}} L_i(h_i^{(t+1)},\mathbf{B}^{(t)}) -\nabla_\mathbf{\mathbf{B}} L_{\mathcal H}(\{h_j^{(t+1)}\}_{j\in\mathcal H},\mathbf{B}^{(t)})
\right\|^2 .
$$
By the variance-decomposition identity, the first step is to reduce $\bar{G}^{(t)}$ to a second-moment control of the per-client representation gradients:
$$
\bar{G}^{(t)} \leq \frac{1}{|\mathcal H|} \sum_{i\in\mathcal H}
\left\| \nabla_\mathbf{\mathbf{B}} L_i(h_i^{(t+1)},\mathbf{B}^{(t)}) \right\|^2 .
$$
By using the exact nonlinear gradient expression given by
$$
\nabla_\mathbf{\mathbf{B}} L_i(h_i^{(t+1)},\mathbf{B}^{(t)}) = \frac{2}{\tau}\sum_{k=1}^{\tau} J_{\mathbf{B}^{(t)}}^\top(X_{i,k}) h_i^{(t+1)\top} r_{i,k}^{(t)},
$$
we decompose the residual as
$$
r_{i,k}^{(t)} = \underbrace{ h_i^{(t+1)}\phi_{\mathbf{B}^{(t)}}(X_{i,k}) - h_i^\star\phi_{\mathbf{B}^\star}(X_{i,k}) }_{\Delta_{i,k}^{(t)}} - V_{i,k}.
$$
We then have the signal-noise decomposition
$$
\nabla_\mathbf{\mathbf{B}} L_i(h_i^{(t+1)},\mathbf{B}^{(t)})
= S_i^{(t)}-N_i^{(t)}.
$$
The signal term $S_i^{(t)}$ is controlled by the empirical prediction error
$$
\bar E_i^{(t)} := \frac{1}{\tau}\sum_{k=1}^{\tau} \left\|
h_i^{(t+1)}\phi_{\mathbf{B}^{(t)}}(X_{i,k})
-h_i^\star\phi_{\mathbf{B}^\star}(X_{i,k})
\right\|^2,
$$
while the noise term $N_i^{(t)}$ is controlled using Lemma \ref{lem:matrix_subg_mean}. This yields
$$ \bar{G}^{(t)} \leq 8\bar J^2qH^2\bar E^{(t)} + 8C_1\sigma^2\bar J^2qH^2 \left( \frac{p+\log(|\mathcal H|/\delta)}{\tau} \right),
$$
where
$$
\bar E^{(t)} := \frac{1}{|\mathcal H|}
\sum_{i\in\mathcal H}\bar E_i^{(t)} .
$$
Thus, the problem of bounding $\bar{G}_T$ reduces to controlling the averaged prediction error $\bar E_T$.\\

\noindent \textbf{Near-ERM client-specific head recovery.}
To control $\bar E^{(t)}$, we first analyze the local head update. Assumption \ref{asm:G-pl-feature} gives the population feature-covariance lower bound
$$
\lambda_{\min}(\Sigma_{i,\phi}(\mathbf{B}))\geq \mu_1
$$
in a neighborhood of $\mathbf{B}^\star$. Lemma \ref{lem:G-cov-concentration} then shows that the empirical feature covariance $\hat \Sigma_{i,\phi}(\mathbf{B})$ remains well-conditioned with high probability, provided $\tau$ is sufficiently large. This allows us to use Lemma \ref{lem:G-pl-head} to obtain a PL condition for the local head objective.

The main head-recovery estimate is Lemma \ref{lem:G-head-recovery}. Its proof uses the closed-form expression for the near-ERM head and decomposes the head error into two terms:
$$
h_i^{(t+1)}-h_i^\star
= \underbrace{\text{representation error}}_{\text{depends on }\mathbf{B}^{(t)}-\mathbf{B}^\star}
+\underbrace{\text{finite-sample noise}}_{\text{depends on }V_{i,k}} .
$$
The representation-error term is controlled using Lemma \ref{lem:G-feature-lipschitz}, which yields
$$
\|\phi_{\mathbf{B}^{(t)}}(X)-\phi_{\mathbf{B}^\star}(X)\|
\leq\bar J\|\mathbf{B}^{(t)}-\mathbf{B}^\star\|.
$$
The finite-sample noise term is controlled using Lemma \ref{lem:matrix_subg_mean}. Combining these two estimates gives
$$
\left\|h_i^{(t+1)}-h_i^\star\right\|^2
\lesssim \frac{H^2\bar J^2\bar\phi^2}{\mu_1^2}\Delta_B^{(t)}
+\frac{\bar\phi^2\sigma^2}{\mu_1^2}
\left(\frac{rq+\log(1/\delta)}{\tau}\right).
$$
Lemma \ref{lem:G-Etilde-decomp} then recasts the head-recovery estimate into a prediction-error bound:
$$
\bar E^{(t)} \lesssim \left( H^2\bar J^2 +
\frac{H^2\bar J^2\bar\phi^4}{\mu_1^2} \right) \Delta_B^{(t)} +\frac{\bar\phi^4\sigma^2}{\mu_1^2}
\left(\frac{rq+\log(|\mathcal H|/\delta)}{\tau}
\right).
$$
By substituting this estimate into the previous bound on $\bar{G}^{(t)}$, we have
$$
\bar{G}^{(t)} \leq p_1\Delta_B^{(t)} +
p_2\sigma^2 \left( \frac{p\vee rq+\log(|\mathcal H|/\delta)}{\tau}
\right).
$$
By averaging over $t=0,\ldots,T-1$ yields
$$
\bar{G}_T \leq p_1\Delta_{\mathbf{B},T+1}
+ p_1\frac{\Delta_B^{(0)}}{T}
+ p_2\sigma^2
\left(\frac{p\vee rq+\log(|\mathcal H|T/\delta)}{\tau}
\right).
$$
Hence, the nonlinear representation-gradient heterogeneity is controlled by the representation-recovery error and finite-sample terms.\\

\noindent \textbf{Local curvature of the nonlinear representation objective.} It remains to control $\Delta_{\mathbf{B},T+1}$. As the nonlinear representation parameter is unconstrained, the update is
$$
\mathbf{B}^{(t+1)} = \mathbf{B}^{(t)}-\eta\mathsf F^{(t)},
$$
and we directly analyze
$$
\Delta_B^{(t)} := \|\mathbf{B}^{(t)}-\mathbf{B}^\star\|^2 .
$$
The local curvature argument begins with Lemma \ref{lem:G-taylor}, which gives the Taylor approximation
$$
\phi_B(X)-\phi_{\mathbf{B}^\star}(X) = J_{\mathbf{B}^\star}(X)(\mathbf{B}-\mathbf{B}^\star)
+\text{higher-order remainder}.
$$
Together with the client-diversity condition
$$
\mu_3 := \lambda_{\min} \left( \frac{1}{|\mathcal H|}
\sum_{i\in\mathcal H} \mathbb E_X
\left[ J_{\mathbf{B}^\star}^\top(X) h_i^{\star\top}h_i^\star J_{\mathbf{B}^\star}(X) \right] \right) >0,
$$
this yields Lemma \ref{lem:G-rsc}, namely the population local-curvature bound
$$
\left\langle
\nabla_\mathbf{\mathbf{B}}\mathbb{E}_X L_{\mathcal H}(\{h_i^\star\}_{i\in\mathcal H},\mathbf{B}), \mathbf{B}-\mathbf{B}^\star \right\rangle \geq \mu_3\|\mathbf{B}-\mathbf{B}^\star\|^2
$$
inside a sufficiently small neighborhood of $\mathbf{B}^\star$. Lemma \ref{lem:G-empirical-curvature-learned-heads} then transfers this population curvature with true heads to the empirical objective with learned heads. The proof decomposes
$$
\nabla_\mathbf{\mathbf{B}} L_{\mathcal H}(\{h_i^{(t+1)}\}_{i\in\mathcal H},\mathbf{B})
$$
into the population gradient with true heads, the perturbation caused by replacing $h_i^\star$ with $h_i^{(t+1)}$, and the empirical-process error. The first term is controlled by Lemma \ref{lem:G-rsc}. In addition, the learned-head perturbation is controlled by Lemma \ref{lem:G-head-recovery}, and the finite-sample term is controlled using Lemma \ref{lem:matrix_subg_mean}. Therefore, by the client-diversity condition
$$
\mu_3 \geq \frac{4H\bar J\bar\phi}{\mu_1},
$$
the perturbation terms are dominated by the positive curvature term. Therefore,
$$
\left\langle \nabla_\mathbf{\mathbf{B}} L_{\mathcal H}(\{h_i^{(t+1)}\}_{i\in\mathcal H},\mathbf{B}), \mathbf{B}-\mathbf{B}^\star \right\rangle
\geq \frac{\mu_3}{4}\|\mathbf{B}-\mathbf{B}^\star\|^2 -
\text{finite-sample error}.
$$
This is the local identifiability estimate used to contract the representation parameter toward $\mathbf{B}^\star$.\\

\noindent \textbf{Bounding the representation-recovery error.}
The proof of Theorem \ref{theorem:Delta_BT_nonlinear} starts from the squared-error recursion
$$
\|\mathbf{B}^{(t+1)}-\mathbf{B}^\star\|^2
=
\|\mathbf{B}^{(t)}-\mathbf{B}^\star\|^2
-
2\eta
\left\langle
\mathsf F^{(t)},\mathbf{B}^{(t)}-\mathbf{B}^\star
\right\rangle
+
\eta^2\|\mathsf F^{(t)}\|^2 .
$$
The robustly aggregated direction is decomposed as
$$
\mathsf F^{(t)}
=
\nabla_\mathbf{\mathbf{B}} L_{\mathcal H}(\{h_i^{(t+1)}\}_{i\in\mathcal H},\mathbf{B}^{(t)})
+
\bar\delta^{(t)}
+
\bar\zeta^{(t)},
$$
where $\bar\delta^{(t)}$ is the momentum deviation and $\bar\zeta^{(t)}$ is the robust-aggregation error. The empirical curvature lemma controls the honest-gradient inner product, while Young's inequality controls the terms involving $\bar\delta^{(t)}$ and $\bar\zeta^{(t)}$. After summing over $t=0,\ldots,T-1$, Lemma \ref{lem:G-agg-error} controls the aggregation error as follows:
$$
\frac{1}{T}\sum_{t=0}^{T-1}\|\bar\zeta^{(t)}\|^2
\lesssim \kappa \bar{G}_T + \kappa\sigma_g^2\log(T/\delta).
$$
This provided the intermediate representation-recovery estimate
\begin{align*}
&\Delta_{\mathbf{B},T+1}
 \lesssim \frac{\Delta_B^{(0)}}{\eta T} +
\sigma^2 \left( \frac{rq+\log(|\mathcal H|T/\delta)}{\tau} \right)\\
&+\left((H\bar\phi+\sigma)^2+\sigma^2\right)\left(\frac{p+\log(|\mathcal H|T/\delta)}{\tau|\mathcal H|}
\right)+\frac{1}{T}\sum_{t=0}^{T-1}\|\bar\delta^{(t)}\|^2+\kappa \bar{G}_T+\text{stochastic-gradient terms}.
\end{align*}
By substituting the previously obtained bound on $\bar{G}_T$ introduces a term proportional to $\Delta_{\mathbf{B},T+1}$ on the right-hand side. The step-size $\eta$ and robust coefficient $\kappa$ conditions in Theorem \ref{theorem:Delta_BT_nonlinear} are chosen so that this term can be absorbed into the left-hand side.\\

\noindent \textbf{Momentum deviation and convergence (Section \ref{sec:convergence_nonlinear_rep}).}
The remaining term is the averaged momentum deviation. Lemma \ref{lem:G-momentum-dev} bounds it as follows:
$$
\frac{1}{T}\sum_{t=0}^{T-1}\|\bar\delta^{(t)}\|^2
\lesssim \bar Q_T + \kappa \bar{G}_T + \text{stochastic-gradient noise},
$$
where
$$
\bar Q_T := \frac{1}{T}\sum_{t=0}^{T-1} \left\| \nabla_\mathbf{\mathbf{B}}\mathbb{E}_X L_{\mathcal H}(\{h_i^{(t+1)}\}_{i\in\mathcal H},\mathbf{B}^{(t)}) \right\|^2,
$$
and substituting the bound on $\bar{G}_T$ again yields an estimate for $\Delta_{\mathbf{B},T+1}$ in terms of $\bar Q_T$, finite-sample terms, stochastic-gradient noise, and the initial representation error.

The convergence theorem for nonlinear representation learning then controls $\bar Q_T$. The proof uses a Lyapunov argument for the representation parameter $\mathbf{B}$. In particular, smoothness yields descent of the honest representation objective, Lemma \ref{lem:G-agg-error} controls the aggregation error, and the gradient heterogeneity bound controls $\bar{G}_T$. We can then write
\begin{align*}
\bar Q_T &\lesssim \frac{1}{\sqrt T} + \kappa
\left(H\bar\phi+\sigma\right)^2
\left(\frac{p+\log(|\mathcal H|T/\delta)}{\tau|\mathcal H|}
\right)\\
&+\kappa\sigma^2\left(\frac{p\vee rq+\log(|\mathcal H|T/\delta)}{\tau}\right)
+\left(\kappa+\kappa^2+\frac{1+\kappa}{|\mathcal H|}
\right)\frac{\sigma_g^2\log(T/\delta)}{\sqrt T}.
\end{align*}\\

\noindent \textbf{Parameter recovery error bound (Section \ref{sec:parameter_recovery_nonlinear}).}
Finally, Theorem \ref{thm:nonlinear_param_recovery} is obtained by substituting the convergence bound for $\bar Q_T$ into the intermediate bound for $\Delta_{\mathbf{B},T+1}$. This implies 
\begin{align*}
\Delta_{\mathbf{B},T+1}
&\lesssim
\frac{1}{\sqrt T}
+
\left((H\bar\phi+\sigma)^2+\sigma^2\right)
\left(
\frac{p+\log(|\mathcal H|T/\delta)}{\tau|\mathcal H|}
\right)\\
&+
\sigma^2
\left(
\frac{p\vee rq+\log(|\mathcal H|T/\delta)}{\tau}
\right)
+
\left(
\kappa+\kappa^2+\frac{1+\kappa}{|\mathcal H|}
\right)
\frac{\sigma_g^2\log(T/\delta)}{\sqrt T}.
\end{align*}
The resulting nonlinear representation-learning bound contains no non-vanishing intrinsic model-heterogeneity term. The client-specific heads absorb the heterogeneous client models, while the remaining representation-gradient heterogeneity is controlled by representation recovery and finite-sample noise. Therefore, as $T$, $\tau$, and $|\mathcal H|$ increase, the nonlinear representation-learning method bypasses the heterogeneity bottleneck.\\

\noindent \textbf{Prediction-error recovery bound.}
The final prediction-error result follows by returning to the intermediate prediction-error bound
$$
\bar E_T
\lesssim \Delta_{\mathbf{B},T+1}+\sigma^2\left(
\frac{rq+\log(|\mathcal H|T/\delta)}{\tau}\right).
$$
By substituting Theorem \ref{thm:nonlinear_param_recovery} yields Theorem \ref{thm:nonlinear_ET}. Hence, the prediction error also contains only optimization and finite-sample terms, and contains no intrinsic model-heterogeneity term.

\section{Nonlinear Representation Learning}\label{appendix:Nonlinear rep}
We now prove the nonlinear representation-learning guarantees under the model and assumptions of Sections \ref{sec:nonlinear_rep_learning_mb} and \ref{sec:assumptions}. We begin by reducing the representation-gradient heterogeneity to a second-moment bound on the client representation gradients.

Using the variance-decomposition identity, we reduce $\bar{G}_T$ to a second-moment control of per-client gradients, i.e., we have 
\begin{align*}
\bar{G}^{(t)} \leq \frac{1}{|\mathcal H|}\sum_{i \in \mathcal H} \|\nabla_\mathbf{\mathbf{B}} L_i( h_i^{(t+1)}, \mathbf{B}^{(t)} ) \|_F^2.
\end{align*}

We can rewrite the residual as follows:
\begin{align*}
r_{i,k}^{( t )} =
\underbrace{ h_i^{( t+1 )}\phi_{\mathbf{B}^{( t )}}( X_{i,k} )
- h^\star_i\phi_{\mathbf{B}^\star}( X_{i,k} )}_{:=\Delta_{i,k}^{( t )}} - \underbrace{V_{i,k}}_{\text{noise}}.
\end{align*}

By using the exact gradient expression, we have for the current client heads $h^{(t+1)}_i$ and representation $\mathbf{B}^{(t)}$
\begin{align}
\nabla_\mathbf{\mathbf{B}} L_i( h_i^{( t+1 )}, \mathbf{B}^{( t )} )
= \frac{2}{\tau}\sum_{k=1}^{\tau}
J^\top_{\mathbf{B}^{( t )}}( X_{i,k} ) h_i^{( t+1 ) \top}  r_{i,k}^{( t )},
\label{eq:grad_exact_recalled}\end{align}
and by defining the matrices
\begin{align*}
a_{i,k}^{( t )}:=  J^\top_{\mathbf{B}^{( t )}}( X_{i,k} ) h_i^{( t+1 ) \top},
\end{align*}
the gradient expression \eqref{eq:grad_exact_recalled} becomes
\begin{align}
\nabla_\mathbf{\mathbf{B}} L_i( h_i^{( t+1 )}, \mathbf{B}^{( t )} ) =\frac{2}{\tau}\sum_{k=1}^{\tau}a_{i,k}^{( t )}( \Delta_{i,k}^{( t )} - V_{i,k} )
= \underbrace{ \frac{2}{\tau}\sum_{k=1}^{\tau} a_{i,k}^{( t )} \Delta_{i,k}^{( t )} 
}_{:=\,S_i^{( t )}} -\underbrace{
\frac{2}{\tau}\sum_{k=1}^{\tau} a_{i,k}^{( t )} V_{i,k}  }_{:=\,N_i^{( t )}}.
\label{eq:grad_signal_noise_split}\end{align}

Therefore, by using  Young's inequality, we have 
\begin{align*}
\| \nabla_\mathbf{\mathbf{B}} L_i( h_i^{( t+1 )}, \mathbf{B}^{( t )} ) \|_2^2 \leq  2\| S_i^{( t )} \|_2^2 + 2\| N_i^{( t )} \|_2^2.
\end{align*}

We now proceed to control $\| S_i^{( t )} \|_2^2$ using only the noise-free discrepancy term $\Delta_{i,k}^{( t )}$. First note that for each sample $k \in \{1,\ldots,\tau\}$, we have 
\begin{align}
\| a_{i,k}^{( t )} \|_2 =
\|J^\top_{\mathbf{B}^{( t )}}( X_{i,k} ) h_i^{( t+1 )\top} \|_2
\leq \| J_{\mathbf{B}^{( t )}}( X_{i,k} ) \|_{\infty \to 2}
\| h_i^{( t+1 )} \|_1 \leq \bar{J} \sqrt{q} H,
\label{eq:a_bound}\end{align}
which follows from Assumption \ref{assumption:uniform_bound_nonlinear}. Then, by Cauchy-Schwarz inequality on the sum in $S_i^{( t )}$, we obtain the following expression.
\begin{align}
\| S_i^{( t )} \|_2^2 = \|
\frac{2}{\tau}\sum_{k=1}^{\tau}a_{i,k}^{( t )} \Delta_{i,k}^{( t )}  \|_2^2
&\leq \left(\frac{4}{\tau}\sum_{k=1}^{\tau} \| \Delta_{i,k}^{( t )} \|^2 \right) \left(\frac{1}{\tau}\sum_{k=1}^{\tau}\| a_{i,k}^{( t )} \|_2^2\right)  \leq 4\bar{J}^2 q H^2 \left(\frac{1}{\tau}\sum_{k=1}^{\tau}\|\Delta_{i,k}^{(t)}\|^2\right).
\label{eq:signal_bound_by_empirical_excess}\end{align}

Let us define the empirical prediction error on client $i$ at iteration $t$, as follows:
\begin{align*}
\bar{E}_i^{( t )} := \frac{1}{\tau}\sum_{k=1}^{\tau}
\|h_i^{( t+1 )}\phi_{\mathbf{B}^{( t )}}( X_{i,k} ) -h^\star_i\phi_{\mathbf{B}^\star}( X_{i,k} )\|^2
=\frac{1}{\tau}\sum_{k=1}^{\tau}\|\Delta_{i,k}^{( t )}\|^2,
\end{align*}
then \eqref{eq:signal_bound_by_empirical_excess} becomes
\begin{align*}
\| S_i^{( t )} \|_2^2 \leq 4\bar{J}^2 q H^2 \bar{E}_i^{( t )}.
\end{align*}

We now move to bound $\| N_i^{( t )} \|_2^2$. From \eqref{eq:grad_signal_noise_split}, we have that
\begin{align*}
N_i^{( t )} = \frac{2}{\tau}\sum_{k=1}^{\tau} a_{i,k}^{( t )} V_{i,k}.
\end{align*}

By the sample-splitting convention in Section~\ref{sec:robust_fl_setup}, the representation-gradient batch is independent of $\mathbf{B}^{(t)}$ and $h_i^{(t+1)}$. Hence, conditional on the history, the current representation-batch covariates, and the head-fitting batch (and thus conditional on
$\{ a_{i,k}^{( t )} \}_{k=1}^{\tau}$), the random vectors $\{ a_{i,k}^{( t )} V_{i,k}  \}$ are independent, mean-zero, and sub-Gaussian
in every direction with variance bounded by $\sigma^2 \| a_{i,k}^{( t )} \|_2^2$.
Therefore, by using \eqref{eq:a_bound} and Lemma \ref{lem:matrix_subg_mean},
for any $\delta \in ( 0,1 )$, with probability at least $1-\delta$, we obtain
\begin{align*}
\left\| \frac{1}{\tau}\sum_{k=1}^{\tau} a_{i,k}^{( t )}  V_{i,k}  \right\|_2^2 \leq 4C_1 \sigma^2 \bar{J}^2 q H^2 \left(\frac{p + \log( 1/\delta )}{\tau}\right),
\end{align*}
for the constant $C_1>0$ as in Lemma \ref{lem:matrix_subg_mean}. Hence, we have that
\begin{align*}
\|\nabla_\mathbf{\mathbf{B}} L_i( h_i^{( t+1 )}, \mathbf{B}^{( t )} ) \|_2^2 \leq  8\bar{J}^2 q H^2 \bar{E}_i^{( t )} + 8C_1 \sigma^2 \bar{J}^2 q H^2 \left(\frac{p + \log( 1/\delta )}{\tau}\right),
\end{align*}
and thus
\begin{align*}
\bar{G}^{(t)} \leq \frac{1}{|\mathcal H|}\sum_{i \in \mathcal H} \|\nabla_\mathbf{\mathbf{B}} L_i( h_i^{(t+1)}, \mathbf{B}^{(t)} ) \|_F^2 \leq 8\bar{J}^2 q H^2 \underbrace{\frac{1}{|\mathcal{H}|}\sum_{i\in \mathcal{H}}\bar{E}_i^{( t )}}_{:=\bar{E}^{(t)}} + 8C_1 \sigma^2 \bar{J}^2 q H^2 \left(\frac{p + \log( |\mathcal{H}|/\delta )}{\tau}\right),
\end{align*}
with probability $1-\delta$. Thus, the representation-gradient heterogeneity is controlled by the empirical prediction error and a finite-sample noise term.

We now proceed to control $\bar{E}_i^{(t)}$ over iterations $t \in \{0,1,\ldots,T-1\}$. For the recovery analysis, define
\begin{align*}
\Delta_B^{(t)} := \left\|\mathbf{B}^{(t)} - \mathbf{B}^\star\right\|^2, \; \Delta_{\mathbf{B},T} := \frac{1}{T}\sum_{t=0}^{T-1}\Delta_B^{(t)}, \text{ and } \bar{E}_T := \frac{1}{T}\sum_{t=0}^{T-1}\bar{E}^{(t)}.
\end{align*}

\begin{lemma}[Polyak-{\L}ojasiewicz condition]\label{lem:G-pl-head}
Suppose Assumption \ref{asm:G-pl-feature} holds, for every honest client $i \in \mathcal{H}$, $\mathbf{B} \in \mathcal{N}(\mathbf{B}^\star,\rho_0)$, and $h \in \mathbb{R}^{q \times r}$,
\begin{align*}
\frac{1}{4}\left\|\nabla_h {L}_i(h,\mathbf{B})\right\|_F^2 \geq  \frac{\mu_1}{2}\left( L_i(h,\mathbf{B}) - \min_{h'} L_i(h',\mathbf{B})\right).
\end{align*}
\end{lemma}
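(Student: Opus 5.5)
The plan is to use the fact that, for fixed $\mathbf{B}$, the head objective $L_i(\cdot,\mathbf{B})$ is a convex quadratic in $h$. Write $\widehat\Sigma:=\widehat\Sigma_{i,\phi}(\mathbf{B})$ and $\widehat C:=\frac1\tau\sum_{k=1}^\tau Y_{i,k}\phi_{\mathbf{B}}(X_{i,k})^\top\in\mathbb R^{q\times r}$. Expanding the square gives
\begin{align*}
L_i(h,\mathbf{B})=\operatorname{tr}\bigl(h\widehat\Sigma h^\top\bigr)-2\operatorname{tr}\bigl(h\widehat C^\top\bigr)+\frac1\tau\sum_{k=1}^\tau\|Y_{i,k}\|^2,
\qquad
\nabla_h L_i(h,\mathbf{B})=2\bigl(h\widehat\Sigma-\widehat C\bigr).
\end{align*}
Once $\widehat\Sigma$ is invertible, the unique minimizer is $\bar h:=\widehat C\widehat\Sigma^{-1}$. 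Completing the square yields the two identities
\begin{align*}
L_i(h,\mathbf{B})-\min_{h'}L_i(h',\mathbf{B})=\operatorname{tr}\bigl((h-\bar h)\widehat\Sigma(h-\bar h)^\top\bigr),
\qquad
\nabla_hL_i(h,\mathbf{B})=2(h-\bar h)\widehat\Sigma .
\end{align*}

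With $D:=h-\bar h$, the PL inequality follows from a short eigenvalue argument. We have $\frac14\|\nabla_hL_i\|_F^2=\|D\widehat\Sigma\|_F^2=\operatorname{tr}\bigl(D\widehat\Sigma^{1/2}\,\widehat\Sigma\,\widehat\Sigma^{1/2}D^\top\bigr)$. This is at least $\lambda_{\min}(\widehat\Sigma)\operatorname{tr}(D\widehat\Sigma D^\top)=\lambda_{\min}(\widehat\Sigma)\bigl(L_i(h,\mathbf{B})-\min L_i\bigr)$. The claim therefore reduces to showing $\lambda_{\min}(\widehat\Sigma_{i,\phi}(\mathbf{B}))\ge\mu_1/2$.

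The main obstacle is this last step. Assumption~\ref{asm:G-pl-feature} only lower-bounds the \emph{population} covariance $\Sigma_{i,\phi}(\mathbf{B})$, so the result must be read on the high-probability event where the empirical covariance is well-conditioned; this is the role of the burn-in $\tau\ge\tau_{\mathsf{burn\text{-}in}}$.

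To establish that event, apply Theorem~\ref{thm:matrix_bernstein} to the summands $A_k:=\phi_{\mathbf{B}}(X_{i,k})\phi_{\mathbf{B}}(X_{i,k})^\top-\Sigma_{i,\phi}(\mathbf{B})$. These are mean-zero and satisfy $\|A_k\|\le 2\bar\phi^2$ and $\|\sum_k\mathbb E A_k^2\|\le\tau\bar\phi^4$, by Assumption~\ref{assumption:uniform_bound_nonlinear}. Bernstein then gives $\|\widehat\Sigma-\Sigma_{i,\phi}(\mathbf{B})\|\lesssim\bar\phi^2\sqrt{\log(2r/\delta)/\tau}+\bar\phi^2\log(2r/\delta)/\tau$. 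The two branches of $\tau_{\mathsf{burn\text{-}in}}$, namely $16C^2\bar\phi^4/\mu_1^2$ and $4C\bar\phi^2/\mu_1$ times the logarithm, are precisely what make each term at most $\mu_1/4$. Union-bounding over $i\in\mathcal H$ and $t<T$ produces the $\log(2r|\mathcal H|T/\delta)$ factor.

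For the iterates, which lie in $\mathcal N(\mathbf{B}^\star,\rho_0)$ and are independent of the fresh head-fitting batch, we condition on $\mathbf{B}=\mathbf{B}^{(t)}$ and apply the bound at that point, so no uniform-in-$\mathbf{B}$ argument is needed. Weyl's inequality (Lemma~\ref{lem:weyl}) then gives $\lambda_{\min}(\widehat\Sigma)\ge\mu_1-\mu_1/2=\mu_1/2$. This also guarantees invertibility of $\widehat\Sigma$, so $\bar h$ is well defined, and the inequality of the lemma follows on this event.
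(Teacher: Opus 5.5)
Your proposal is correct and follows essentially the same route as the paper. Both arguments use the quadratic identities $L_i(h,\mathbf{B})-\min_{h'}L_i(h',\mathbf{B})=\mathrm{tr}\bigl(D\widehat\Sigma D^\top\bigr)$ and $\nabla_hL_i=2D\widehat\Sigma$, then $\widehat\Sigma^2\succeq\lambda_{\min}(\widehat\Sigma)\,\widehat\Sigma$, with $\lambda_{\min}(\widehat\Sigma)\ge\mu_1/2$ coming from the high-probability covariance event of Lemma~\ref{lem:G-cov-concentration}, obtained via matrix Bernstein and Weyl; your explicit remark that the inequality holds only on that event, together with your conditioning on $\mathbf{B}=\mathbf{B}^{(t)}$ with the fresh head-fitting batch, makes precise what the paper leaves implicit in ``on the covariance event.''
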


\begin{proof} We fix an honest client $i \in \mathcal H$ and $\mathbf{B}\in\mathcal N(\mathbf{B}^\star,\rho_0)$. Let $\widehat h_i(\mathbf{B})\in\arg\min_{h'}L_i(h',\mathbf{B})$. Since $h\mapsto L_i(h,\mathbf{B})$ is quadratic and $\nabla_hL_i(\widehat h_i(\mathbf{B}),\mathbf{B})=0$, we have

\begin{align*} 
L_i(h,\mathbf{B})-L_i(\widehat h_i(\mathbf{B}),\mathbf{B}) &= \mathrm{tr}\left[ \left(h- \widehat h_i(\mathbf{B})\right) \hat\Sigma_{i,\phi}(\mathbf{B}) \left(h- \widehat h_i(\mathbf{B})\right)^\top \right],\\ 
\nabla_h L_i(h,\mathbf{B}) &= 2\left(h-\widehat h_i(\mathbf{B})\right)\hat\Sigma_{i,\phi}(\mathbf{B}). \end{align*} 

Therefore, we can write 

\begin{align*} 
\left\|\nabla_h L_i(h,\mathbf{B})\right\|_F^2 &= 4 \mathrm{tr}\left[ \left(h- \widehat h_i(\mathbf{B})\right) \hat\Sigma_{i,\phi}(\mathbf{B})^2 \left(h-\widehat h_i(\mathbf{B})\right)^\top \right]. 
\end{align*}

On the covariance event of Lemma~\ref{lem:G-cov-concentration}, $\hat \Sigma_{i,\phi}(\mathbf{B})\succeq (\mu_1/2)I_r$, which implies

\begin{align*} 
\hat\Sigma_{i,\phi}(\mathbf{B})^2 \succeq \frac{\mu_1}{2}\hat\Sigma_{i,\phi}(\mathbf{B}). 
\end{align*} 

Thus, we have 

\begin{align*} 
\left\|\nabla_h L_i(h,\mathbf{B})\right\|_F^2 &\geq 2\mu_1 \mathrm{tr}\left[ \left(h-\widehat h_i(\mathbf{B})\right) \hat\Sigma_{i,\phi}(\mathbf{B}) \left(h- \widehat h_i(\mathbf{B}) \right)^\top \right]\\ &= 2\mu_1 \left( L_i(h,\mathbf{B})-\min_{h'}L_i(h',\mathbf{B}) \right),\end{align*} 
and by dividing both sides by $4$ completes the proof. \end{proof}

\begin{lemma}\label{lem:G-cov-concentration}
Suppose Assumption \ref{asm:G-pl-feature} holds. There exists a constant $C_4 > 0$ such that, for every honest client $i \in \mathcal{H}$, $\mathbf{B} \in \mathcal{N}(\mathbf{B}^\star,\rho_0)$, and $\delta \in (0,1)$, with probability at least $1-\delta$,
\begin{align}
\left\|\hat{\Sigma}_{i,\phi}(\mathbf{B}) - \Sigma_{i,\phi}(\mathbf{B})\right\| \;\le\; C_4 \bar{\phi}^2\left(\sqrt{\frac{\log(2r/\delta)}{\tau}} + \frac{\log(2r/\delta)}{\tau}\right). \label{eq:G-cov-concentration}\end{align}
Therefore, if
\begin{align}
\tau \geq \max\left\{\frac{16C_4^2\bar{\phi}^4}{\mu_1^2},\, \frac{4C_4\bar{\phi}^2}{\mu_1}\right\}\log(2r/\delta), \label{eq:G-tau-cov}\end{align}
with probability at least $1-\delta$, it holds that $\lambda_{\min}\left(\hat{\Sigma}_{i,\phi}(\mathbf{B})\right) \ge \mu_1/2$, and thus $\left\|(\hat{\Sigma}_{i,\phi}(\mathbf{B}))^{-1}\right\| \le 2/\mu_1$.
\end{lemma}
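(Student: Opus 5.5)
The plan is to apply the matrix Bernstein inequality (Theorem~\ref{thm:matrix_bernstein}) to the centered rank-one summands and then convert the operator-norm deviation into an eigenvalue lower bound via Weyl's inequality (Lemma~\ref{lem:weyl}). Fix an honest client $i$ and $\mathbf{B}\in\mathcal N(\mathbf{B}^\star,\rho_0)$. For $k\in[\tau]$ set
\begin{align*}
A_k:=\frac{1}{\tau}\left(\phi_{\mathbf{B}}(X_{i,k})\phi_{\mathbf{B}}(X_{i,k})^\top-\Sigma_{i,\phi}(\mathbf{B})\right)\in\mathbb R^{r\times r}.
\end{align*}
These matrices are symmetric, mean-zero and independent, because the head-fitting batch is i.i.d.\ from $\mathsf D_i$ and $\mathbf{B}$ is fixed (or independent of the batch). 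Their sum is $\widehat\Sigma_{i,\phi}(\mathbf{B})-\Sigma_{i,\phi}(\mathbf{B})$.

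Next I bound the Bernstein parameters using $\|\phi_{\mathbf{B}}(X)\|\le\bar\phi$ from Assumption~\ref{assumption:uniform_bound_nonlinear}. Both $\phi\phi^\top$ and $\Sigma_{i,\phi}$ are positive semidefinite with norm at most $\bar\phi^2$, so $\|A_k\|\le \bar\phi^2/\tau=:M$. For the variance, $\mathbb E[(\phi\phi^\top-\Sigma)^2]=\mathbb E[\|\phi\|^2\phi\phi^\top]-\Sigma^2\preceq\bar\phi^2\Sigma$. This gives $v\le \tau\cdot\bar\phi^4/\tau^2=\bar\phi^4/\tau$. With dimension $r$, Theorem~\ref{thm:matrix_bernstein} yields
\begin{align*}
\mathbb P\left(\|\widehat\Sigma_{i,\phi}-\Sigma_{i,\phi}\|\ge s\right)\le 2r\exp\left(-\frac{s^2}{2\bar\phi^4/\tau+\tfrac{2}{3}\bar\phi^2 s/\tau}\right).
\end{align*}
Setting the right-hand side to $\delta$ and solving the quadratic in $s$, I get $s\le \sqrt{2\bar\phi^4\log(2r/\delta)/\tau}+\tfrac{4}{3}\bar\phi^2\log(2r/\delta)/\tau$. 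This is \eqref{eq:G-cov-concentration} with, e.g., $C_4=2$.

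For the second claim, I will show that under \eqref{eq:G-tau-cov} each of the two terms is at most $\mu_1/4$. From $\tau\ge 16C_4^2\bar\phi^4\log(2r/\delta)/\mu_1^2$ it follows that $C_4\bar\phi^2\sqrt{\log(2r/\delta)/\tau}\le\mu_1/4$. From $\tau\ge 4C_4\bar\phi^2\log(2r/\delta)/\mu_1$ it follows that $C_4\bar\phi^2\log(2r/\delta)/\tau\le\mu_1/4$. Hence the deviation is at most $\mu_1/2$. Weyl's inequality then gives $\lambda_{\min}(\widehat\Sigma_{i,\phi})\ge\lambda_{\min}(\Sigma_{i,\phi})-\mu_1/2\ge\mu_1/2$, using Assumption~\ref{asm:G-pl-feature}, and so $\|\widehat\Sigma_{i,\phi}^{-1}\|\le 2/\mu_1$.

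There is no real obstacle here; the only care needed is the variance computation and the inversion of the Bernstein tail into the two-term form, which uses $\sqrt{a+b}\le\sqrt a+\sqrt b$ on the root of the quadratic. I also note that the statement is pointwise in a fixed $\mathbf{B}$. When it is later used at the random iterate $\mathbf{B}^{(t)}$, this is legitimate because the head batch is drawn independently of $\mathbf{B}^{(t)}$, so one can condition on $\mathcal F_t$ and then union bound over $i\in\mathcal H$ and $t<T$ (whence the $\log(2r|\mathcal H|T/\delta)$ in $\tau_{\mathsf{burn\text{-}in}}$).
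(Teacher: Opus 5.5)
Your proof is correct and follows the same route as the paper: matrix Bernstein (Theorem~\ref{thm:matrix_bernstein}) on the centered rank-one summands, then Weyl's inequality (Lemma~\ref{lem:weyl}) once \eqref{eq:G-tau-cov} forces the deviation below $\mu_1/2$. Your bounds $\|A_k\|\le\bar\phi^2/\tau$ and $v\le\bar\phi^4/\tau$ are slightly sharper than the paper's ($M=2\bar\phi^2$ and $v\le 4\bar\phi^4\tau$ before dividing by $\tau$), which only changes the value of $C_4$.
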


\begin{proof}
Fix an honest client $i \in \mathcal H$ and $\mathbf{B}$, and define the self-adjoint matrices $A_k := \phi_B(X_{i,k})\phi_B(X_{i,k})^\top - \Sigma_{i,\phi}(\mathbf{B})$ for all data samples $k=1,\ldots,\tau$.  Hence, $\mathbb{E}[A_k]=0$ and $\hat{\Sigma}_{i,\phi}(\mathbf{B}) - \Sigma_{i,\phi}(\mathbf{B}) = \frac{1}{\tau}\sum_{k=1}^\tau A_k$.

As $\|\phi_B(X)\|\le\bar\phi$ (Assumption \ref{assumption:uniform_bound_nonlinear}), we have $\|\phi_B(X_{i,k})\phi_B(X_{i,k})^\top\| \le \bar\phi^2$, and for any unit vector $u \in \mathbb{R}^r$, $u^\top \Sigma_{i,\phi}(\mathbf{B}) u = \mathbb{E}[(u^\top\phi_B(X))^2] \le \bar\phi^2$, which implies $\|\Sigma_{i,\phi}(\mathbf{B})\| \le \bar\phi^2$. Hence, we obtain
\begin{align*}
\|A_k\| \leq \|\phi_B(X_{i,k})\phi_B(X_{i,k})^\top\| + \|\Sigma_{i,\phi}(\mathbf{B})\| \le 2\bar\phi^2 := M.
\end{align*}

Moreover, $A_k^2 \preceq (\phi_B(X_{i,k})\phi_B(X_{i,k})^\top)^2 + 2\bar\phi^2 \phi_B(X_{i,k})\phi_B(X_{i,k})^\top + \bar\phi^4 I_r \preceq 4\bar\phi^4 I_r$, using $\|\phi_B(X_{i,k})\|^2 \le \bar\phi^2$. Hence $v := \left\|\sum_{k=1}^\tau \mathbb{E}[A_k^2]\right\| \le 4\bar\phi^4\tau$. Applying the matrix Bernstein inequality (Theorem \ref{thm:matrix_bernstein}) with $u := \log(2r/\delta)$ and $t := \sqrt{2vu} + \tfrac{2}{3}Mu$, we obtain that, with probability at least $1-\delta$,
\begin{align*}
\left\|\sum_{k=1}^\tau A_k\right\| \le \sqrt{8\bar\phi^4\tau\log(2r/\delta)} + \frac{4}{3}\bar\phi^2\log(2r/\delta),
\end{align*}
and by dividing by $\tau$ yields \eqref{eq:G-cov-concentration} after absorbing  constants into $C_4$. The condition \eqref{eq:G-tau-cov} ensures the right-hand side of \eqref{eq:G-cov-concentration} is at most $\mu_1/2$. To complete the proof we leverage Weyl's inequality (Lemma~\ref{lem:weyl}) applied to $\hat{\Sigma}_{i,\phi}(\mathbf{B}) = \Sigma_{i,\phi}(\mathbf{B}) + (\hat{\Sigma}_{i,\phi}(\mathbf{B})-\Sigma_{i,\phi}(\mathbf{B}))$.
\end{proof}

\noindent \textbf{Near-ERM Client-Specific Head Recovery.} We analyze the regime in which the local head step is run to convergence within each communication round, so that $h_i^{(t+1)}$ coincides with the empirical risk minimizer of $L_i(\cdot,\mathbf{B}^{(t)})$, that is
\begin{align*}
h_i^{(t+1)} \in \arg\min_{h} \frac{1}{\tau}\sum_{k=1}^\tau \left\|Y_{i,k} - h\phi_{\mathbf{B}^{(t)}}(X_{i,k})\right\|^2.
\end{align*}

\begin{lemma}\label{lem:G-feature-lipschitz}
For all $X$ with $\|X\| \le R$ and all $\mathbf{B}, \mathbf{B}'$ such that the segment $\{\mathbf{B}' + s(\mathbf{B}-\mathbf{B}') : s \in [0,1]\}$ lies in the region where the Jacobian bound $\|J_B(X)\|_{\infty\to2} \leq \bar{J}$ holds (Assumption \ref{assumption:uniform_bound_nonlinear}),
\begin{align*}
\left\|\phi_B(X) - \phi_{\mathbf{B}'}(X)\right\| \le \bar{J}\left\|\mathbf{B}-\mathbf{B}'\right\|.
\end{align*}
\end{lemma}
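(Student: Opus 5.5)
The plan is to apply the fundamental theorem of calculus along the segment joining $\mathbf{B}'$ to $\mathbf{B}$, and then bound the integrand using the Jacobian bound from Assumption~\ref{assumption:uniform_bound_nonlinear}. Set $\mathbf{B}_s := \mathbf{B}' + s(\mathbf{B}-\mathbf{B}')$ for $s\in[0,1]$, and define $g(s):=\phi_{\mathbf{B}_s}(X)$. By hypothesis every $\mathbf{B}_s$ lies in the region where the Jacobian bound holds. Since $J_{\mathbf{B}}(X)=\nabla_{\mathbf{B}}\phi_{\mathbf{B}}(X)$ exists there, the chain rule gives $g'(s)=J_{\mathbf{B}_s}(X)(\mathbf{B}-\mathbf{B}')$. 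Integrating yields
\begin{align*}
\phi_{\mathbf{B}}(X)-\phi_{\mathbf{B}'}(X)=\int_0^1 J_{\mathbf{B}_s}(X)(\mathbf{B}-\mathbf{B}')\,ds,
\end{align*}
and by Jensen (Lemma~\ref{lem:tri-jensen}) it follows that $\|\phi_{\mathbf{B}}(X)-\phi_{\mathbf{B}'}(X)\|\le \sup_{s\in[0,1]}\|J_{\mathbf{B}_s}(X)(\mathbf{B}-\mathbf{B}')\|$.

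The main obstacle is converting the $\|\cdot\|_{\infty\to2}$ bound of Definition~\ref{def:infty-to-two} into an Euclidean ($2\to2$) bound. The inequality $\|v\|_\infty\le\|v\|_2$ means that any $v$ with $\|v\|_2\le1$ also satisfies $\|v\|_\infty\le1$, so the unit Euclidean ball is contained in the unit $\ell_\infty$ ball. Taking the supremum over the smaller set gives $\|J\|_{2\to2}=\sup_{\|v\|_2\le1}\|Jv\|_2\le\sup_{\|v\|_\infty\le1}\|Jv\|_2=\|J\|_{\infty\to2}\le\bar J$. This lets us apply $\|J_{\mathbf{B}_s}(X)(\mathbf{B}-\mathbf{B}')\|\le\bar J\|\mathbf{B}-\mathbf{B}'\|$ uniformly in $s$, which closes the argument.

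Two routine regularity points must also be recorded. First, we need $s\mapsto\phi_{\mathbf{B}_s}(X)$ to be continuously differentiable so that the integral identity holds. This is implicit in the existence of $J_{\mathbf{B}}$ and its Lipschitz continuity, which Assumption~\ref{asm:G-jacobian-lip} provides near $\mathbf{B}^\star$. If only differentiability is available, we can instead apply the mean value inequality for vector-valued maps, which gives the same bound. Second, the assumption $\|X\|\le R$ plays no role beyond guaranteeing that $X$ lies in the domain where the uniform Jacobian bound was assumed.
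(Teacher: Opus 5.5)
Your proposal is correct and follows the paper's proof essentially step for step: the fundamental theorem of calculus along the segment $\mathbf{B}_s$, then the inclusion of the Euclidean unit ball in the $\ell_\infty$ unit ball, which gives $\|J_{\mathbf{B}_s}(X)\|_{2\to2}\le\|J_{\mathbf{B}_s}(X)\|_{\infty\to2}\le\bar J$ uniformly in $s$. Your regularity remarks (continuity of the Jacobian along the segment, or the vector-valued mean value inequality as a fallback) make explicit a point the paper's proof leaves implicit.
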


\begin{proof}
By the fundamental theorem of calculus applied to $s \mapsto \phi_{\mathbf{B}'+s(\mathbf{B}-\mathbf{B}')}(X)$,
\begin{align*}
\phi_B(X) - \phi_{\mathbf{B}'}(X) = \int_0^1 J_{\mathbf{B}'+s(\mathbf{B}-\mathbf{B}')}(X)\,(\mathbf{B}-\mathbf{B}')\;ds.
\end{align*}
As we have $\{u : \|u\|_2 \le 1\} \subseteq \{u : \|u\|_\infty \le 1\}$, we obtain $\|J_{\mathbf{B}'+s(\mathbf{B}-\mathbf{B}')}(X)\|_{2\to2} \le \|J_{\mathbf{B}'+s(\mathbf{B}-\mathbf{B}')}(X)\|_{\infty\to2} \le \bar{J}$ for every $s \in [0,1]$. Hence, we can write 
\begin{align*}
\left\|\phi_B(X)-\phi_{\mathbf{B}'}(X)\right\| \le \int_0^1 \left\|J_{\mathbf{B}'+s(\mathbf{B}-\mathbf{B}')}(X)\right\|_{2\to2}\,ds\;\left\|\mathbf{B}-\mathbf{B}'\right\| \le \bar{J}\left\|\mathbf{B}-\mathbf{B}'\right\|,
\end{align*}
which completes the proof.
\end{proof}

\begin{lemma}\label{lem:G-head-recovery}
Suppose Assumption~\ref{asm:G-pl-feature} holds and $\tau$ satisfies \eqref{eq:G-tau-cov} with $\mathbf{B} = \mathbf{B}^{(t)}$. Then, for every $\delta \in (0,1)$, with probability at least $1-2\delta$,
\begin{align}
\left\|h_i^{(t+1)} - h_i^\star\right\|^2 \leq \frac{8}{\mu_1^2}\left(H^2\bar{J}^2\bar{\phi}^2 \Delta_B^{(t)} + C_1^2\bar{\phi}^2\sigma^2 \left(\frac{rq + \log(1/\delta)}{\tau}\right)\right). \label{eq:G-head-recovery}\end{align}
\end{lemma}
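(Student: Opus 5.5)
The head update has a closed form, and I will decompose the head error around it. Write $\phi_k:=\phi_{\mathbf{B}^{(t)}}(X_{i,k})$, $\phi_k^\star:=\phi_{\mathbf{B}^\star}(X_{i,k})$ and $\widehat\Sigma:=\widehat\Sigma_{i,\phi}(\mathbf{B}^{(t)})$, all computed on the head-fitting batch. By Lemma~\ref{lem:G-cov-concentration}, applied with $\mathbf{B}=\mathbf{B}^{(t)}$ (which is valid since the iterates stay in $\mathcal N(\mathbf{B}^\star,\rho)$ and the batch is independent of $\mathbf{B}^{(t)}$), we have with probability at least $1-\delta$ that $\lambda_{\min}(\widehat\Sigma)\ge\mu_1/2$. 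On this event $\widehat\Sigma$ is invertible and the ERM is unique:
\begin{align*}
h_i^{(t+1)}=\Big(\frac1\tau\sum_{k=1}^\tau Y_{i,k}\phi_k^\top\Big)\widehat\Sigma^{-1}.
\end{align*}
Substituting $Y_{i,k}=h_i^\star\phi_k^\star+V_{i,k}$ and writing $h_i^\star\phi_k^\star=h_i^\star\phi_k+h_i^\star(\phi_k^\star-\phi_k)$ gives
\begin{align*}
h_i^{(t+1)}-h_i^\star=\underbrace{\Big(\frac1\tau\sum_{k}h_i^\star(\phi_k^\star-\phi_k)\phi_k^\top\Big)\widehat\Sigma^{-1}}_{=:T_1}+\underbrace{\Big(\frac1\tau\sum_k V_{i,k}\phi_k^\top\Big)\widehat\Sigma^{-1}}_{=:T_2}.
\end{align*}
Then $\|h_i^{(t+1)}-h_i^\star\|^2\le 2\|T_1\|^2+2\|T_2\|^2$, and $\|\widehat\Sigma^{-1}\|\le 2/\mu_1$ contributes the factor $4/\mu_1^2$ to each term, so that $2\cdot 4/\mu_1^2=8/\mu_1^2$ matches the target.

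\textbf{Representation term.} By Lemma~\ref{lem:G-feature-lipschitz}, whose segment condition holds because the ball is convex, $\|\phi_k^\star-\phi_k\|\le\bar J\|\mathbf{B}^{(t)}-\mathbf{B}^\star\|$. Together with $\|h_i^\star\|\le H$ and $\|\phi_k\|\le\bar\phi$, the triangle inequality gives $\|\frac1\tau\sum_k h_i^\star(\phi_k^\star-\phi_k)\phi_k^\top\|\le H\bar J\bar\phi\,\|\mathbf{B}^{(t)}-\mathbf{B}^\star\|$. Hence $\|T_1\|^2\le \frac{4}{\mu_1^2}H^2\bar J^2\bar\phi^2\Delta_B^{(t)}$. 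This step is deterministic on the covariance event.

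\textbf{Noise term.} Condition on the covariates of the head batch, and on $\mathbf{B}^{(t)}$ and the history. The matrices $M_k:=V_{i,k}\phi_k^\top\in\mathbb R^{q\times r}$ are then independent and mean-zero by Assumption~\ref{assm:subg}. For $\|U\|_F=1$ we have $\langle U,M_k\rangle=(U\phi_k)^\top V_{i,k}$ with $\|U\phi_k\|\le\bar\phi$, so $\|M_k\|_{\psi_2}\lesssim\bar\phi\sigma$. Lemma~\ref{lem:matrix_subg_mean} with $us=qr$ then gives, with probability at least $1-\delta$,
\begin{align*}
\Big\|\frac1\tau\sum_k M_k\Big\|_F^2\le C_1^2\bar\phi^2\sigma^2\,\frac{rq+\log(1/\delta)}{\tau},
\end{align*}
absorbing the $\psi_2$ constant into $C_1$. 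Since the operator norm is at most the Frobenius norm, $\|T_2\|^2\le\frac{4}{\mu_1^2}C_1^2\bar\phi^2\sigma^2(rq+\log(1/\delta))/\tau$. Because this conditional bound holds uniformly over the conditioning, it also holds unconditionally.

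A union bound over the covariance event and the noise event yields probability at least $1-2\delta$, and combining the two bounds gives \eqref{eq:G-head-recovery}. The main subtlety I expect is the conditioning in the noise step. It must be carried out given the covariates, so that $\phi_k$ (and therefore $\widehat\Sigma$) is fixed while $V_{i,k}$ remains conditionally sub-Gaussian. Assumption~\ref{assm:subg} is stated conditional on $X_{i,k}$, and samples are i.i.d., so this is legitimate. I would also need to check that $\|h_i^\star\|\le H$ is meant in the operator norm, which is the norm used above.
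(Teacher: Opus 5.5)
Your proposal is correct and follows the paper's proof essentially line for line. It uses the closed-form ERM on the covariance event of Lemma~\ref{lem:G-cov-concentration}, a representation-error term bounded via Lemma~\ref{lem:G-feature-lipschitz}, a noise term bounded via Lemma~\ref{lem:matrix_subg_mean} with dimensions $q\times r$, Young's inequality with $\|\widehat\Sigma^{-1}\|\le 2/\mu_1$, and a union bound for probability $1-2\delta$; your explicit check that $\langle U,M_k\rangle=(U\phi_k)^\top V_{i,k}$ and your care about conditioning on the fresh head batch make precise two points the paper only asserts.
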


\begin{proof}
Let us first write $\hat{\Sigma}_{i,\phi}^{(t)} := \hat{\Sigma}_{i,\phi}(\mathbf{B}^{(t)})$. In addition, as $h_i^{(t+1)}$ minimizes a quadratic objective with Hessian $2\hat{\Sigma}_{i,\phi}^{(t)} \succ 0$ (guaranteed with probability at least $1-\delta$ by Lemma \ref{lem:G-cov-concentration}), it admits the closed form:
\begin{align*}
h_i^{(t+1)} = \left(\frac{1}{\tau}\sum_{k=1}^\tau Y_{i,k}\phi_{\mathbf{B}^{(t)}}(X_{i,k})^\top\right)\left(\hat{\Sigma}_{i,\phi}^{(t)}\right)^{-1}.
\end{align*}
We then define $e_{i,k}^{(t)} := \phi_{\mathbf{B}^\star}(X_{i,k}) - \phi_{\mathbf{B}^{(t)}}(X_{i,k})$. Note that by substituting $Y_{i,k} = h_i^\star\phi_{\mathbf{B}^\star}(X_{i,k}) + V_{i,k} = h_i^\star\phi_{\mathbf{B}^{(t)}}(X_{i,k}) + h_i^\star e_{i,k}^{(t)} + V_{i,k}$ yields
\begin{align}
h_i^{(t+1)} - h_i^\star = \underbrace{h_i^\star\left(\frac{1}{\tau}\sum_{k=1}^\tau e_{i,k}^{(t)}\phi_{\mathbf{B}^{(t)}}(X_{i,k})^\top\right)\left(\hat{\Sigma}_{i,\phi}^{(t)}\right)^{-1}}_{\text{(I): representation error}} + \underbrace{\left(\frac{1}{\tau}\sum_{k=1}^\tau V_{i,k}\phi_{\mathbf{B}^{(t)}}(X_{i,k})^\top\right)\left(\hat{\Sigma}_{i,\phi}^{(t)}\right)^{-1}}_{\text{(II): finite-sample noise}}. \label{eq:G-head-decomp}\end{align}

\noindent \textbf{(I).} By Lemma~\ref{lem:G-feature-lipschitz}, we have that $\|e_{i,k}^{(t)}\| \leq \bar{J}\left\|\mathbf{B}^{(t)}-\mathbf{B}^\star\right\|$ for every data sample $k \in \{1,\ldots,\tau\}$. Hence, by using $\|\phi_{\mathbf{B}^{(t)}}(X_{i,k})\| \leq \bar\phi$ and the triangle inequality, we obtain
\begin{align*}
\left\|\frac{1}{\tau}\sum_{k=1}^\tau e_{i,k}^{(t)}\phi_{\mathbf{B}^{(t)}}(X_{i,k})^\top\right\| \leq \frac{1}{\tau}\sum_{k=1}^\tau \left\|e_{i,k}^{(t)}\right\|\left\|\phi_{\mathbf{B}^{(t)}}(X_{i,k})\right\| \leq \bar{J}\bar\phi\sqrt{\Delta_B^{(t)}}.
\end{align*}

\noindent \textbf{(II).} We first note that conditionally on $\mathbf{B}^{(t)}$ and $\{X_{i,k}\}_{k=1}^\tau$, the matrices $\{V_{i,k}\phi_{\mathbf{B}^{(t)}}(X_{i,k})^\top\}_{k=1}^\tau$ are independent, mean-zero, $\bar\phi^2\sigma^2$-sub-Gaussian (since $\|\phi_{\mathbf{B}^{(t)}}(X_{i,k})\|\leq \bar\phi$ and $V_{i,k}$ is $\sigma^2$-sub-Gaussian by Assumption \ref{assm:subg}). By applying Lemma \ref{lem:matrix_subg_mean} with dimensions $q \times r$, with probability at least $1-\delta$,
\begin{align*}
\left\|\frac{1}{\tau}\sum_{k=1}^\tau V_{i,k}\phi_{\mathbf{B}^{(t)}}(X_{i,k})^\top\right\| \leq C_1\bar\phi\sigma\sqrt{\frac{rq + \log(1/\delta)}{\tau}}.
\end{align*}
Therefore, by using the assumption that $\|h_i^\star\|\le H$ and the result $\left\|(\hat{\Sigma}_{i,\phi}^{(t)})^{-1}\right\| \leq 2/\mu_1$ (Lemma \ref{lem:G-cov-concentration}), along with Young's inequality applied to \eqref{eq:G-head-decomp}, we obtain \eqref{eq:G-head-recovery}.
\end{proof}

We now combine Lemmata \ref{lem:G-feature-lipschitz} and \ref{lem:G-head-recovery} to bound $\bar{E}^{(t)}$.

\begin{lemma}\label{lem:G-Etilde-decomp}
Suppose the conditions of Lemma \ref{lem:G-head-recovery} hold. Then, with probability at least $1-2\delta$, it holds that
\begin{align*}
\bar{E}^{(t)} \leq \left(2H^2\bar{J}^2 + \frac{16H^2\bar{J}^2\bar\phi^4}{\mu_1^2}\right)\Delta_B^{(t)} + \frac{16C_1^2\bar\phi^4\sigma^2}{\mu_1^2}\left(\frac{rq+\log(|\mathcal{H}|/\delta)}{\tau}\right).
\end{align*}
\end{lemma}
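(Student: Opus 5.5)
The plan is to bound each summand $\Delta_{i,k}^{(t)}=h_i^{(t+1)}\phi_{\mathbf{B}^{(t)}}(X_{i,k})-h_i^\star\phi_{\mathbf{B}^\star}(X_{i,k})$ pointwise, then average over $k$ and over $i\in\mathcal H$. First I add and subtract $h_i^\star\phi_{\mathbf{B}^{(t)}}(X_{i,k})$:
\begin{align*}
\Delta_{i,k}^{(t)}=\left(h_i^{(t+1)}-h_i^\star\right)\phi_{\mathbf{B}^{(t)}}(X_{i,k})+h_i^\star\left(\phi_{\mathbf{B}^{(t)}}(X_{i,k})-\phi_{\mathbf{B}^\star}(X_{i,k})\right).
\end{align*}
Young's inequality (Lemma~\ref{lem:young} with $\gamma=1$) then gives $\|\Delta_{i,k}^{(t)}\|^2\le 2\|h_i^{(t+1)}-h_i^\star\|^2\bar\phi^2+2H^2\|\phi_{\mathbf{B}^{(t)}}(X_{i,k})-\phi_{\mathbf{B}^\star}(X_{i,k})\|^2$. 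Here I use $\|\phi_{\mathbf{B}}(X)\|\le\bar\phi$ and $\|h_i^\star\|\le H$ from Assumption~\ref{assumption:uniform_bound_nonlinear}.

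For the second term I apply Lemma~\ref{lem:G-feature-lipschitz}. The segment between $\mathbf{B}^{(t)}$ and $\mathbf{B}^\star$ lies in the ball $\mathcal N(\mathbf{B}^\star,\rho)$ by convexity and the standing locality assumption, so this gives $2H^2\bar J^2\Delta_B^{(t)}$. This accounts for the first coefficient $2H^2\bar J^2$.

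For the first term I invoke Lemma~\ref{lem:G-head-recovery}. Multiplying its bound by $2\bar\phi^2$ gives
\begin{align*}
\frac{16H^2\bar J^2\bar\phi^4}{\mu_1^2}\Delta_B^{(t)}+\frac{16C_1^2\bar\phi^4\sigma^2}{\mu_1^2}\cdot\frac{rq+\log(1/\delta)}{\tau},
\end{align*}
which matches the stated constants exactly. Since these bounds do not depend on $k$, averaging over $k\in[\tau]$ gives the same bound for $\bar E_i^{(t)}$.

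The main care point is the probability bookkeeping when averaging over clients. Lemma~\ref{lem:G-head-recovery} holds for a single client with probability $1-2\delta$. To get a bound that holds simultaneously for all $i\in\mathcal H$, I apply it with $\delta/|\mathcal H|$ in place of $\delta$ and take a union bound. The failure probability stays at $2\delta$, and the logarithm becomes $\log(|\mathcal H|/\delta)$, as in the statement. The covariance event of Lemma~\ref{lem:G-cov-concentration} is part of the event of Lemma~\ref{lem:G-head-recovery}, so it is covered by the same union bound. The burn-in \eqref{eq:G-tau-cov} then has to hold with $\delta/|\mathcal H|$, which is consistent with the $\log(2r|\mathcal H|T/\delta)$ in $\tau_{\mathsf{burn\text{-}in}}$.

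Averaging the per-client bound over $i\in\mathcal H$ leaves the right-hand side unchanged and yields the claim.
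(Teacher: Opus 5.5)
Your proposal is correct and follows the paper's proof essentially line for line. It uses the same add-and-subtract of $h_i^\star\phi_{\mathbf{B}^{(t)}}(X_{i,k})$, Young's inequality with Lemma~\ref{lem:G-feature-lipschitz} for the representation term, and substitution of Lemma~\ref{lem:G-head-recovery} scaled by $2\bar\phi^2$ before averaging over $k$ and $i$. The one difference is that you make explicit the union bound with $\delta/|\mathcal H|$ (and the corresponding strengthening of the burn-in \eqref{eq:G-tau-cov}), which the paper leaves implicit when it writes $\log(|\mathcal H|/\delta)$.
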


\begin{proof}
For any feature vector $X$,  we can write
\begin{align*}
h_i^{(t+1)}\phi_{\mathbf{B}^{(t)}}(X) - h_i^\star\phi_{\mathbf{B}^\star}(X) = \left(h_i^{(t+1)}-h_i^\star\right)\phi_{\mathbf{B}^{(t)}}(X) + h_i^\star\left(\phi_{\mathbf{B}^{(t)}}(X)-\phi_{\mathbf{B}^\star}(X)\right),
\end{align*}
and by Young's inequality, for every data sample $k$, we have
\begin{align*}
\left\|h_i^{(t+1)}\phi_{\mathbf{B}^{(t)}}(X_{i,k}) - h_i^\star\phi_{\mathbf{B}^\star}(X_{i,k})\right\|^2 \le 2\bar\phi^2\left\|h_i^{(t+1)}-h_i^\star\right\|^2 + 2H^2\bar{J}^2\Delta_B^{(t)},
\end{align*}
by using assumptions $\|\phi_{\mathbf{B}^{(t)}}(X_{i,k})\|\leq \bar\phi$, $\|h_i^\star\|\le H$, and Lemma \ref{lem:G-feature-lipschitz}. Therefore, by averaging over samples $k$ yields
\begin{align*}
\bar{E}_i^{(t)} \leq 2\bar\phi^2\left\|h_i^{(t+1)}-h_i^\star\right\|^2 + 2H^2\bar{J}^2\Delta_B^{(t)}.
\end{align*}
Therefore, by substituting the bound \eqref{eq:G-head-recovery} of Lemma \ref{lem:G-head-recovery} for $\|h_i^{(t+1)}-h_i^\star\|^2$ and averaging over honest clients completes the proof.
\end{proof}

We then obtain
\begin{align}
\bar{G}^{(t)} \leq  p_1 \Delta_B^{(t)} + p_2 \left(\frac{p \vee rq + \log(|\mathcal H|/\delta)}{\tau}\right), \label{eq:G-Gt-bound}\end{align}
with 
\begin{align*}
p_1 := 8\bar{J}^2 q H^2\left(2H^2\bar{J}^2 + \frac{16H^2\bar{J}^2\bar\phi^4}{\mu_1^2}\right) \text{ and } p_2 := 8\bar{J}^2 qH^2\left(8C_1 + \frac{16C_1^2\bar\phi^4}{\mu_1^2}\right).
\end{align*}
By averaging \eqref{eq:G-Gt-bound} over iterations $t = 0,\ldots,T-1$, we have
\begin{align}
\bar{G}_T &\leq p_1 \Delta_{\mathbf{B},T} + p_2\sigma^2 \left(\frac{p \vee rq + \log(|\mathcal H|T/\delta)}{\tau}\right) \notag \\
&\leq  p_1 \Delta_{\mathbf{B},T+1} +  p_1 \frac{\Delta^{(0)}_{\mathbf{B}}}{T} + p_2\sigma^2 \left(\frac{p \vee rq + \log(|\mathcal H|T/\delta)}{\tau}\right). \label{eq:G-GT-from-DeltaBT}\end{align}
Hence, it remains to bound $\Delta_{\mathbf{B},T}$. \\

The nonlinear representation parameter $\mathbf{B}$ is unconstrained and follows $\mathbf{B}^{(t+1)} = \mathbf{B}^{(t)} - \eta \mathsf{F}^{(t)}$. Therefore, we directly control $\Delta_B^{(t)} = \|\mathbf{B}^{(t)}-\mathbf{B}^\star\|^2$. To do so, we first present the following auxiliary results.

\begin{lemma}\label{lem:G-taylor}
Suppose Assumption \ref{asm:G-jacobian-lip} holds, for all $X$ and $\mathbf{B} \in \mathcal{N}(\mathbf{B}^\star,\rho_0)$,
\begin{align*}
\left\|\phi_B(X) - \phi_{\mathbf{B}^\star}(X) - J_{\mathbf{B}^\star}(X)(\mathbf{B}-\mathbf{B}^\star)\right\| \le \frac{L_4}{2}\left\|\mathbf{B}-\mathbf{B}^\star\right\|^2.
\end{align*}
\end{lemma}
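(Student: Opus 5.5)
The plan is a standard integral-form Taylor expansion along the segment from $\mathbf{B}^\star$ to $\mathbf{B}$, using the Jacobian Lipschitz condition of Assumption~\ref{asm:G-jacobian-lip}. Fix $X$ and $\mathbf{B}\in\mathcal N(\mathbf{B}^\star,\rho_0)$, and set $\mathbf{B}_s:=\mathbf{B}^\star+s(\mathbf{B}-\mathbf{B}^\star)$ for $s\in[0,1]$. Since the ball $\mathcal N(\mathbf{B}^\star,\rho_0)$ is convex, every $\mathbf{B}_s$ lies in it, with $\|\mathbf{B}_s-\mathbf{B}^\star\|=s\|\mathbf{B}-\mathbf{B}^\star\|\leq\rho_0$. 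Hence the Lipschitz bound on $J$ applies to each pair $(\mathbf{B}_s,\mathbf{B}^\star)$.

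Exactly as in the proof of Lemma~\ref{lem:G-feature-lipschitz}, the fundamental theorem of calculus applied to $s\mapsto\phi_{\mathbf{B}_s}(X)$ gives
\begin{align*}
\phi_{\mathbf{B}}(X)-\phi_{\mathbf{B}^\star}(X)-J_{\mathbf{B}^\star}(X)(\mathbf{B}-\mathbf{B}^\star)=\int_0^1\big(J_{\mathbf{B}_s}(X)-J_{\mathbf{B}^\star}(X)\big)(\mathbf{B}-\mathbf{B}^\star)\,ds.
\end{align*}
Here I subtract $J_{\mathbf{B}^\star}(X)(\mathbf{B}-\mathbf{B}^\star)=\int_0^1 J_{\mathbf{B}^\star}(X)(\mathbf{B}-\mathbf{B}^\star)\,ds$ inside the integral.

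Next I take norms, move the norm inside the integral (Lemma~\ref{lem:tri-jensen}), and use $\|J_{\mathbf{B}_s}(X)-J_{\mathbf{B}^\star}(X)\|_{2\to2}\leq L_4\|\mathbf{B}_s-\mathbf{B}^\star\|=L_4 s\|\mathbf{B}-\mathbf{B}^\star\|$. This bounds the remainder by
\begin{align*}
\int_0^1 L_4 s\|\mathbf{B}-\mathbf{B}^\star\|^2\,ds=\frac{L_4}{2}\|\mathbf{B}-\mathbf{B}^\star\|^2.
\end{align*}

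The only step that needs care is justifying the fundamental theorem of calculus. This requires $s\mapsto\phi_{\mathbf{B}_s}(X)$ to be continuously differentiable with derivative $J_{\mathbf{B}_s}(X)(\mathbf{B}-\mathbf{B}^\star)$. That holds because $J$ is the Jacobian of $\phi$ in $\mathbf{B}$ and is continuous on the ball, since it is Lipschitz there by Assumption~\ref{asm:G-jacobian-lip}. With that in place, everything else is routine.
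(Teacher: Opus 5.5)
Your proposal is correct and follows essentially the same route as the paper: write the remainder as $\int_0^1 \big(J_{\mathbf{B}^\star+s(\mathbf{B}-\mathbf{B}^\star)}(X)-J_{\mathbf{B}^\star}(X)\big)(\mathbf{B}-\mathbf{B}^\star)\,ds$, apply the $L_4$-Lipschitz bound on the Jacobian, and use $\int_0^1 s\,ds=1/2$. Your two additions are details the paper leaves implicit and are worth keeping: convexity of $\mathcal N(\mathbf{B}^\star,\rho_0)$ keeps the segment inside the region where the Lipschitz bound holds, and continuity of $J$ justifies the fundamental theorem of calculus.
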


\begin{proof}
By the integral form of the remainder, we have that
\begin{align*}
\phi_B(X)-\phi_{\mathbf{B}^\star}(X) - J_{\mathbf{B}^\star}(X)(\mathbf{B}-\mathbf{B}^\star) = \int_0^1 \left[J_{\mathbf{B}^\star+s(\mathbf{B}-\mathbf{B}^\star)}(X) - J_{\mathbf{B}^\star}(X)\right](\mathbf{B}-\mathbf{B}^\star) ds,
\end{align*}
and the claim follows from Assumption~\ref{asm:G-jacobian-lip} and the fact that $\int_0^1 s ds = 1/2$.
\end{proof}

Let the expected honest-average loss be
\begin{align*}
\mathbb{E}_X L_{\mathcal H}\left(\{h_i\}_{i\in\mathcal H},\mathbf{B}\right) := \frac{1}{|\mathcal H|} \sum_{i\in\mathcal H} \mathbb E_X\left[\left\|
h_i\phi_B(X)-h_i^\star\phi_{\mathbf{B}^\star}(X)\right\|^2\right].
\end{align*}

\begin{lemma}\label{lem:G-rsc}
Suppose Assumptions \ref{asm:G-jacobian-lip} and \ref{asm:task diversity} hold. There exists a constant
\begin{align*}
c_1 := \frac{3}{2}\bar{J}H^2L_4 + \frac{1}{2}H^2L_4^2\rho_0, \text{ for some } \rho_0>0
\end{align*}
such that, for all $\mathbf{B} \in \mathcal{N}(\mathbf{B}^\star, \mu_3 / c_1)$, it holds that
\begin{align*}
\left\langle \nabla_\mathbf{\mathbf{B}}\mathbb{E}_X{L}_{\mathcal H}\left(\{h_i^\star\}_{i\in \mathcal H}, \mathbf{B}\right), \mathbf{B}-\mathbf{B}^\star\right\rangle \geq  \mu_3\left\|\mathbf{B}-\mathbf{B}^\star\right\|^2.
\end{align*}
\end{lemma}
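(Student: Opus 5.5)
The plan is a direct Taylor expansion of the population gradient around $\mathbf{B}^\star$. The main curvature term is then identified with the matrix defining $\mu_3$, and the remainders are cubic and quartic in $\|\mathbf{B}-\mathbf{B}^\star\|$. Throughout, write $D:=\mathbf{B}-\mathbf{B}^\star$ and take $\mathbf{B}\in\mathcal N(\mathbf{B}^\star,\rho_0)\cap\mathcal N(\mathbf{B}^\star,\mu_3/c_1)$, so that the Jacobian-Lipschitz part of Assumption~\ref{asm:G-jacobian-lip} and Lemma~\ref{lem:G-taylor} apply (if $\mu_3/c_1>\rho_0$ one simply restricts to the $\rho_0$-ball, as the paper does via $\rho$). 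Since $\phi$, $J$ and the heads are uniformly bounded, differentiation can be exchanged with $\mathbb E_X$. This gives
\begin{align*}
\nabla_{\mathbf{B}}\mathbb E_X L_{\mathcal H}(\{h_i^\star\}_i,\mathbf{B})=\frac{2}{|\mathcal H|}\sum_{i\in\mathcal H}\mathbb E_X\left[J_{\mathbf{B}}^\top(X)h_i^{\star\top}h_i^\star\left(\phi_{\mathbf{B}}(X)-\phi_{\mathbf{B}^\star}(X)\right)\right].
\end{align*}

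Next I split the two factors. First, by Lemma~\ref{lem:G-taylor}, $\phi_{\mathbf{B}}(X)-\phi_{\mathbf{B}^\star}(X)=J_{\mathbf{B}^\star}(X)D+R(X)$ with $\|R(X)\|\le \frac{L_4}{2}\|D\|^2$. Second, $J_{\mathbf{B}}(X)D=J_{\mathbf{B}^\star}(X)D+E(X)D$ with $\|E(X)\|_{2\to2}\le L_4\|D\|$. The inner product with $D$ then expands as $\frac{2}{|\mathcal H|}\sum_i\mathbb E_X[(J_{\mathbf{B}^\star}D+ED)^\top h_i^{\star\top}h_i^\star(J_{\mathbf{B}^\star}D+R)]$. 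Its leading piece is $2D^\top\big(\frac{1}{|\mathcal H|}\sum_i\mathbb E_X[J_{\mathbf{B}^\star}^\top h_i^{\star\top}h_i^\star J_{\mathbf{B}^\star}]\big)D$, which is at least $2\mu_3\|D\|^2$ by Assumption~\ref{asm:task diversity}.

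The three cross terms are bounded in absolute value using $\|h_i^\star\|\le H$, $\|J\|_{2\to2}\le\|J\|_{\infty\to2}\le\bar J$ (as in the proof of Lemma~\ref{lem:G-feature-lipschitz}), and the bounds on $R$ and $E$:
\begin{itemize}
\item $(ED)^\top h^{\star\top}h^\star J_{\mathbf{B}^\star}D$ is at most $L_4\bar JH^2\|D\|^3$;
\item $(J_{\mathbf{B}^\star}D)^\top h^{\star\top}h^\star R$ is at most $\frac12\bar JH^2L_4\|D\|^3$;
\item $(ED)^\top h^{\star\top}h^\star R$ is at most $\frac12 H^2L_4^2\|D\|^4$.
\end{itemize}
Multiplying by $2$ and using $\|D\|\le\rho_0$ on the quartic term, the total error is at most $\big(3\bar JH^2L_4+H^2L_4^2\rho_0\big)\|D\|^3=2c_1\|D\|^3$. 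This is exactly where the stated constant $c_1$ comes from.

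Combining, the inner product is at least $2\mu_3\|D\|^2-2c_1\|D\|^3$. Since $\|D\|\le\mu_3/c_1$, we have $c_1\|D\|\le\mu_3$, so the inner product is at least $\mu_3\|D\|^2$, which is the claim. The only mildly delicate step is the bookkeeping of the remainder: using the Jacobian at $\mathbf{B}$ rather than at $\mathbf{B}^\star$ in the gradient produces the extra $E(X)$ cross terms. I also have to make sure the radius is restricted so that both the Lipschitz-Jacobian bound (valid within $\rho_0$) and the final absorption (valid within $\mu_3/c_1$) hold simultaneously; apart from that, all steps are routine.
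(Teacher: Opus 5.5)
Your argument matches the paper's proof step for step: the same Taylor splitting of $\phi_{\mathbf{B}}-\phi_{\mathbf{B}^\star}$ and of $J_{\mathbf{B}}D$, the same three cross terms with the same bounds, and the same constant $c_1$. The remainder bookkeeping is fine. The gap is in the final absorption step, where you reproduce a factor-of-two slip that is also in the paper's proof. You have $\langle\nabla_{\mathbf{B}}\mathbb{E}_X L_{\mathcal H}(\{h_i^\star\}_i,\mathbf{B}),D\rangle\ge 2\mu_3\|D\|^2-2c_1\|D\|^3=2\|D\|^2(\mu_3-c_1\|D\|)$. The hypothesis $c_1\|D\|\le\mu_3$ only makes this right-hand side nonnegative; at $\|D\|=\mu_3/c_1$ it equals $0$. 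To conclude $\ge\mu_3\|D\|^2$ you need $2(\mu_3-c_1\|D\|)\ge\mu_3$, i.e.\ $\|D\|\le\mu_3/(2c_1)$.

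This is not just a loose estimate: the inequality actually fails on the region you work in. Take $p=r=q=1$, a scalar parameter $b$ with $b^\star=0$, $h_i^\star=1$, and $\phi_b(X)\equiv g(b):=1+b-\frac{1}{2}b|b|$, so that $J_b=g'(b)=1-|b|$. With $\rho_0=1/2$ this gives $\bar J=H=L_4=1$ and $g\ge 5/8$ on the ball (so $\mu_1>0$). It also gives $\mu_3=g'(0)^2=1$ and $c_1=7/4$, hence $\mu_3/c_1=4/7>\rho_0$. Now take $b=1/2$, which lies in $\mathcal{N}(b^\star,\rho_0)\cap\mathcal{N}(b^\star,\mu_3/c_1)$. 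The population objective is $(g(b)-g(0))^2$, and the inner product equals $2g'(\frac{1}{2})\,(g(\frac{1}{2})-g(0))\cdot\frac{1}{2}=\frac{3}{16}$. This is strictly less than $\frac{1}{4}=\mu_3|b-b^\star|^2$. In this example both cubic cross terms are attained with equality and with the unfavorable sign, so your remainder bounds are tight and only the radius is wrong. The fix is to state and prove the lemma on $\mathcal{N}(\mathbf{B}^\star,\min\{\rho_0,\mu_3/(2c_1)\})$, where your argument goes through verbatim. Correspondingly, the radius $\rho=\min\{\rho_0,\mu_3/\bar p\}$ used in the main text should become $\min\{\rho_0,\mu_3/(2\bar p)\}$.
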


\begin{proof}
We first write
\begin{align*}
\nabla_\mathbf{\mathbf{B}} \mathbb{E}_X{L}_i(h_i^\star,\mathbf{B}) = 2 \mathbb{E}_{X\sim\mathsf P_i}\left[J_B^\top(X) h_i^{\star\top}h_i^\star\,\left(\phi_B(X)-\phi_{\mathbf{B}^\star}(X)\right)\right].
\end{align*}
which implies
\begin{align*}
\left\langle \nabla_\mathbf{\mathbf{B}} \mathbb{E}_X{L}_i(h_i^\star,\mathbf{B}), \mathbf{B}-\mathbf{B}^\star\right\rangle = 2 \mathbb{E}_X\left[\left(\phi_B(X)-\phi_{\mathbf{B}^\star}(X)\right)^\top h_i^{\star\top}h_i^\star J_B(X)(\mathbf{B}-\mathbf{B}^\star)\right].
\end{align*}

In addition, we write $\phi_B(X)-\phi_{\mathbf{B}^\star}(X) = J_{\mathbf{B}^\star}(X)(\mathbf{B}-\mathbf{B}^\star) + R_1(X)$, where $R_1(X):= \phi_B(X) - \phi_{\mathbf{B}^\star}(X) - J_{\mathbf{B}^\star}(X)(\mathbf{B}-\mathbf{B}^\star)$. Therefore, we have $\|R_1(X)\| \leq \tfrac{L_4}{2}\|\mathbf{B}-\mathbf{B}^\star\|^2$ (i.e., from Lemma \ref{lem:G-taylor}). 

Moreover, $J_B(X)(\mathbf{B}-\mathbf{B}^\star) = J_{\mathbf{B}^\star}(X)(\mathbf{B}-\mathbf{B}^\star) + R_2(X)$, where $R_2(X):= (J_B(X) - J_{\mathbf{B}^\star}(X))(\mathbf{B}-\mathbf{B}^\star)$ with $\|R_2(X)\| \leq L_4\|\mathbf{B}-\mathbf{B}^\star\|^2$ (from Assumption \ref{asm:G-jacobian-lip} directly). By expanding the bilinear form, we have
\begin{align*}
\left(\phi_B(X)-\phi_{\mathbf{B}^\star}(X)\right)^\top h_i^{\star\top}h_i^\star J_B(X)(\mathbf{B}-\mathbf{B}^\star) &= (\mathbf{B}-\mathbf{B}^\star)^\top J_{\mathbf{B}^\star}^\top(X)h_i^{\star\top}h_i^\star J_{\mathbf{B}^\star}(X)(\mathbf{B}-\mathbf{B}^\star) \\
&+ (\mathbf{B}-\mathbf{B}^\star)^\top J_{\mathbf{B}^\star}^\top(X)h_i^{\star\top}h_i^\star R_2(X)\\
&+ R_1(X)^\top h_i^{\star\top}h_i^\star J_{\mathbf{B}^\star}(X)(\mathbf{B}-\mathbf{B}^\star)\\
&+ R_1(X)^\top h_i^{\star\top}h_i^\star R_2(X),
\end{align*}
we can then use  $\|J_{\mathbf{B}^\star}(X)\| \leq \bar{J}$, $\|h_i^{\star\top}h_i^\star\| \leq H^2$, and the three cross terms are bounded in norm by, respectively,
\begin{align*}
\bar{J}H^2L_4\left\|\mathbf{B}-\mathbf{B}^\star\right\|^3,\; \frac{1}{2}\bar{J}H^2L_4\left\|\mathbf{B}-\mathbf{B}^\star\right\|^3, \text{ and } \frac{1}{2}H^2L_4^2\left\|\mathbf{B}-\mathbf{B}^\star\right\|^4,
\end{align*}
and thus, for $\|\mathbf{B}-\mathbf{B}^\star\|\le\rho_0$, their sum is bounded by $c_1\|\mathbf{B}-\mathbf{B}^\star\|^3$ with $c_1$ as defined above. For the leading quadratic term, we have
\begin{align*}
\frac{1}{|\mathcal H|}
\sum_{i\in\mathcal H}
\mathbb E_X
\left[ (\mathbf{B}-\mathbf{B}^\star)^\top J_{\mathbf{B}^\star}^\top(X) h_i^{\star\top}h_i^\star J_{\mathbf{B}^\star}(X) (\mathbf{B}-\mathbf{B}^\star)
\right] \geq \mu_3
\left\|\mathbf{B}-\mathbf{B}^\star\right\|^2.
\end{align*}
Therefore, after averaging over honest clients, the leading quadratic term satisfies
\begin{align*}
&\frac{1}{|\mathcal H|}
\sum_{i\in\mathcal H}
\mathbb E_X \left[ (\mathbf{B}-\mathbf{B}^\star)^\top J_{\mathbf{B}^\star}^\top(X) h_i^{\star\top}h_i^\star J_{\mathbf{B}^\star}(X) (\mathbf{B}-\mathbf{B}^\star)
\right]  \geq \mu_3 \left\|\mathbf{B}-\mathbf{B}^\star\right\|^2.
\end{align*}
Combining this leading quadratic lower bound with the cross-terms above yields
\begin{align*}
\left\langle \nabla_\mathbf{\mathbf{B}}\mathbb{E}_X L_{\mathcal H}(\{h_i^\star\}_i,\mathbf{B}), \mathbf{B}-\mathbf{B}^\star
\right\rangle \geq  2\mu_3
\left\|\mathbf{B}-\mathbf{B}^\star\right\|^2 - 2c_1
\left\|\mathbf{B}-\mathbf{B}^\star\right\|^3,
\end{align*}
and we obtain
\begin{align*}
\left\langle
\nabla_\mathbf{\mathbf{B}}\mathbb{E}_X L_{\mathcal H}(\{h_i^\star\}_i,\mathbf{B}),
\mathbf{B}-\mathbf{B}^\star \right\rangle
\ge 2\mu_3 \left\|\mathbf{B}-\mathbf{B}^\star\right\|^2 - 2c_1
\left\|\mathbf{B}-\mathbf{B}^\star\right\|^3.
\end{align*}
In particular, for any $\mathbf{B}$ satisfying
\begin{align*}
\left\|\mathbf{B}-\mathbf{B}^\star\right\| \le \rho_1 := \frac{\mu_3}{c_1},
\end{align*}
then
\begin{align*}
\left\langle
\nabla_\mathbf{\mathbf{B}}\mathbb{E}_X L_{\mathcal H}(\{h_i^\star\}_i,\mathbf{B}),
\mathbf{B}-\mathbf{B}^\star \right\rangle \ge \mu_3
\left\|\mathbf{B}-\mathbf{B}^\star\right\|^2,
\end{align*}
which completes the proof.
\end{proof}

\begin{lemma}\label{lem:G-empirical-curvature-learned-heads}
Suppose that the conditions of Lemma \ref{lem:G-rsc} hold with $\mu_3 \geq \frac{4 L_3H \bar{J}\bar{\phi}}{\mu_1}$. Then, for every $\mathbf{B} \in \mathcal{N}(\mathbf{B}^\star,\mu_3/c_1)$, with probability at least $1-\delta$, it holds that
\begin{align*}
\left\langle \nabla_\mathbf{\mathbf{B}} L_{\mathcal H}\left(\{h_i^{(t+1)}\}_{i\in\mathcal H},\mathbf{B}\right),
\mathbf{B}-\mathbf{B}^\star \right\rangle &\geq \frac{\mu_3}{4}\|\mathbf{B}-\mathbf{B}^\star\|^2 - \frac{2C_1\bar J^2H^2}{\mu_3} \left( H\bar{\phi}+\sigma \right)^2  \left(\frac{p+\log\left(2/\delta\right)}
{\tau|\mathcal H|}\right) \\
& - \frac{8L_3^2C_1\bar\phi^2\sigma^2}{\mu_3\mu_1^2}\left(\frac{rq + \log(1/\delta)}{\tau}\right).
\end{align*}
\end{lemma}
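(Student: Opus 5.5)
We decompose the empirical honest gradient with learned heads into three pieces and handle each separately, following the roadmap of Appendix~\ref{app:proof_roadmap}. Write
\begin{align*}
\nabla_{\mathbf{B}} L_{\mathcal H}(\{h_i^{(t+1)}\}_i,\mathbf{B})
&= \underbrace{\nabla_{\mathbf{B}}\mathbb E_X L_{\mathcal H}(\{h_i^\star\}_i,\mathbf{B})}_{(a)}
+\underbrace{\nabla_{\mathbf{B}}\mathbb E_X L_{\mathcal H}(\{h_i^{(t+1)}\}_i,\mathbf{B})-\nabla_{\mathbf{B}}\mathbb E_X L_{\mathcal H}(\{h_i^\star\}_i,\mathbf{B})}_{(b)}\\
&\quad+\underbrace{\nabla_{\mathbf{B}} L_{\mathcal H}(\{h_i^{(t+1)}\}_i,\mathbf{B})-\nabla_{\mathbf{B}}\mathbb E_X L_{\mathcal H}(\{h_i^{(t+1)}\}_i,\mathbf{B})}_{(c)},
\end{align*}
and take the inner product of each term with $\mathbf{B}-\mathbf{B}^\star$.

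For term (a), Lemma~\ref{lem:G-rsc} gives a contribution of at least $\mu_3\|\mathbf{B}-\mathbf{B}^\star\|^2$ on $\mathcal N(\mathbf{B}^\star,\mu_3/c_1)$.

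For term (b), we use the $L_3$-Lipschitz-in-heads part of Assumption~\ref{asm:G-jacobian-lip}, which gives $\|(b)\|\le \frac{L_3}{|\mathcal H|}\sum_i\|h_i^{(t+1)}-h_i^\star\|$. Taking square roots in Lemma~\ref{lem:G-head-recovery} (with $\sqrt{a+b}\le\sqrt a+\sqrt b$) bounds each head error by
\[
\frac{2\sqrt2}{\mu_1}\Bigl(H\bar J\bar\phi\|\mathbf{B}^{(t)}-\mathbf{B}^\star\|+C_1\bar\phi\sigma\sqrt{(rq+\log(1/\delta))/\tau}\Bigr).
\]
Here we evaluate at $\mathbf{B}=\mathbf{B}^{(t)}$, the case used downstream. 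The first piece yields $-\frac{cL_3H\bar J\bar\phi}{\mu_1}\|\mathbf{B}-\mathbf{B}^\star\|^2$. The hypothesis $\mu_3\ge 4L_3H\bar J\bar\phi/\mu_1$ absorbs it into a constant fraction of $\mu_3\|\mathbf{B}-\mathbf{B}^\star\|^2$, possibly after adjusting the numerical constant. For the second piece we apply Young's inequality, $ab\le \frac{\mu_3}{8}a^2+\frac{2}{\mu_3}b^2$, which produces the $\frac{L_3^2C_1\bar\phi^2\sigma^2}{\mu_3\mu_1^2}\frac{rq+\log(1/\delta)}{\tau}$ term at the cost of $\frac{\mu_3}{8}\|\mathbf{B}-\mathbf{B}^\star\|^2$.

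For term (c), the empirical-process error, we write it as an average over $|\mathcal H|\tau$ independent mean-zero vectors
\[
\tfrac{2}{\tau|\mathcal H|}\textstyle\sum_{i,k}\bigl(J_{\mathbf{B}}^\top(X_{i,k})h_i^{(t+1)\top}r_{i,k}-\mathbb E[\cdot]\bigr).
\]
This uses the sample-splitting convention: the heads are fixed by the independent head batch, so conditionally these summands are independent across $i$ and $k$. Each summand is sub-Gaussian with $\psi_2$-norm at most $\lesssim\bar J H(H\bar\phi+\sigma)$. This follows from \eqref{eq:a_bound}-type bounds, $\|r_{i,k}\|\le\|h\phi\|+\|h^\star\phi^\star\|+\|V\|$, and Assumption~\ref{assm:subg}. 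Applying Lemma~\ref{lem:matrix_subg_mean} in dimension $p$ with $m=\tau|\mathcal H|$ gives
\[
\|(c)\|\lesssim \bar J H(H\bar\phi+\sigma)\sqrt{(p+\log(2/\delta))/(\tau|\mathcal H|)}.
\]
Young's inequality again gives the contribution $-\frac{\mu_3}{8}\|\mathbf{B}-\mathbf{B}^\star\|^2-\frac{2}{\mu_3}\|(c)\|^2$.

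Summing, the curvature left over is $\mu_3(1-\tfrac14-\tfrac18-\tfrac18)-\dots\ge\mu_3/4$ after bookkeeping the constants, and a union bound over the covariance, head-noise and gradient-noise events gives the probability guarantee.

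The main obstacle is term (c). Strictly, $\mathbf{B}=\mathbf{B}^{(t)}$ depends on past data, but not on the current representation batch, so conditioning on $\mathcal F_t$ and the head batch makes the summands genuinely independent. A statement that is uniform in $\mathbf{B}$ would instead need a net argument over $\mathcal N(\mathbf{B}^\star,\rho)$, using the $L_4$-Lipschitz Jacobian. I would handle the pointwise-in-$\mathbf{B}^{(t)}$ version by conditioning, which suffices for the recursion in Theorem~\ref{theorem:Delta_BT_nonlinear}. A secondary subtlety is that the residual is not centred: it contains the bounded bias $h\phi_{\mathbf{B}}-h^\star\phi_{\mathbf{B}^\star}$, and this bias is why the variance proxy carries $(H\bar\phi+\sigma)^2$ rather than only $\sigma^2$.
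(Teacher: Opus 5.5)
Your decomposition into (a), (b), (c) and your tools (Lemma~\ref{lem:G-rsc}, the $L_3$-Lipschitz bound, Young's inequality, Lemma~\ref{lem:matrix_subg_mean} for the empirical-process term) are exactly the paper's proof. Your restriction to $\mathbf{B}=\mathbf{B}^{(t)}$, and the conditioning on $\mathcal F_t$ and the head batch in (c), are also what the paper's argument tacitly uses: it silently replaces $\|\mathbf{B}^{(t)}-\mathbf{B}^\star\|$ by $\|\mathbf{B}-\mathbf{B}^\star\|$ in the head error.

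The step that does not close as written is the absorption in term (b). You take square roots of the \emph{squared} bound \eqref{eq:G-head-recovery}, so your head-error coefficient is $2\sqrt{2}/\mu_1$. The curvature lost in (b) is then $\frac{2\sqrt{2}L_3H\bar J\bar\phi}{\mu_1}\|\mathbf{B}-\mathbf{B}^\star\|^2$. The hypothesis $\mu_3\geq 4L_3H\bar J\bar\phi/\mu_1$ only bounds this by $\frac{\sqrt{2}}{2}\mu_3\|\mathbf{B}-\mathbf{B}^\star\|^2$. After your two Young steps at $\mu_3/8$ each, what is left is $\mu_3\bigl(1-\frac{\sqrt{2}}{2}-\frac14\bigr)\approx 0.04\,\mu_3$, not $\mu_3/4$. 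Your line $\mu_3(1-\frac14-\frac18-\frac18)$ does not match your own constants. Both the hypothesis and the conclusion are fixed in the statement, so ``adjusting the numerical constant'' is not available. Shrinking the Young fractions to recover $\mu_3/4$ would inflate the two subtracted statistical terms well past the stated $2/\mu_3$-type coefficients; your $\sigma^2$ term already comes out with $16$ in place of $8$.

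The fix, which is what the paper does, is to bypass the squared statement. Apply the triangle inequality directly to the decomposition \eqref{eq:G-head-decomp}, using $\|h_i^\star\|\leq H$, Lemma~\ref{lem:G-feature-lipschitz}, and $\|(\hat\Sigma_{i,\phi}^{(t)})^{-1}\|\leq 2/\mu_1$:
\[
\|h_i^{(t+1)}-h_i^\star\|\leq \frac{2H\bar J\bar\phi}{\mu_1}\|\mathbf{B}^{(t)}-\mathbf{B}^\star\|+\frac{2C_1\bar\phi\sigma}{\mu_1}\sqrt{\frac{rq+\log(1/\delta)}{\tau}}.
\]
The loss in (b) is then at most $\frac{2L_3H\bar J\bar\phi}{\mu_1}\|\mathbf{B}-\mathbf{B}^\star\|^2\leq\frac{\mu_3}{2}\|\mathbf{B}-\mathbf{B}^\star\|^2$, and $\mu_3-\frac{\mu_3}{2}-\frac{\mu_3}{8}-\frac{\mu_3}{8}=\frac{\mu_3}{4}$ exactly.

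Two smaller points. In (c), invoking \eqref{eq:a_bound} would put a factor $\sqrt{q}$ into the sub-Gaussian proxy. To get the $q$-free $\bar JH(H\bar\phi+\sigma)$ you need $\|J_{\mathbf{B}}^\top h^\top\|\leq\|J_{\mathbf{B}}\|_{2\to2}\|h\|\leq\bar JH$, as the paper uses. Also, applying the per-client head bound for every $i\in\mathcal H$ at once requires a union bound over clients, which gives $\log(|\mathcal H|/\delta)$ rather than $\log(1/\delta)$. The paper's statement shares this imprecision, and $\log(|\mathcal H|/\delta)$ is what actually appears when the lemma is used in Theorem~\ref{theorem:Delta_BT_nonlinear}.
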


\begin{proof}
We begin with the decomposition
\begin{align*}
\nabla_\mathbf{\mathbf{B}} L_{\mathcal H}\left(\{h_i^{(t+1)}\}_{i\in\mathcal H},\mathbf{B}\right)
&= \nabla_\mathbf{\mathbf{B}}\mathbb{E}_X L_{\mathcal H}\left(\{h_i^\star\}_{i\in\mathcal H},\mathbf{B}\right)\\
&+\left[\nabla_\mathbf{\mathbf{B}}\mathbb{E}_X L_{\mathcal H}\left(\{h_i^{(t+1)}\}_{i\in\mathcal H},\mathbf{B}\right)
-\nabla_\mathbf{\mathbf{B}}\mathbb{E}_X L_{\mathcal H}\left(\{h_i^\star\}_{i\in\mathcal H},\mathbf{B}\right)
\right]\\
&+\left[\nabla_\mathbf{\mathbf{B}} L_{\mathcal H}\left(\{h_i^{(t+1)}\}_{i\in\mathcal H},\mathbf{B}\right)
-\nabla_\mathbf{\mathbf{B}}\mathbb{E}_X L_{\mathcal H}\left(\{h_i^{(t+1)}\}_{i\in\mathcal H},\mathbf{B}\right)\right],
\end{align*}
and by taking the inner product with $\mathbf{B}-\mathbf{B}^\star$, we obtain
\begin{align*}
&\left\langle
\nabla_\mathbf{\mathbf{B}} L_{\mathcal H}\left(\{h_i^{(t+1)}\}_{i\in\mathcal H},\mathbf{B}\right), \mathbf{B}-\mathbf{B}^\star \right\rangle = \left\langle \nabla_\mathbf{\mathbf{B}}\mathbb{E}_X L_{\mathcal H}\left(\{h_i^\star\}_{i\in\mathcal H},\mathbf{B}\right), \mathbf{B}-\mathbf{B}^\star
\right\rangle\\
&+\left\langle
\nabla_\mathbf{\mathbf{B}}\mathbb{E}_X L_{\mathcal H}\left(\{h_i^{(t+1)}\}_{i\in\mathcal H},\mathbf{B}\right)
-\nabla_\mathbf{\mathbf{B}}\mathbb{E}_X L_{\mathcal H}\left(\{h_i^\star\}_{i\in\mathcal H},\mathbf{B}\right),
\mathbf{B}-\mathbf{B}^\star\right\rangle\\
&+\left\langle
\nabla_\mathbf{\mathbf{B}} L_{\mathcal H}\left(\{h_i^{(t+1)}\}_{i\in\mathcal H},\mathbf{B}\right)
-\nabla_\mathbf{\mathbf{B}}\mathbb{E}_X L_{\mathcal H}\left(\{h_i^{(t+1)}\}_{i\in\mathcal H},\mathbf{B}\right),
\mathbf{B}-\mathbf{B}^\star\right\rangle.
\end{align*}
The first term is controlled by Lemma \ref{lem:G-rsc}, i.e.,
\begin{align*}
\left\langle
\nabla_\mathbf{\mathbf{B}}\mathbb{E}_X L_{\mathcal H}\left(\{h_i^\star\}_{i\in\mathcal H},\mathbf{B}\right),
\mathbf{B}-\mathbf{B}^\star\right\rangle \ge
\mu_3\left\|\mathbf{B}-\mathbf{B}^\star\right\|^2.
\end{align*}
For the second term, by Cauchy-Schwarz,
\begin{align*}
&\left| \left\langle
\nabla_\mathbf{\mathbf{B}}\mathbb{E}_X L_{\mathcal H}\left(\{h_i^{(t+1)}\}_{i\in\mathcal H},\mathbf{B}\right)
- \nabla_\mathbf{\mathbf{B}}\mathbb{E}_X L_{\mathcal H}\left(\{h_i^\star\}_{i\in\mathcal H},\mathbf{B}\right), \mathbf{B}-\mathbf{B}^\star
\right\rangle
\right|\\
&\leq \left\|
\nabla_\mathbf{\mathbf{B}}\mathbb{E}_X L_{\mathcal H}\left(\{h_i^{(t+1)}\}_{i\in\mathcal H},\mathbf{B}\right)
-\nabla_\mathbf{\mathbf{B}}\mathbb{E}_X L_{\mathcal H}\left(\{h_i^\star\}_{i\in\mathcal H},\mathbf{B}\right)
\right\| \left\|\mathbf{B}-\mathbf{B}^\star\right\|.
\end{align*}
By the $L_3$ cross-gradient smoothness in Assumption~\ref{assm:smooth}, we obtain
\begin{align*}
&\left\| \nabla_\mathbf{\mathbf{B}}\mathbb{E}_X L_{\mathcal H}\left(\{h_i^{(t+1)}\}_{i\in\mathcal H},\mathbf{B}\right) -
\nabla_\mathbf{\mathbf{B}}\mathbb{E}_X L_{\mathcal H}\left(\{h_i^\star\}_{i\in\mathcal H},\mathbf{B}\right)
\right\| \leq
\frac{L_3}{|\mathcal H|}
\sum_{i\in\mathcal H}
\left\|
h_i^{(t+1)}-h_i^\star
\right\|,
\end{align*}
where
\begin{align*}
\left\|h_i^{(t+1)}-h_i^\star\right\| \leq \frac{2H\bar{J}\bar\phi}{\mu_1}\|\mathbf{B} - \mathbf{B}^\star\| + \frac{2C_1\bar\phi\sigma}{\mu_1}\sqrt{\frac{rq + \log(1/\delta)}{\tau}}
\end{align*}
which implies 
\begin{align*}
&\left| \left\langle
\nabla_\mathbf{\mathbf{B}}\mathbb{E}_X L_{\mathcal H}\left(\{h_i^{(t+1)}\}_{i\in\mathcal H},\mathbf{B}\right)
- \nabla_\mathbf{\mathbf{B}}\mathbb{E}_X L_{\mathcal H}\left(\{h_i^\star\}_{i\in\mathcal H},\mathbf{B}\right), \mathbf{B}-\mathbf{B}^\star
\right\rangle
\right| \leq \frac{2L_3H\bar{J}\bar\phi}{\mu_1}\|\mathbf{B} - \mathbf{B}^\star\|^2\\
&+ \frac{2L_3C_1\bar\phi\sigma \|\mathbf{B} - \mathbf{B}^\star\|}{\mu_1}\sqrt{\frac{rq + \log(1/\delta)}{\tau}}\\
&\leq \frac{2L_3H\bar{J}\bar\phi}{\mu_1}\|\mathbf{B} - \mathbf{B}^\star\|^2 + \frac{\mu_3}{8}\|\mathbf{B} - \mathbf{B}^\star\|^2 +  \frac{8L_3^2C_1\bar\phi^2\sigma^2}{\mu_3\mu_1^2}\left(\frac{rq + \log(1/\delta)}{\tau}\right)
\end{align*}
where the last inequality follows from Young's inequality. We proceed by recalling that 
\begin{align*}
e_{i,k}^{(t)} :=
h_i^{(t+1)}\phi_B(X_{i,k}) - h_i^\star\phi_{\mathbf{B}^\star}(X_{i,k}),
\end{align*}
and as $Y_{i,k}=h_i^\star\phi_{\mathbf{B}^\star}(X_{i,k})+V_{i,k}$, the empirical gradient can be written as
\begin{align*}
\nabla_\mathbf{\mathbf{B}} L_i\left(h_i^{(t+1)},\mathbf{B}\right) &=
\frac{2}{\tau} \sum_{k=1}^{\tau} J_B^\top(X_{i,k})
h_i^{(t+1)\top}e_{i,k}^{(t)}-\frac{2}{\tau}
\sum_{k=1}^{\tau}J_B^\top(X_{i,k})h_i^{(t+1)\top}
V_{i,k}.
\end{align*}
Thus, we have that 
\begin{align*}
\nabla_\mathbf{\mathbf{B}} L_{\mathcal H}\left(\{h_i^{(t+1)}\}_{i\in\mathcal H},\mathbf{B}\right)
&-\nabla_\mathbf{\mathbf{B}}\mathbb{E}_X L_{\mathcal H}\left(\{h_i^{(t+1)}\}_{i\in\mathcal H},\mathbf{B}\right) =  -\frac{2}{|\mathcal H|\tau}
\sum_{i\in\mathcal H}
\sum_{k=1}^{\tau}
J_B^\top(X_{i,k})
h_i^{(t+1)\top}
V_{i,k}\\
& + \frac{1}{|\mathcal H|}
\sum_{i\in\mathcal H} \left[ \frac{2}{\tau}
\sum_{k=1}^{\tau} J_B^\top(X_{i,k}) h_i^{(t+1)\top}
e_{i,k}^{(t)}- 2\mathbb E_X \left[ J_B^\top(X) h_i^{(t+1)\top} e_i^{(t)}(X)\right]\right]
\end{align*}
We bound the two terms separately. For the first term, define the centered vectors
\begin{align*}
Z_{i,k}^{(t)} :=
2J_B^\top(X_{i,k})h_i^{(t+1)\top}e_{i,k}^{(t)}
- 2\mathbb E_X \left[
J_B^\top(X)h_i^{(t+1)\top}e_i^{(t)}(X)\right]
\in\mathbb R^{p}.
\end{align*}
For every unit vector $u\in\mathbb R^{p}$,
\begin{align*}
\left|\left\langle u, J_B^\top(X)h_i^{(t+1)\top}e_i^{(t)}(X)
\right\rangle \right| &=\left|\left\langle h_i^{(t+1)}J_B(X)u,
e_i^{(t)}(X)\right\rangle
\right| \leq \left\|h_i^{(t+1)}\right\|
\left\|J_B(X)u\right\|\left\|e_i^{(t)}(X)\right\|\\
&\leq
\bar JH \left\| e_i^{(t)}(X) \right\|.
\end{align*}
Hence, $Z_{i,k}^{(t)}$ is $2\bar J H^2 \bar{\phi}$-sub-Gaussian. By applying Lemma \ref{lem:matrix_subg_mean} to the average over $\tau|\mathcal H|$ independent $p$-dimensional random vectors gives, with probability at least $1-\delta/2$,
\begin{align*}
\left\| \frac{1}{|\mathcal H|\tau} \sum_{i\in\mathcal H} \sum_{k=1}^{\tau}
Z_{i,k}^{(t)} \right\| &\leq
2C_1 \bar J H^2 \bar{\phi} \sqrt{
\frac{p+\log\left(2/\delta\right)}
{\tau|\mathcal H|}
}.
\end{align*}

For the second term, by applying Lemma \ref{lem:matrix_subg_mean} again yields, with probability at least $1-\delta/2$,
\begin{align*}
\left\|
\frac{1}{|\mathcal H|\tau}
\sum_{i\in\mathcal H}
\sum_{k=1}^{\tau}
J_B^\top(X_{i,k})
h_i^{(t+1)\top}
V_{i,k}\right\|
\leq C_1\bar JH\sigma \sqrt{
\frac{p+\log\left(2/\delta\right)} {\tau|\mathcal H|} }.
\end{align*}
By a union bound, both concentration events hold simultaneously with probability at least $1-\delta$. Therefore, we have 
\begin{align*}
&\left\|
\nabla_\mathbf{\mathbf{B}} L_{\mathcal H}\left(\{h_i^{(t+1)}\}_{i\in\mathcal H},\mathbf{B}\right)
-\nabla_\mathbf{\mathbf{B}}\mathbb{E}_X L_{\mathcal H}\left(\{h_i^{(t+1)}\}_{i\in\mathcal H},\mathbf{B}\right)
\right\| \leq  C_1\bar JH \left( H\bar{\phi}+\sigma \right) \sqrt{ \frac{p+\log\left(2/\delta\right)}
{\tau|\mathcal H|}
}.
\end{align*}
By taking the inner product with $\mathbf{B}-\mathbf{B}^\star$ and applying Cauchy-Schwarz yields
\begin{align*}
&\left| \left\langle
\nabla_\mathbf{\mathbf{B}} L_{\mathcal H}\left(\{h_i^{(t+1)}\}_{i\in\mathcal H},\mathbf{B}\right)
-\nabla_\mathbf{\mathbf{B}}\mathbb{E}_X L_{\mathcal H}\left(\{h_i^{(t+1)}\}_{i\in\mathcal H},\mathbf{B}\right),\mathbf{B}-\mathbf{B}^\star
\right\rangle\right|\\
&\leq C_1\bar JH \|\mathbf{B}-\mathbf{B}^\star
\| \left( H\bar{\phi}+\sigma \right) \sqrt{ \frac{p+\log\left(2/\delta\right)}
{\tau|\mathcal H|}}\\
&\leq \frac{\mu_3}{8}\|\mathbf{B}-\mathbf{B}^\star\|^2 + \frac{2C_1\bar J^2H^2}{\mu_3} \left( H\bar{\phi}+\sigma \right)^2  \left(\frac{p+\log\left(2/\delta\right)}
{\tau|\mathcal H|}\right),
\end{align*}
where the last inequality follows from Young's inequality. We then obtain
\begin{align*}
&\left\langle
\nabla_\mathbf{\mathbf{B}} L_{\mathcal H}\left(\{h_i^{(t+1)}\}_{i\in\mathcal H},\mathbf{B}\right), \mathbf{B}-\mathbf{B}^\star \right\rangle \geq  \mu_3 \|\mathbf{B}-\mathbf{B}^\star\|^2 - \frac{2L_3H\bar{J}\bar\phi}{\mu_1}\|\mathbf{B} - \mathbf{B}^\star\|^2 - \frac{\mu_3}{8}\|\mathbf{B} - \mathbf{B}^\star\|^2\\
&-  \frac{8L_3^2C_1\bar\phi^2\sigma^2}{\mu_3\mu_1^2}\left(\frac{rq + \log(1/\delta)}{\tau}\right)-\frac{\mu_3}{8}\|\mathbf{B}-\mathbf{B}^\star\|^2 \\
&- \frac{2C_1\bar J^2H^2}{\mu_3} \left( H\bar{\phi}+\sigma \right)^2  \left(\frac{p+\log\left(2/\delta\right)}
{\tau|\mathcal H|}\right)\\
&\geq \frac{3\mu_3}{4}\|\mathbf{B}-\mathbf{B}^\star\|^2 - \frac{2L_3H\bar{J}\bar\phi}{\mu_1}\|\mathbf{B} - \mathbf{B}^\star\|^2\\
&- \frac{2C_1\bar J^2H^2}{\mu_3} \left( H\bar{\phi}+\sigma \right)^2  \left(\frac{p+\log\left(2/\delta\right)}
{\tau|\mathcal H|}\right) - \frac{8L_3^2C_1\bar\phi^2\sigma^2}{\mu_3\mu_1^2}\left(\frac{rq + \log(1/\delta)}{\tau}\right)\\
& \geq \frac{\mu_3}{4}\|\mathbf{B}-\mathbf{B}^\star\|^2  - \frac{2C_1\bar J^2H^2}{\mu_3} \left( H\bar{\phi}+\sigma \right)^2  \left(\frac{p+\log\left(2/\delta\right)}
{\tau|\mathcal H|}\right) \\
& - \frac{8L_3^2C_1\bar\phi^2\sigma^2}{\mu_3\mu_1^2}\left(\frac{rq + \log(1/\delta)}{\tau}\right),
\end{align*}
where the last inequality follows from requiring that the client diversity satisfies $\mu_3 \geq \frac{4L_3H \bar{J}\bar{\phi}}{\mu_1}$. 
\end{proof}

\begin{remark}
The quantity $\mu_3$ measures the amount of collective information that the honest clients provide for identifying the shared representation. Indeed,
\begin{align*}
\frac{1}{|\mathcal H|}
\sum_{i\in\mathcal H}
\mathbb E_X
\left[
J_{\mathbf{B}^\star}^\top(X)
h_i^{\star\top}h_i^\star
J_{\mathbf{B}^\star}(X)
\right]
\end{align*}
is precisely the honest average local curvature of the expected loss with respect to the representation parameter $\mathbf{B}^\star$. Therefore, $\mu_3>0$ guarantees that every nonzero perturbation of $\mathbf{B}^\star$ changes the predictions of the honest clients on average. In contrast, if $\mu_3=0$, then there exists a nontrivial direction in the representation parameter space that is invisible to all honest clients, then perturbations of $\mathbf{B}^\star$ along that direction do not change the expected loss to first order and therefore it does not change the estimation of the client-specific heads. Therefore, the shared representation is not locally identifiable from the available client population. Moreover, the condition
$$
\mu_3 \geq \frac{4H\bar J\bar\phi}{\mu_1}
$$
further requires that this intrinsic curvature is sufficiently strong to dominate the error introduced by imperfect head estimation. Note that under this condition, the empirical objective preserves positive local curvature around $\mathbf{B}^\star$, allowing the alternating optimization to contract toward the true shared representation parameter (see Theorem \ref{theorem:Delta_BT_nonlinear}). Whether this condition is fundamental remains a direction for future work.
\end{remark}

\begin{lemma}\label{lem:G-momentum-dev}
Let $\bar \delta^{(t)} := \bar{m}_B^{(t)} - \nabla_\mathbf{\mathbf{B}}{L}_{\mathcal H}(\{h_i^{(t+1)}\}_i,\mathbf{B}^{(t)})$ denote the momentum deviation, with $\bar{m}_B^{(t)} := \frac{1}{|\mathcal H|}\sum_{i\in \mathcal H} m_{\mathbf{B},i}^{(t)}$. Suppose the conditional sub-Gaussian gradient noise and unbiasedness assumptions hold, and that $\mathbb{E}_X{L}_H$ is $L_1$-smooth in $\mathbf{B}$. Let $\bar{a} := \beta^2(1+\eta L_1)(1+4\eta L_1)$, $\bar{b} := 4\eta L_1(1+\eta L_1)\beta^2$, with $\beta^2 := 1-24\eta L_1$ and $\eta \le \tfrac{1}{24L_1}$. Then there exists a constant $C_2>0$ such that, for every $\delta\in(0,1)$, with probability at least $1-\delta$,
\begin{align*}
\frac{1}{T}\sum_{t=0}^{T-1}\left\|\bar\delta^{(t)}\right\|^2 &\le \frac{\bar{b}}{1-\bar{a}}\bar{Q}_T + C_2\frac{(1-\beta)^2}{|\mathcal H|(1-\bar{a})}\sigma_g^2\log(T/\delta)\notag \\
&+ \frac{C_2\eta L_1}{1-\bar{a}}(1+\eta L_1)\beta^2\kappa\left(\frac{1-\beta}{1+\beta}\sigma_g^2\log(T/\delta) + \bar{G}_T\right),
\end{align*}
where $\bar{Q}_T := \frac{1}{T}\sum_{t=0}^{T-1}\left\|\nabla_\mathbf{\mathbf{B}}\mathbb{E}_X{L}_{\mathcal H}(\{h_i^{(t+1)}\}_i,\mathbf{B}^{(t)})\right\|^2$.
\end{lemma}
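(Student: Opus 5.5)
The plan is to mirror the proof of Lemma~\ref{lem:momentum_deviation_hp}, carrying the constants explicitly. The starting point is the one-step recursion for the momentum deviation. Write $\xi^{(t)}:=\widehat{\nabla}_{\mathbf{B}}L_{\mathcal H}^{(t)}-\nabla_{\mathbf{B}}L_{\mathcal H}(\{h_i^{(t+1)}\}_i,\mathbf{B}^{(t)})$ and $d^{(t)}:=\nabla_{\mathbf{B}}L_{\mathcal H}(\{h_i^{(t)}\}_i,\mathbf{B}^{(t-1)})-\nabla_{\mathbf{B}}L_{\mathcal H}(\{h_i^{(t+1)}\}_i,\mathbf{B}^{(t)})$. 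Averaging the honest momentum updates gives
\begin{align*}
\bar\delta^{(t)}=\beta\bar\delta^{(t-1)}+\beta d^{(t)}+(1-\beta)\xi^{(t)}.
\end{align*}
By the smoothness in Assumption~\ref{assm:smooth} (profiled in the heads, as in Lemma~\ref{lem:momentum_deviation_hp}), $\|d^{(t)}\|\le\eta L_1\|\mathsf F^{(t-1)}\|$. I then decompose $\mathsf F^{(t-1)}=\nabla_{\mathbf{B}}L_{\mathcal H}^{(t-1)}+\bar\delta^{(t-1)}+\zeta^{(t-1)}$, where $\zeta$ is the aggregation error.

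Next I expand $\|\bar\delta^{(t)}\|^2$. The cross term $2\langle\beta\bar\delta^{(t-1)}+\beta d^{(t)},(1-\beta)\xi^{(t)}\rangle$ is a martingale difference, because $\xi^{(t)}$ is conditionally mean zero given $\mathcal F_t$ and the heads (Assumption~\ref{assump:unbiased}). For the predictable part I use Young's inequality with parameter $\eta L_1$:
\begin{align*}
\|\bar\delta^{(t-1)}+d^{(t)}\|^2\le(1+\eta L_1)\|\bar\delta^{(t-1)}\|^2+(1+\tfrac{1}{\eta L_1})\eta^2L_1^2\|\mathsf F^{(t-1)}\|^2 .
\end{align*}
Then I apply $\|\mathsf F\|^2\le 4\|\nabla L_{\mathcal H}\|^2+4\|\bar\delta\|^2+2\|\zeta\|^2$, up to constant bookkeeping. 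This yields a coefficient of order $\beta^2(1+\eta L_1)(1+4\eta L_1)=\bar a$ on $\|\bar\delta^{(t-1)}\|^2$, a coefficient $\bar b$ on the gradient norm, and a coefficient of order $\eta L_1(1+\eta L_1)\beta^2$ on $\|\zeta^{(t-1)}\|^2$. The choice $\beta^2=1-24\eta L_1$ guarantees $\bar a<1$.

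Summing over $t=0,\ldots,T-1$ and using $\bar\delta^{(-1)}=0$, the $\bar a$ terms are moved to the left, which produces the factor $1/(1-\bar a)$. Three contributions remain on the right.
\begin{itemize}
\item The noise term $(1-\beta)^2\sum_t\|\xi^{(t)}\|^2$ is bounded by Lemma~\ref{lem:noise_hp}, with a union bound over $T$ rounds, by $C_2(1-\beta)^2\sigma_g^2\log(T/\delta)/|\mathcal H|$ per round.
\item The aggregation error sum is bounded by Lemma~\ref{lem:agg_error}, giving $\kappa(\frac{1-\beta}{1+\beta}\sigma_g^2\log(T/\delta)+\bar G_T)$.
\item The gradient term is bounded by $\bar b\bar Q_T$. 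The stated $\bar Q_T$ uses the population gradient $\nabla\mathbb E_XL_{\mathcal H}$, so I replace the empirical gradient by the population one. The empirical-minus-population difference is exactly the finite-sample deviation bounded in the proof of Lemma~\ref{lem:G-empirical-curvature-learned-heads}. I either absorb it into the constants, as the statement's $\lesssim$-style constant $C_2$ allows, or interpret $\bar Q_T$ consistently with Lemma~\ref{lem:momentum_deviation_hp}.
\end{itemize}

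The main obstacle is the martingale cross term $\sum_t\langle\beta(\bar\delta^{(t-1)}+d^{(t)}),(1-\beta)\xi^{(t)}\rangle$, which must be controlled with high probability rather than in expectation. I would first try Freedman or sub-Gaussian martingale concentration. Its conditional variance proxy is $(1-\beta)^2\beta^2\|\bar\delta^{(t-1)}+d^{(t)}\|^2\sigma_g^2/|\mathcal H|$, so the cross term is at most
\begin{align*}
\epsilon\sum_t\|\bar\delta^{(t-1)}+d^{(t)}\|^2+\epsilon^{-1}(1-\beta)^2\sigma_g^2\log(T/\delta)/|\mathcal H| .
\end{align*}
The first part is absorbed into the $\bar a$ term by slightly shrinking the Young slack, which the factor $24$ in $\beta^2$ leaves room for. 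The second part merges into the $C_2(1-\beta)^2\sigma_g^2\log(T/\delta)/(|\mathcal H|(1-\bar a))$ term. A final union bound over the events of Lemmas~\ref{lem:noise_hp} and \ref{lem:agg_error} and the martingale bound, with $\delta$ rescaled, gives probability $1-\delta$.
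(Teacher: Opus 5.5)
You follow the paper's route. The paper proves this statement by invoking Lemma~\ref{lem:momentum_deviation_hp}, whose proof is a compressed version of what you wrote: the recursion $\bar\delta^{(t)}=\beta\bar\delta^{(t-1)}+\beta d^{(t)}+(1-\beta)\xi^{(t)}$, the drift bound $\|d^{(t)}\|\le\eta L_1\|\mathsf F^{(t-1)}\|$, a Young step whose constants are exactly your $\bar a$ and $\bar b$, Lemmas~\ref{lem:noise_hp} and~\ref{lem:agg_error}, and rearrangement. You are more explicit than the paper about the martingale cross term. However, the argument does not establish the inequality as stated, and the failing steps are shared with the paper's sketch.

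The clearest failure is the initialization: there is no $\bar\delta^{(-1)}=0$. With the paper's convention $m_{\mathbf B,i}^{(0)}=\mathbf 0$, one has $\bar\delta^{(0)}=-\nabla_{\mathbf B}L_{\mathcal H}(\{h_i^{(1)}\}_i,\mathbf B^{(0)})$. If you instead start from $m_{\mathbf B,i}^{(-1)}=\mathbf 0$ to get $\bar\delta^{(-1)}=0$, the same gradient enters through $d^{(0)}=-\nabla_{\mathbf B}L_{\mathcal H}(\{h_i^{(1)}\}_i,\mathbf B^{(0)})$, which is not bounded by $\eta L_1\|\mathsf F^{(-1)}\|$. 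Either way, summing and rearranging leaves an extra $\|\bar\delta^{(0)}\|^2/((1-\bar a)T)\asymp\|\nabla_{\mathbf B}L_{\mathcal H}(\{h_i^{(1)}\}_i,\mathbf B^{(0)})\|^2/(\eta L_1T)$ that nothing on the right-hand side controls. Since $\bar b/(1-\bar a)\le 4/19$, the stated inequality actually fails for $T=1$, $\sigma_{\mathsf g}=0$, $\kappa=0$ and large $\tau$, whenever the initial gradient is nonzero. This is exactly the $\frac{1}{L_1\eta T}\|\nabla_{\mathbf B}L_{\mathcal H}(\{h_i^{(1)}\}_i,\mathbf B^{(0)})\|^2$ term that reappears in Theorem~\ref{theorem:convergence_nonlinear} and in $\Delta_0$, so the lemma has to carry it.

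The second gap is the drift bound. Smoothness in $\mathbf B$ (Assumption~\ref{assm:smooth}) controls the gradient of one fixed loss with fixed heads. In the setup of Section~\ref{sec:robust_fl_setup}, however, $d^{(t)}$ compares the gradient on the round-$(t-1)$ batch with heads $h_i^{(t)}$ against the gradient on a fresh round-$t$ batch with refitted heads $h_i^{(t+1)}$. Only the $\mathbf B$-increment is of size $\eta L_1\|\mathsf F^{(t-1)}\|$. The resampling and refitting parts are finite-sample fluctuations, of order $\sqrt{\mathfrak s_{\mathcal H,\tau}}$ and $\sqrt{\mathfrak s_\tau}$ times problem constants, and they do not shrink with $\eta$. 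Passed through your Young weight $1+1/(\eta L_1)$ and the factor $1/(1-\bar a)$, they are amplified by $(\eta L_1)^{-2}$. They must be split off and treated like $\xi^{(t)}$, which adds finite-sample terms to the bound. A ``profiled'' smoothness of the kind you invoke would be legitimate only if the batches were fixed across rounds, so that $h_i^{(t+1)}$ is a Lipschitz function of $\mathbf B^{(t)}$.

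Relatedly, your switch from the empirical to the population gradient in $\bar Q_T$ cannot be absorbed into $C_2$, which multiplies only $\sigma_{\mathsf g}^2$- and $\kappa$-terms. Through the concentration step in Lemma~\ref{lem:G-empirical-curvature-learned-heads}, it costs a factor $2$ on $\bar Q_T$ plus an additive term of order $\frac{\bar b}{1-\bar a}\bar J^2H^2(H\bar\phi+\sigma)^2\frac{p+\log(T/\delta)}{\tau|\mathcal H|}$.

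Finally, your self-normalized martingale bound pays its $\epsilon^{-1}$ remainder once for the whole sum. Absorption forces $\epsilon\lesssim 1-\bar a\asymp\eta L_1$, so after dividing by $T$ and rearranging, the remainder is of order $\sigma_{\mathsf g}^2\log(1/\delta)/(|\mathcal H|T)$. This fits under the stated $C_2\frac{(1-\beta)^2}{|\mathcal H|(1-\bar a)}\sigma_{\mathsf g}^2\log(T/\delta)\asymp\eta L_1\sigma_{\mathsf g}^2\log(T/\delta)/|\mathcal H|$ only when $\eta L_1T\gtrsim1$.
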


\begin{proof}
The result follows by applying Lemma \ref{lem:momentum_deviation_hp} to the representation gradients and the $\mathbf{B}$-iterates.
\end{proof}

\begin{lemma}\label{lem:G-agg-error}
Let $\bar \zeta^{(t)} := \mathsf{F}\left(m_{\mathbf{B},1}^{(t)},\ldots,m_{\mathbf{B},n}^{(t)}\right) - \bar{m}_B^{(t)}$. Given $\delta\in(0,1)$, with probability at least $1-\delta$,
\begin{align*}
\frac{1}{T}\sum_{t=0}^{T-1}\left\|\bar\zeta^{(t)}\right\|^2 \leq 2\kappa C_2\frac{1-\beta}{1+\beta}\sigma_g^2\log(T/\delta)\left(1+\frac{1}{|\mathcal H|}\right) + 2\kappa \bar{G}_T.
\end{align*}
\end{lemma}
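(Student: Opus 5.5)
This is essentially a restatement of Lemma~\ref{lem:agg_error}, specialized to the representation momenta $m_{\mathbf{B},i}^{(t)}$. I would reproduce that two-step argument: first apply $(f,\kappa)$-robustness pointwise in $t$, then control the honest momentum disagreement via Lemma~\ref{lem:conditional_disagreement}.

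\emph{Step 1 (deterministic, per round).} Fix $t$. Apply Definition~\ref{def:robust-agg} to the inputs $u_i=m_{\mathbf{B},i}^{(t)}$, whose honest average is $\bar u_{\mathcal H}=\bar m_B^{(t)}$. This gives
\begin{align*}
\|\bar\zeta^{(t)}\|^2\leq \frac{\kappa}{|\mathcal H|}\sum_{i\in\mathcal H}\|m_{\mathbf{B},i}^{(t)}-\bar m_B^{(t)}\|^2=\kappa D^{(t)}.
\end{align*}
The bound holds for every realization, whatever the Byzantine inputs are. Averaging over $t=0,\ldots,T-1$ then gives $\frac1T\sum_t\|\bar\zeta^{(t)}\|^2\leq\kappa D_T$.

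\emph{Step 2 (probabilistic).} Bound $D_T$ using Lemma~\ref{lem:conditional_disagreement}. Unroll the momentum as $m_{\mathbf{B},i}^{(t)}=(1-\beta)\sum_{s}\beta^{t-s}\widehat\nabla_{\mathbf{B}}L_i^{(s)}$. Then split each centered difference into two pieces. The first is the centered noise part $\nu_i^{(s)}-\bar\nu^{(s)}$. The second is the centered true-gradient part $\nabla_{\mathbf{B}}L_i-\nabla_{\mathbf{B}}L_{\mathcal H}$. Use Young's inequality with factor $2$ across the two pieces.

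The noise part is an exponentially weighted sum of conditionally independent, mean-zero sub-Gaussian vectors. Its effective variance weight is $(1-\beta)^2\sum\beta^{2(t-s)}\leq(1-\beta)/(1+\beta)$. Lemma~\ref{lem:noise_hp}, combined with a union bound over the $T$ rounds, then gives $C_2\frac{1-\beta}{1+\beta}\sigma_g^2\log(T/\delta)(1+1/|\mathcal H|)$.

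The gradient part is handled by Jensen's inequality with the convex weights $(1-\beta)\beta^{t-s}$. After exchanging the order of summation, the weights sum to at most one for each $s$, so the time average is at most $\bar G_T$. Multiplying the resulting bound on $D_T$ by $\kappa$ gives exactly the stated inequality.

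\emph{Main obstacle.} The delicate step is the high-probability control of the noise part of the momentum. The summands $\nu_i^{(s)}$ across $s$ are not independent unconditionally: they are martingale differences with respect to $\mathcal F_s$, since the heads and $\mathbf{B}^{(s)}$ depend on past noise. Lemma~\ref{lem:noise_hp} is only a per-round conditional statement. I would therefore argue via the conditional sub-Gaussian MGF bound of Assumption~\ref{assump:subgaussian_gradient_noise}, iterated with the tower property. This shows the weighted martingale sum has sub-Gaussian norm of order $\sigma_g\sqrt{(1-\beta)/(1+\beta)}$, which is the standard Azuma-type argument for vector martingales with sub-Gaussian increments. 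The $\log(T/\delta)$ then comes from the union bound over rounds. I would take this step as given exactly as it is in Lemma~\ref{lem:conditional_disagreement}, and cite that lemma directly.
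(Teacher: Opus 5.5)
Your proposal is correct and follows the paper's route: the paper proves this lemma by deferring to the proof of Lemma~\ref{lem:agg_error}, which applies $(f,\kappa)$-robustness to the momenta $m_{\mathbf{B},i}^{(t)}$ in each round to get $\|\bar\zeta^{(t)}\|^2\le\kappa D^{(t)}$, averages over $t$, and then invokes Lemma~\ref{lem:conditional_disagreement}. The paper's one-line proof for this lemma names Lemma~\ref{lem:momentum_deviation_hp}, but the bound actually used is that of Lemma~\ref{lem:conditional_disagreement}, and you, like the paper, take its noise control as given.
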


\begin{proof}
The proof is identical to the proof of Lemma \ref{lem:agg_error}, applying the $(f,\kappa)$-robustness aggregation to $\{m_{\mathbf{B},1}^{(t)},\ldots,m_{\mathbf{B},n}^{(t)}\}$ and the per-client momentum-deviation bound of Lemma \ref{lem:momentum_deviation_hp}, adapted to the representation parameter $\mathbf{B}$.
\end{proof}

\begin{theorem} \label{theorem:Delta_BT_nonlinear} Suppose that the conditions of Lemma \ref{lem:G-empirical-curvature-learned-heads} hold, and that the step-size is selected to satisfy
\begin{align*}
    \eta \leq \min\left\{\frac{\mu_3}{8\bar{J}^2H^2q\left(2H^2\bar{J}^2 + \frac{16H^2\bar{J}^2\bar\phi^4}{\mu_1^2}\right)}, \frac{1}{32\bar J^2 H^2 q \bar \phi^2 \mu_3}, \frac{1}{2\mu_3q}, \frac{8}{\mu_3} \right\},
\end{align*}
then, with probability $1-\delta$, it holds that 
\begin{align}\label{eq:expression_Delta_T}
    \Delta_{\mathbf{B},T+1} &\leq \frac{p_3\Delta^{(0)}_{\mathbf{B}}}{\eta T}  + p_4\sigma^2\left(\frac{rq + \log(|\mathcal{H}|T/\delta)}{\tau}\right) \\
&+ p_5 \left(\left( H\bar{\phi}+\sigma \right)^2  + \sigma^2\right)\left(\frac{p+\log\left(|\mathcal{H}|T/\delta\right)}
{\tau|\mathcal H|}\right) + p_6 \frac{1}{T}\sum_{t =0}^{T-1}\|\bar\delta^{(t)}\|^2\notag \\
&+p_7\frac{1-\beta}{1+\beta}\sigma_g^2\log(T/\delta) +  2p_6\kappa \bar{G}_T, \notag\end{align}
where $p_3:= \frac{8}{\mu_3}$, $p_4:= \frac{272 C_1\bar\phi^2}{\mu_3^2\mu_1^2}$, $p_5 := \frac{80 C_1\bar J^2H^2}{\mu_3^2}$, $p_6:= \frac{8}{\mu_3}\left(2+\frac{16}{\mu_3}\right)$, and $p_7:=  \frac{16 \kappa C_2}{\mu_3}\left(2+\frac{16}{\mu_3}\right)\left(1+\frac{1}{|\mathcal H|}\right)$.
\end{theorem}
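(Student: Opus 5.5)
The plan is the one-step expansion announced in the proof roadmap. Writing $\mathsf F^{(t)}:=\mathsf F(m_{\mathbf{B},1}^{(t)},\ldots,m_{\mathbf{B},n}^{(t)})$ and $g^{(t)}:=\nabla_{\mathbf{B}}L_{\mathcal H}(\{h_i^{(t+1)}\}_i,\mathbf{B}^{(t)})$, we have $\mathsf F^{(t)}=g^{(t)}+\bar\delta^{(t)}+\bar\zeta^{(t)}$, and therefore
\begin{align*}
\Delta_B^{(t+1)}=\Delta_B^{(t)}-2\eta\langle g^{(t)},\mathbf{B}^{(t)}-\mathbf{B}^\star\rangle-2\eta\langle \bar\delta^{(t)}+\bar\zeta^{(t)},\mathbf{B}^{(t)}-\mathbf{B}^\star\rangle+\eta^2\|\mathsf F^{(t)}\|^2 .
\end{align*}
Since the iterates stay in $\mathcal N(\mathbf{B}^\star,\rho)\subseteq\mathcal N(\mathbf{B}^\star,\mu_3/c_1)$, Lemma~\ref{lem:G-empirical-curvature-learned-heads} applies at $\mathbf{B}=\mathbf{B}^{(t)}$. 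Applying it at each round with failure probability $\delta/(|\mathcal H|T)$ and union bounding produces the $\log(|\mathcal H|T/\delta)$ factors. This gives $-2\eta\langle g^{(t)},\cdot\rangle\le-\frac{\eta\mu_3}{2}\Delta_B^{(t)}+2\eta\,(\text{finite-sample terms})$.

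The cross terms are handled by Young's inequality with parameter $\mu_3/8$:
\[
2\eta|\langle\bar\delta^{(t)}+\bar\zeta^{(t)},\mathbf{B}^{(t)}-\mathbf{B}^\star\rangle|\le\frac{\eta\mu_3}{4}\Delta_B^{(t)}+\frac{8\eta}{\mu_3}\left(\|\bar\delta^{(t)}\|^2+\|\bar\zeta^{(t)}\|^2\right),
\]
using $\|a+b\|^2\le 2\|a\|^2+2\|b\|^2$. For the quadratic term I use $\|\mathsf F^{(t)}\|^2\le 3\|g^{(t)}\|^2+3\|\bar\delta^{(t)}\|^2+3\|\bar\zeta^{(t)}\|^2$. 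To bound $\|g^{(t)}\|^2$, note it is at most the second moment $\frac{1}{|\mathcal H|}\sum_i\|\nabla_{\mathbf{B}}L_i\|^2$. The signal/noise split at the start of this appendix, together with Lemma~\ref{lem:G-Etilde-decomp}, bounds that by $8\bar J^2qH^2\bar E^{(t)}$ plus noise, hence by $\frac{p_1}{1}\Delta_B^{(t)}$-type terms plus $\sigma^2$ finite-sample terms. The step-size conditions (the first two entries of the $\min$, which involve $\bar J^2H^2q(2H^2\bar J^2+16H^2\bar J^2\bar\phi^4/\mu_1^2)$ and $\bar J^2H^2q\bar\phi^2$) are designed so that $3\eta^2p_1\Delta_B^{(t)}\le\frac{\eta\mu_3}{8}\Delta_B^{(t)}$. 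The remaining conditions $\eta\le 1/(2\mu_3 q)$ and $\eta\le 8/\mu_3$ let me merge $3\eta^2$ with $8\eta/\mu_3$ into a coefficient $\eta(2+16/\mu_3)$ in front of $\|\bar\delta^{(t)}\|^2+\|\bar\zeta^{(t)}\|^2$; this explains the form of $p_6$.

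This yields a per-step inequality $\Delta_B^{(t+1)}\le(1-\frac{\eta\mu_3}{8})\Delta_B^{(t)}+\eta\,\mathcal R^{(t)}$, where $\mathcal R^{(t)}$ gathers the finite-sample, momentum and aggregation terms. Rearranging to $\frac{\eta\mu_3}{8}\Delta_B^{(t)}\le\Delta_B^{(t)}-\Delta_B^{(t+1)}+\eta\mathcal R^{(t)}$, summing over $t=0,\ldots,T-1$, telescoping, discarding $-\Delta_B^{(T)}$, and dividing by $\eta\mu_3T/8$ gives the $\frac{8}{\mu_3}\frac{\Delta_B^{(0)}}{\eta T}$ term and the averages of $\mathcal R^{(t)}$. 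The index shift between $\Delta_{\mathbf{B},T}$ and $\Delta_{\mathbf{B},T+1}$ costs only another $\Delta_B^{(0)}/T$, which is absorbed into $p_3\Delta_B^{(0)}/(\eta T)$ because $\eta\mu_3\le 8$.

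For the averaged aggregation error I invoke Lemma~\ref{lem:G-agg-error}: $\frac1T\sum_t\|\bar\zeta^{(t)}\|^2\le 2\kappa C_2\frac{1-\beta}{1+\beta}\sigma_g^2\log(T/\delta)(1+1/|\mathcal H|)+2\kappa\bar G_T$. Multiplied by $\frac{8}{\mu_3}(2+\frac{16}{\mu_3})$, this produces exactly the $p_7$ term and the $2p_6\kappa\bar G_T$ term. The momentum average is left as is, giving $p_6\frac1T\sum_t\|\bar\delta^{(t)}\|^2$. The finite-sample pieces from Lemma~\ref{lem:G-empirical-curvature-learned-heads} and from the $\eta^2\|g^{(t)}\|^2$ bound are collected into the $p_4$ and $p_5$ terms. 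The final union bound is over the curvature events, the covariance/head events of Lemmas~\ref{lem:G-cov-concentration}--\ref{lem:G-head-recovery}, and the aggregation event.

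The main obstacle is the $\eta^2\|\mathsf F^{(t)}\|^2$ term. It has to be absorbed without creating circularity, so $\|g^{(t)}\|^2$ must be bounded through $\Delta_B^{(t)}$ via the head-recovery estimate rather than through $\bar Q_T$. Getting the constants to line up with the stated step-size conditions will require care. I will also keep $\bar G_T$ and the momentum average unexpanded here, since substituting them is deferred to the subsequent absorption step.
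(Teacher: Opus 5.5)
Your skeleton matches the paper's: the one-step expansion, Lemma~\ref{lem:G-empirical-curvature-learned-heads} at $\mathbf{B}^{(t)}$, Young's inequality for the $\bar\delta^{(t)}+\bar\zeta^{(t)}$ part, telescoping, and Lemma~\ref{lem:G-agg-error}, which produces exactly the $p_7$ and $2p_6\kappa\bar G_T$ terms while the momentum average is left untouched. However, your treatment of $\eta^2\|g^{(t)}\|^2$ does not give the stated inequality.

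You bound $\|g^{(t)}\|^2$ by the per-client second moment $\frac{1}{|\mathcal H|}\sum_{i}\|\nabla_{\mathbf{B}}L_i(h_i^{(t+1)},\mathbf{B}^{(t)})\|^2$ and then reuse the per-client signal/noise estimate. This throws away the averaging of the label noise across clients. Each $N_i^{(t)}$ is then controlled only at rate $C_1\sigma^2\bar J^2qH^2\frac{p+\log(|\mathcal H|/\delta)}{\tau}$. After dividing by $\eta\mu_3/8$ you are left with a term of order $\frac{\eta\bar J^2qH^2}{\mu_3}\sigma^2\frac{p+\log(|\mathcal H|T/\delta)}{\tau}$. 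With $\eta\le 1/(2\mu_3q)$ this is of order $\frac{\bar J^2H^2}{\mu_3^2}\sigma^2\frac{p+\log(|\mathcal H|T/\delta)}{\tau}$.

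This term fits neither the $p_4$ slot (it carries $p$, not $rq$) nor the $p_5$ slot (it lacks the factor $1/|\mathcal H|$). No step-size condition of the form $\eta\lesssim 1/|\mathcal H|$ is available to absorb it. So the claim that these pieces are ``collected into the $p_4$ and $p_5$ terms'' is false. What you would prove is weaker by a factor $|\mathcal H|$ in the $p$-dependent noise term, i.e.\ you lose exactly the collaborative gain the theorem displays.

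The missing step is to split signal and noise in the honest-average gradient itself, $\|g^{(t)}\|^2\le 2\|\frac{1}{|\mathcal H|}\sum_i S_i^{(t)}\|^2+2\|\frac{1}{|\mathcal H|}\sum_i N_i^{(t)}\|^2$. The first piece goes through $\bar E^{(t)}$ and Lemma~\ref{lem:G-Etilde-decomp}. For the second, apply Lemma~\ref{lem:matrix_subg_mean} to the $|\mathcal H|\tau$ conditionally independent vectors $J^\top_{\mathbf{B}^{(t)}}(X_{i,k})h_i^{(t+1)\top}V_{i,k}$; they are independent because of the fresh representation batch. This gives $\sigma^2\frac{p+\log(|\mathcal H|/\delta)}{|\mathcal H|\tau}$. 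That is precisely where the extra $+\sigma^2$ inside the $p_5$ factor comes from, and $\eta\le 1/(2\mu_3 q)$ is what lets the paper merge it with the curvature-lemma term.

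A secondary issue: your statement that the step-size conditions give $3\eta^2p_1\le\eta\mu_3/8$ is off by a constant factor. The first entry of the $\min$ only guarantees that the $\Delta_B^{(t)}$-coefficient $24\eta^2\bar J^2qH^2(2H^2\bar J^2+16H^2\bar J^2\bar\phi^4/\mu_1^2)$ is at most $3\eta\mu_3$. That would swamp your net curvature of $\eta\mu_3/4$. The paper's two-stage split is more economical than your three-way split of $\|\mathsf F^{(t)}\|^2$. It first bounds $\|\mathbf{B}^{(t)}-\mathbf{B}^\star-\eta g^{(t)}\|^2\le(1-\frac{\eta\mu_3}{4})\Delta_B^{(t)}+\cdots$. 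Only then does it add the $\bar\delta^{(t)}+\bar\zeta^{(t)}$ perturbation via the weights $(1+\gamma)$ and $\eta^2(1+1/\gamma)$ with $\gamma=\eta\mu_3/8$. If you keep the three-way split, you need a correspondingly smaller step size.
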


\begin{proof} We begin by defining $T_{1a} := \mathbf{B}^{(t)} - \mathbf{B}^\star - \eta\nabla_\mathbf{\mathbf{B}}{L}_{\mathcal H}(\{h_i^{(t+1)}\}_i, \mathbf{B}^{(t)})$ and thus 
\begin{align*}
\mathbf{B}^{(t+1)} - \mathbf{B}^\star  =: T_{1a} + \eta T_2.
\end{align*}

\noindent \textbf{Bounding $\|T_{1a}\|^2$.} By expanding the square, we obtain
\begin{align*}
\left\|T_{1a}\right\|^2 = \Delta_B^{(t)} - 2\eta\langle \mathbf{B}^{(t)} - \mathbf{B}^\star,\nabla_\mathbf{\mathbf{B}}{L}_{\mathcal H}(\{h_i^{(t+1)}\}_i, \mathbf{B}^{(t)})\rangle + \eta^2\left\|\nabla_\mathbf{\mathbf{B}}{L}_{\mathcal H}(\{h_i^{(t+1)}\}_i, \mathbf{B}^{(t)})\right\|^2,
\end{align*}
and by using the definition of the gradient with respect to $\mathbf{B}$ of the honest averaged loss at the current iteration, we have for the third term

\begin{align*}
&\left\|\nabla_\mathbf{\mathbf{B}}{L}_{\mathcal H}(\{h_i^{(t+1)}\}_i, \mathbf{B}^{(t)})\right\|^2 \leq 2 \bar{J}^2H^2q \frac{1}{|\mathcal{H}|} \sum_{i \in \mathcal{H}} \frac{1}{\tau} \sum_{k=1}^\tau \|\Delta^{(t)}_{i,k}\|^2\\
&+ 2\left\|\frac{1}{|\mathcal{H}|} \sum_{i \in \mathcal{H}} \frac{1}{\tau} \sum_{k=1}^\tau J^\top_{\mathbf{B}^{( t )}}( X_{i,k} ) h_i^{( t+1 )\top} V_{i,k}\right\|^2\\
&=2 \bar{J}^2H^2q \bar{E}^{(t)} + 2\left\|\frac{1}{|\mathcal{H}|} \sum_{i \in \mathcal{H}} \frac{1}{\tau} \sum_{k=1}^\tau J^\top_{\mathbf{B}^{( t )}}( X_{i,k} ) h_i^{( t+1 )\top}V_{i,k}\right\|^2\\
&\leq 2 \bar{J}^2H^2q \bar{E}^{(t)} + 8C_1 \sigma^2 \bar{J}^2 q H^2 \left(\frac{p + \log( |\mathcal{H}|/\delta )}{|\mathcal{H}|\tau}\right)\\
&\leq 2\bar{J}^2H^2q \left(\left(2H^2\bar{J}^2 + \frac{16H^2\bar{J}^2\bar\phi^4}{\mu_1^2}\right)\Delta_B^{(t)} + \frac{16C_1^2\bar\phi^4\sigma^2}{\mu_1^2}\left(\frac{rq+\log(|\mathcal{H}|/\delta)}{\tau}\right)\right)\\
&+ 8C_1 \sigma^2 \bar{J}^2 q H^2 \left(\frac{p + \log( |\mathcal{H}|/\delta )}{|\mathcal{H}|\tau}\right)\\
&= 2\bar{J}^2H^2q\left(2H^2\bar{J}^2 + \frac{16H^2\bar{J}^2\bar\phi^4}{\mu_1^2}\right)\Delta_B^{(t)} + \frac{32\bar{J}^2H^2qC_1^2\bar\phi^4\sigma^2}{\mu_1^2}\left(\frac{rq+\log(|\mathcal{H}|/\delta)}{\tau}\right)\\
&+ 8C_1 \sigma^2 \bar{J}^2 q H^2 \left(\frac{p + \log( |\mathcal{H}|/\delta )}{|\mathcal{H}|\tau}\right),
\end{align*}
with probability $1-\delta$. Note that the second inequality follows from using Lemma \ref{lem:matrix_subg_mean} for $|\mathcal{H}|\tau$ independent sub-Gaussian random vectors. Therefore, we can write 

\begin{align*}
\left\|T_{1a}\right\|^2 &= \Delta_B^{(t)} - 2\eta\langle \mathbf{B}^{(t)} - \mathbf{B}^\star,\nabla_\mathbf{\mathbf{B}}{L}_{\mathcal H}(\{h_i^{(t+1)}\}_i, \mathbf{B}^{(t)})\rangle + \eta^2\left\|\nabla_\mathbf{\mathbf{B}}{L}_{\mathcal H}(\{h_i^{(t+1)}\}_i, \mathbf{B}^{(t)})\right\|^2\\
&\leq \Delta_B^{(t)} -  \frac{\eta\mu_3}{2}\Delta^{(t)}_B +  \frac{4\eta C_1\bar J^2H^2}{\mu_3} \left( H\bar{\phi}+\sigma \right)^2  \left(\frac{p+\log\left(|\mathcal{H}|/\delta\right)}
{\tau|\mathcal H|}\right) \\
& + \frac{16\eta C_1\bar\phi^2\sigma^2}{\mu_3\mu_1^2}\left(\frac{rq + \log(|\mathcal{H}|/\delta)}{\tau}\right) + 2\eta^2\bar{J}^2H^2q\left(2H^2\bar{J}^2 + \frac{16H^2\bar{J}^2\bar\phi^4}{\mu_1^2}\right)\Delta_B^{(t)}\\
&+\frac{32\eta^2\bar{J}^2H^2qC_1^2\bar\phi^4\sigma^2}{\mu_1^2}\left(\frac{rq+\log(|\mathcal{H}|/\delta)}{\tau}\right)+ 8\eta^2C_1 \sigma^2 \bar{J}^2 q H^2 \left(\frac{p + \log( |\mathcal{H}|/\delta )}{|\mathcal{H}|\tau}\right)\\
&\leq \left(1- \frac{\eta\mu_3}{4}\right)\Delta^{(t)}_B + \frac{17\eta C_1\bar\phi^2\sigma^2}{\mu_3\mu_1^2}\left(\frac{rq + \log(|\mathcal{H}|/\delta)}{\tau}\right)\\
&+ \frac{5\eta C_1\bar J^2H^2}{\mu_3} \left(\left( H\bar{\phi}+\sigma \right)^2 + \sigma^2\right)\left(\frac{p+\log\left(|\mathcal{H}|/\delta\right)}
{\tau|\mathcal H|}\right)
\end{align*}
where the first inequality follows from Lemma \ref{lem:G-empirical-curvature-learned-heads}. The second inequality is due to the following step-size condition

$$\eta \leq \min\left\{\frac{\mu_3}{8\bar{J}^2H^2q\left(2H^2\bar{J}^2 + \frac{16H^2\bar{J}^2\bar\phi^4}{\mu_1^2}\right)}, \frac{1}{32\bar J^2 H^2 q \bar \phi^2 \mu_3}, \frac{1}{2\mu_3q}  \right\}.$$

As $T_2 = \bar\delta^{(t)}+\bar\zeta^{(t)}$, we have that $\|T_2\|^2 \le 2\|\bar\delta^{(t)}\|^2 + 2\|\bar \zeta^{(t)}\|^2$. Therefore, we can write
\begin{align*}
    \Delta^{(t+1)}_B =  \|T_{1a}\|^2 + 2\eta \langle T_{1a},  T_2 \rangle + \eta^2 \|T_2\|^2,
\end{align*}
we then use Young's inequality to obtain $2\eta \langle T_{1a}, T_2 \rangle \leq \gamma \|T_{1a}\|^2 + \frac{1}{\gamma} \|T_2\|^2,$ for some $\gamma >0$. Hence, we can write 
\begin{align*}
    \Delta^{(t+1)}_B &\leq  (1+\gamma)\|T_{1a}\|^2 +  \eta^2\left(1+\frac{1}{\gamma}\right) \|T_2\|^2 \\
    &\leq (1+\gamma)\left(1- \frac{\eta\mu_3}{4}\right)\Delta^{(t)}_B + \frac{17(1+\gamma)\eta C_1\bar\phi^2\sigma^2}{\mu_3\mu_1^2}\left(\frac{rq + \log(|\mathcal{H}|/\delta)}{\tau}\right)\\
    &+ \frac{5(1+\gamma)\eta C_1\bar J^2H^2}{\mu_3} \left(\left( H\bar{\phi}+\sigma \right)^2  + \sigma^2\right)\left(\frac{p+\log\left(|\mathcal{H}|/\delta\right)}
{\tau|\mathcal H|}\right) + \eta^2\left(1+\frac{1}{\gamma}\right) \|T_2\|^2\\
&\leq \left(1-\frac{\eta\mu_3}{8}\right)\Delta^{(t)}_B + \frac{34\eta C_1\bar\phi^2\sigma^2}{\mu_3\mu_1^2}\left(\frac{rq + \log(|\mathcal{H}|/\delta)}{\tau}\right)\\
&+\frac{10\eta C_1\bar J^2H^2}{\mu_3} \left(\left( H\bar{\phi}+\sigma \right)^2  + \sigma^2\right)\left(\frac{p+\log\left(|\mathcal{H}|/\delta\right)}
{\tau|\mathcal H|}\right) + \eta\left(2+\frac{16}{\mu_3}\right) \left(\|\bar\delta^{(t)}\|^2 + \|\bar \zeta^{(t)}\|^2\right),
\end{align*}
where the third inequality follows from selecting $\gamma = \frac{\eta \mu_3}{8}$, which implies that 
$$
\left(1+\frac{\eta\mu_3}{8}\right)\left(1-\frac{\eta\mu_3}{4}\right) = 1-\frac{\eta\mu_3}{8}
-\frac{(\eta\mu_3)^2}{32}\\
\leq 1-\frac{\eta\mu_3}{8},
$$
and the remaining terms in the inequality follow from $\eta \leq \frac{8}{\mu_3}.$ We then average over the iterations $t \in \{0,1,\ldots,T-1\}$ to obtain

\begin{align*}
    \frac{1}{T}\sum_{t =0}^{T-1}\Delta^{(t+1)}_B &\leq  \left(1-\frac{\eta\mu_3}{8}\right)\frac{1}{T}\sum_{t =0}^{T-1}\Delta^{(t)}_B + \frac{34\eta C_1\bar\phi^2\sigma^2}{\mu_3\mu_1^2}\left(\frac{rq + \log(|\mathcal{H}|T/\delta)}{\tau}\right)\\
&+\frac{10\eta C_1\bar J^2H^2}{\mu_3} \left(\left( H\bar{\phi}+\sigma \right)^2  + \sigma^2\right)\left(\frac{p+\log\left(|\mathcal{H}|T/\delta\right)}
{\tau|\mathcal H|}\right)\\
&+ \eta\left(2+\frac{16}{\mu_3}\right) \left(\frac{1}{T}\sum_{t =0}^{T-1}\|\bar\delta^{(t)}\|^2 + \frac{1}{T}\sum_{t =0}^{T-1}\|\bar \zeta^{(t)}\|^2\right),
\end{align*}
which implies 
\begin{align*}
    \Delta_{\mathbf{B},T+1} &\leq  \left(1-\frac{\eta\mu_3}{8}\right)\Delta_{\mathbf{B},T+1} + \left(1-\frac{\eta\mu_3}{8}\right)\frac{\Delta^{(0)}_{\mathbf{B}}}{T}  + \frac{34\eta C_1\bar\phi^2\sigma^2}{\mu_3\mu_1^2}\left(\frac{rq + \log(|\mathcal{H}|T/\delta)}{\tau}\right)\\
&+\frac{10\eta C_1\bar J^2H^2}{\mu_3} \left(\left( H\bar{\phi}+\sigma \right)^2  + \sigma^2\right)\left(\frac{p+\log\left(|\mathcal{H}|T/\delta\right)}
{\tau|\mathcal H|}\right)\\
&+ \eta\left(2+\frac{16}{\mu_3}\right) \left(\frac{1}{T}\sum_{t =0}^{T-1}\|\bar\delta^{(t)}\|^2 + \frac{1}{T}\sum_{t =0}^{T-1}\|\bar \zeta^{(t)}\|^2\right),
\end{align*}
and thus
\begin{align*}
    \Delta_{\mathbf{B},T+1} &\leq \frac{8\Delta^{(0)}_{\mathbf{B}}}{\eta \mu_3 T}  + \frac{272 C_1\bar\phi^2\sigma^2}{\mu_3^2\mu_1^2}\left(\frac{rq + \log(|\mathcal{H}|T/\delta)}{\tau}\right)\\
&+\frac{80 C_1\bar J^2H^2}{\mu_3^2} \left(\left( H\bar{\phi}+\sigma \right)^2  + \sigma^2\right)\left(\frac{p+\log\left(|\mathcal{H}|T/\delta\right)}
{\tau|\mathcal H|}\right)\\
&+ \frac{8}{\mu_3}\left(2+\frac{16}{\mu_3}\right) \left(\frac{1}{T}\sum_{t =0}^{T-1}\|\bar\delta^{(t)}\|^2 + \frac{1}{T}\sum_{t =0}^{T-1}\|\bar \zeta^{(t)}\|^2\right),
\end{align*}
and by leveraging Lemma \ref{lem:G-agg-error}, we write
\begin{align*}
    \Delta_{\mathbf{B},T+1} &\leq \frac{8\Delta^{(0)}_{\mathbf{B}}}{\eta \mu_3 T}  + \frac{272 C_1\bar\phi^2\sigma^2}{\mu_3^2\mu_1^2}\left(\frac{rq + \log(|\mathcal{H}|T/\delta)}{\tau}\right)\notag \\
&+\frac{80 C_1\bar J^2H^2}{\mu_3^2} \left(\left( H\bar{\phi}+\sigma \right)^2  + \sigma^2\right)\left(\frac{p+\log\left(|\mathcal{H}|T/\delta\right)}
{\tau|\mathcal H|}\right) + \frac{8}{\mu_3}\left(2+\frac{16}{\mu_3}\right) \frac{1}{T}\sum_{t =0}^{T-1}\|\bar\delta^{(t)}\|^2\notag \\
&+ \frac{16 \kappa C_2}{\mu_3}\left(2+\frac{16}{\mu_3}\right)\frac{1-\beta}{1+\beta}\sigma_g^2\log(T/\delta)\left(1+\frac{1}{|\mathcal H|}\right) +  \frac{16}{\mu_3}\left(2+\frac{16}{\mu_3}\right)\kappa \bar{G}_T,
\end{align*}
which completes the proof.
\end{proof}

Therefore, we proceed by leveraging Theorem \ref{theorem:Delta_BT_nonlinear}, Lemma \ref{lem:G-momentum-dev}, and \eqref{eq:G-GT-from-DeltaBT} to write the bound on $\bar{G}_T$ for the nonlinear representation learning setting. For this, we can write

\begin{align*}
&\bar{G}_T \leq  p_1 \Delta_{\mathbf{B},T+1} +  p_1 \frac{\Delta^{(0)}_{\mathbf{B}}}{T} + p_2\sigma^2 \left(\frac{p \vee rq + \log(|\mathcal H|T/\delta)}{\tau}\right)\\
&\leq p_1 \Big(\frac{p_3\Delta^{(0)}_{\mathbf{B}}}{\eta T}  + p_4\sigma^2\left(\frac{rq + \log(|\mathcal{H}|T/\delta)}{\tau}\right)\notag + p_5 \left(\left( H\bar{\phi}+\sigma \right)^2  + \sigma^2\right)\left(\frac{p+\log\left(|\mathcal{H}|T/\delta\right)}
{\tau|\mathcal H|}\right)\\
&\notag + p_6 \frac{1}{T}\sum_{t =0}^{T-1}\|\bar\delta^{(t)}\|^2+p_7\frac{1-\beta}{1+\beta}\sigma_g^2\log(T/\delta) +  2p_6\kappa \bar{G}_T  \Big) +  p_1 \frac{\Delta^{(0)}_{\mathbf{B}}}{T}\\
&+ p_2\sigma^2 \left(\frac{p \vee rq + \log(|\mathcal H|T/\delta)}{\tau}\right)\\
&= \frac{\Delta^{(0)}_{\mathbf{B}}}{T}\left(\frac{p_8}{\eta} +  p_1\right) + p_9 \left(\left( H\bar{\phi}+\sigma \right)^2  + \sigma^2\right)\left(\frac{p+\log\left(|\mathcal{H}|T/\delta\right)}
{\tau|\mathcal H|}\right) + p_{10} \frac{1}{T}\sum_{t =0}^{T-1}\|\bar\delta^{(t)}\|^2\notag \\
&+p_{11}\frac{1-\beta}{1+\beta}\sigma_g^2\log(T/\delta) +  2p_{10}\kappa \bar{G}_T  + \sigma^2 \left(p_2 \left(\frac{p \vee rq + \log(|\mathcal H|T/\delta)}{\tau}\right) + p_{12}\left(\frac{rq + \log(|\mathcal{H}|T/\delta)}{\tau}\right) \right)\\
&\leq \frac{\Delta^{(0)}_{\mathbf{B}}}{T}\left(\frac{p_8}{\eta} +  p_1\right) + p_9 \left(\left( H\bar{\phi}+\sigma \right)^2  + \sigma^2\right)\left(\frac{p+\log\left(|\mathcal{H}|T/\delta\right)}
{\tau|\mathcal H|}\right) +p_{11}\frac{1-\beta}{1+\beta}\sigma_g^2\log(T/\delta)  \notag \\
&+  2p_{10}\kappa \bar{G}_T  + \sigma^2 \left(p_2 \left(\frac{p \vee rq + \log(|\mathcal H|T/\delta)}{\tau}\right) + p_{12}\left(\frac{rq + \log(|\mathcal{H}|T/\delta)}{\tau}\right) \right)\\
&+ p_{10} \left(\frac{\bar b}{1-\bar a}\bar{Q}_T + C_2\frac{(1-\beta)^2}{|\mathcal H|(1-\bar a)}\sigma_g^2\log(T/\delta) + \frac{C_2\eta L_1}{1-\bar a}(1+\eta L_1)\beta^2\kappa\left(\frac{1-\beta}{1+\beta}\sigma_g^2\log(T/\delta) + \bar{G}_T\right)\right)\\
&=\frac{\Delta^{(0)}_{\mathbf{B}}}{T}\left(\frac{p_8}{\eta} +  p_1\right) + p_9 \left(\left( H\bar{\phi}+\sigma \right)^2  + \sigma^2\right)\left(\frac{p+\log\left(|\mathcal{H}|T/\delta\right)}
{\tau|\mathcal H|}\right) +p_{13}\frac{1-\beta}{1+\beta}\sigma_g^2\log(T/\delta)  \notag \\
&+  p_{14}\kappa \bar{G}_T  + \sigma^2 \left(p_2 \left(\frac{p \vee rq + \log(|\mathcal H|T/\delta)}{\tau}\right) + p_{12}\left(\frac{rq + \log(|\mathcal{H}|T/\delta)}{\tau}\right) \right)\\
&+  p_{10} C_2\frac{(1-\beta)^2}{|\mathcal H|(1-\bar a)}\sigma_g^2\log(T/\delta)+ p_{10} \frac{\bar{b}}{1-\bar{a}}\bar{Q}_T,
\end{align*}
where $p_8:= p_1p_3$, $p_{12}:= p_1p_4$, $p_9:= p_1p_5$, $p_{10}:= p_1p_6$, $p_{11}:= p_1p_7$, $p_{13} = p_{11} + p_{10}\frac{C_2\eta L_1}{1-\bar{a}}(1+\eta L_1)\beta^2\kappa$, and  $p_{14}:= 2p_{10} + p_{10}\frac{C_2\eta L_1}{1-\bar{a}}(1+\eta L_1)\beta^2$. The second inequality follows from applying \eqref{eq:expression_Delta_T}. In addition, the third inequality is due to Lemma \ref{lem:G-momentum-dev}. Therefore, by requiring $\kappa \leq \frac{1}{2p_{14}}$, we obtain

\begin{align*}
    \bar{G}_T &\leq \frac{\Delta^{(0)}_{\mathbf{B}}}{T}\left(\frac{p_8}{\eta} +  p_1\right) + p_9 \left(\left( H\bar{\phi}+\sigma \right)^2  + \sigma^2\right)\left(\frac{p+\log\left(|\mathcal{H}|T/\delta\right)}
{\tau|\mathcal H|}\right) +p_{13}\frac{1-\beta}{1+\beta}\sigma_g^2\log(T/\delta)  \notag \\
&+ \sigma^2 \left(p_2 \left(\frac{p \vee rq + \log(|\mathcal H|T/\delta)}{\tau}\right) + p_{12}\left(\frac{rq + \log(|\mathcal{H}|T/\delta)}{\tau}\right)\right)+   p_{10} C_2\frac{(1-\beta)^2}{|\mathcal H|(1-\bar{a})}\sigma_g^2\log(T/\delta)\\
&+ p_{10} \frac{\bar{b}}{1-\bar{a}}\bar{Q}_T,
\end{align*}
where the factor of $2$ is absorbed by the coefficients. Let us define the following quantities. 

\begin{align*}
p_1 &:= 16\bar{J}^2 q H^2\left(2H^2\bar{J}^2 + \frac{16H^2\bar{J}^2\bar{\phi}^4}{\mu_1^2}\right)
= 32\bar{J}^4 q H^4\left(1 + \frac{8\bar{\phi}^4}{\mu_1^2}\right),\\
p_2 &:= 16\bar{J}^2 q H^2\left(8C_1 + \frac{16C_1^2\bar{\phi}^4}{\mu_1^2}\right),
\end{align*}

\begin{align*}
p_8 &= p_1 \cdot \frac{8}{\mu_3}
= \frac{256\bar{J}^4 q H^4}{\mu_3}\left(1 + \frac{8\bar{\phi}^4}{\mu_1^2}\right),\\
p_{12} &= p_1 \cdot \frac{272 C_1\bar{\phi}^2}{\mu_3^2\mu_1^2}
= \frac{8704 C_1 \bar{J}^4 q H^4 \bar{\phi}^2}{\mu_3^2\mu_1^2}\left(1 + \frac{8\bar{\phi}^4}{\mu_1^2}\right),\\
p_9 &= p_1 \cdot \frac{80 C_1\bar{J}^2 H^2}{\mu_3^2}
= \frac{2560 C_1 \bar{J}^6 q H^6}{\mu_3^2}\left(1 + \frac{8\bar{\phi}^4}{\mu_1^2}\right),\\
p_{10} &= p_1 \cdot \frac{8}{\mu_3}\left(2 + \frac{16}{\mu_3}\right)
= \frac{256\bar{J}^4 q H^4}{\mu_3}\left(2 + \frac{16}{\mu_3}\right)\left(1 + \frac{8\bar{\phi}^4}{\mu_1^2}\right),\\
p_{11} &= p_1 \cdot \frac{16\kappa C_2}{\mu_3}\left(2 + \frac{16}{\mu_3}\right)\left(1 + \frac{1}{|\mathcal{H}|}\right)\\
&= \frac{512\kappa C_2 \bar{J}^4 q H^4}{\mu_3}\left(2 + \frac{16}{\mu_3}\right)\left(1 + \frac{1}{|\mathcal{H}|}\right)\left(1 + \frac{8\bar{\phi}^4}{\mu_1^2}\right),
\end{align*}
\begin{align*}
p_{13} := p_{11} + p_{10}\frac{C_2 \eta L_1}{1-\bar{a}}\left(1+\eta L_1\right)\beta^2\kappa,
\end{align*}
and 
\begin{align*}
p_{13} = \frac{256\bar{J}^4 q H^4}{\mu_3}\left(2 + \frac{16}{\mu_3}\right)\left(1 + \frac{8\bar{\phi}^4}{\mu_1^2}\right)
\left[2\kappa C_2\left(1+\frac{1}{|\mathcal{H}|}\right) + \frac{C_2 \eta L_1\left(1+\eta L_1\right)\beta^2\kappa}{1-\bar{a}}\right].
\end{align*}

\begin{lemma}(Gradient heterogeneity bound for nonlinear representation learning) \label{lemma:GT_nonlinear} Suppose that the conditions of Theorem \ref{theorem:Delta_BT_nonlinear} and Lemma \ref{lem:G-momentum-dev} hold with $\kappa \leq \frac{1}{2p_{14}}$. Then, with probability at least $1-\delta$, it holds that
\begin{align*}
     \bar{G}_T &\leq \frac{\Delta^{(0)}_{\mathbf{B}}}{T}\left(\frac{p_8}{\eta} +  p_1\right) + p_9 \left(\left( H\bar{\phi}+\sigma \right)^2  + \sigma^2\right)\left(\frac{p+\log\left(|\mathcal{H}|T/\delta\right)}
{\tau|\mathcal H|}\right) +p_{13}\frac{1-\beta}{1+\beta}\sigma_g^2\log(T/\delta)  \notag \\
&+ \sigma^2 \left(p_2 \left(\frac{p \vee rq + \log(|\mathcal H|T/\delta)}{\tau}\right) + p_{12}\left(\frac{rq + \log(|\mathcal{H}|T/\delta)}{\tau}\right)\right)+ p_{10} \frac{\bar{b}}{1-\bar{a}}\bar{Q}_T  \notag \\
&+ p_{10} C_2\frac{(1-\beta)^2}{|\mathcal H|(1-\bar{a})}\sigma_g^2\log(T/\delta).   
\end{align*}
\end{lemma}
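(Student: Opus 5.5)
This bound is obtained by chaining three estimates already established above and then absorbing a self-referential $\kappa\bar G_T$ term. I would start from the per-round reduction \eqref{eq:G-GT-from-DeltaBT}. It comes from the signal--noise split \eqref{eq:grad_signal_noise_split} together with Lemma~\ref{lem:G-Etilde-decomp}, after a union bound over the $T$ rounds and $|\mathcal H|$ clients that produces the $\log(|\mathcal H|T/\delta)$ factors:
\begin{align*}
\bar G_T\le p_1\Delta_{\mathbf{B},T+1}+p_1\frac{\Delta^{(0)}_{\mathbf{B}}}{T}+p_2\sigma^2\Big(\frac{p\vee rq+\log(|\mathcal H|T/\delta)}{\tau}\Big).
\end{align*}
This shows that the heterogeneity is driven only by the representation error, plus a finite-sample term.

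Next I would insert the bound \eqref{eq:expression_Delta_T} of Theorem~\ref{theorem:Delta_BT_nonlinear} for $\Delta_{\mathbf{B},T+1}$. That bound is valid because the iterates stay in $\mathcal N(\mathbf{B}^\star,\rho)$, where the empirical curvature of Lemma~\ref{lem:G-empirical-curvature-learned-heads} applies. The result contains three kinds of terms:
\begin{itemize}
\item the initialization term $p_1p_3\Delta^{(0)}_{\mathbf{B}}/(\eta T)$;
\item the statistical terms with rates $\mathfrak s_{\mathcal H,\tau}$ and $\mathfrak s_\tau$, weighted by $p_\sigma$ and $\sigma^2$;
\item the momentum-deviation average $\frac1T\sum_t\|\bar\delta^{(t)}\|^2$, a $\kappa$-weighted stochastic term, and $2p_1p_6\kappa\bar G_T$.
\end{itemize}

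I would then replace the momentum-deviation average using Lemma~\ref{lem:G-momentum-dev}. This produces $\frac{\bar b}{1-\bar a}\bar Q_T$, a term $\frac{(1-\beta)^2}{|\mathcal H|(1-\bar a)}\sigma_{\mathsf g}^2\log(T/\delta)$, and a further $\kappa\bar G_T$ contribution.

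All $\bar G_T$ terms on the right are collected into $p_{14}\kappa\bar G_T$. The main obstacle is the absorption step: I must check that $\kappa\le\underline\kappa$ in \eqref{eq:main-kappa-condition} implies $p_{14}\kappa\le 1/2$. Given the explicit form of $p_{14}\asymp p_1p_6$, this is plausible, since $p_1\asymp \bar J^4qH^4(1+\bar\phi^4/\mu_1^2)$ and $p_6\asymp 1/\mu_3+1/\mu_3^2$. The factor $\eta L_1/(1-\bar a)\asymp1$ follows from $1-\bar a\asymp\eta L_1$, which holds when $\eta\le\bar\eta$. Once this is checked, I move the $\kappa\bar G_T$ term to the left and double the constants, which recovers Lemma~\ref{lemma:GT_nonlinear}.

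Finally I would simplify the constants. Using $1-\beta\asymp\eta L_1$ (from $\beta^2=1-24\eta L_1$), $\frac{1-\beta}{1+\beta}\lesssim\eta$ and $\frac{(1-\beta)^2}{1-\bar a}\lesssim\eta$, so every stochastic-gradient term is $\lesssim\eta(1+1/|\mathcal H|)\sigma_{\mathsf g}^2\log(T/\delta)$. Also $\bar b/(1-\bar a)\asymp1$, so the $\bar Q_T$ term keeps an absolute constant. The remaining initialization term $p_1\Delta^{(0)}_{\mathbf{B}}/T$ is dominated by $\Delta^{(0)}_{\mathbf{B}}/(\eta T)$ because $\eta\le\bar\eta\lesssim1$.

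For the failure probability, I would allocate $\delta/4$ to each of the four high-probability events and take a union bound: covariance concentration with head recovery, the curvature concentration, the aggregation error bound, and the momentum deviation bound. Rescaling $\delta$ then only changes constants inside the logarithms, which gives the claimed bound with probability $1-\delta$.
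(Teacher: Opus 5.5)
Your proposal matches the paper's derivation: start from \eqref{eq:G-GT-from-DeltaBT}, substitute \eqref{eq:expression_Delta_T} from Theorem~\ref{theorem:Delta_BT_nonlinear} (which yields $p_8=p_1p_3$, $p_{10}=p_1p_6$, and so on), then substitute Lemma~\ref{lem:G-momentum-dev}, collect every $\bar G_T$ term into $p_{14}\kappa\bar G_T$, and absorb it, folding the resulting factor of $2$ into the constants. The only difference is that this statement already assumes $\kappa\le 1/(2p_{14})$, so your check against $\underline\kappa$ and your final $\lesssim$-simplification are needed only for the main-text Lemma~\ref{theorem:gradient_heterogeneity_nonlinear_mb}, not for this explicit-constant version.
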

\begin{proof}
The proof follows from the above derivations.
\end{proof}

\noindent\textbf{Order of the condition on $\kappa$.} We recall that the gradient heterogeneity bound requires
\begin{align*}
\kappa \le \frac{1}{p_{10}}.
\end{align*}
In particular, using the scaling of the constants, we have
\begin{align*}
p_1 = 32\bar{J}^4 q H^4\left(1 + \frac{8\bar{\phi}^4}{\mu_1^2}\right) = \mathcal{O}\left(\bar{J}^4 q H^4\left(1+\frac{\bar\phi^4}{\mu_1^2}\right)\right),
\end{align*}
and therefore
\begin{align*}
p_{10} = p_1\cdot\frac{8}{\mu_3}\left(2+\frac{16}{\mu_3}\right)
= \mathcal{O}\left(\bar{J}^4 q H^4\left(1+\frac{\bar\phi^4}{\mu_1^2}\right)\left(\frac{1}{\mu_3}+\frac{1}{\mu_3^2}\right)\right).
\end{align*}
Therefore, we have
\begin{align*}
\kappa = \mathcal{O}\left(\left[\bar{J}^4 q H^4\left(1+\frac{\bar\phi^4}{\mu_1^2}\right)\left(\frac{1}{\mu_3}+\frac{1}{\mu_3^2}\right)\right]^{-1}\right).
\end{align*}
Hence, the robust aggregation coefficient scales with the client-diversity (representation
identifiability) parameter $\mu_3$ through the combined factor $\left(\mu_3^{-1}+\mu_3^{-2}\right)^{-1}$, and inversely with the Jacobian bound $\bar{J}$, the uniform head bound $H$, the output dimension $q$, and the curvature ratio $\bar\phi^4/\mu_1^2$ inherited from the Polyak-\L{}ojasiewicz condition. This is the direct nonlinear analogue of
the condition $\kappa = \mathcal{O}\left((\lambda^h_{\min})^2/(H^4R^4(2R^2+1)^2)\right)$ obtained in the linear
representation setting, with $\mu_3$ playing the role of $\lambda^h_{\min}$ and $(\bar J,\bar\phi,\mu_1)$
replacing the covariate bound $R$.

\section{Ergodic Convergence Analysis} \label{sec:convergence_nonlinear_rep}

We now move to prove the convergence for the nonlinear representation learning setting, bounding $\bar{Q}_T$. For this, let us first define
$p_{16} := 24\kappa p_9$, $p_{17} = 24\kappa p_2$, $p_{18}:= 24\kappa p_{12}$, $p_{15}:=24p_{10}\frac{\bar{b}}{1-\bar{a}}$, $$
p_{19}:= 1152 \kappa L_1 \left(C_2 \left(1+\frac1{|\mathcal H|}\right)+p_{13}\right) + 576 L_1^2 \left(\frac{1}{L_1}+\frac{48\kappa \eta p_{10}}{(1-\bar{a})}\right)\frac{C_2}{|\mathcal H|},
$$
$p_{20} = 48\kappa\left(\frac{p_8}{\eta} +  p_1\right)$, $p_{21} := 4p_{16}$, $p_{22} := 2p_{17}$, and $p_{23} := 2p_{18}$.

\begin{theorem}\label{theorem:convergence_nonlinear} Suppose the conditions of Lemma \ref{lemma:GT_nonlinear} hold. Suppose that $\kappa \leq \frac{1}{2p_{15}}$. Then, for every $\delta \in (0,1)$, with probability $1-\delta$, it holds that
\begin{align}\label{eq:convergence_nonlinear}
    \bar Q_T &\leq  \frac{16\left(
L_{\mathcal H}(\{h_i^{(1)}\}_{i\in\mathcal H},\mathbf{B}^{(0)})
-L_{\mathcal{H}}(\{h^\star_i\}_{i \in \mathcal{H}}, \mathbf{B}^\star)
\right)}{\eta T}
+\frac{1}{L\eta T} \left\| \nabla_\mathbf{\mathbf{B}} L_{\mathcal H}(\{h_i^{(1)}\}_{i\in\mathcal H},\mathbf{B}^{(0)})\right\|^2
\\
&+ p_{20}\frac{\Delta^{(0)}_{\mathbf{B}}}{T} + p_{19}\eta \sigma_g^2\log(T/\delta)+ p_{21} \left(\left( H\bar{\phi}+\sigma \right)^2  + \sigma^2\right)\left(\frac{p+\log\left(|\mathcal{H}|T/\delta\right)}
{\tau|\mathcal H|}\right)\notag \\
&+\sigma^2 \left(p_{22} \left(\frac{p \vee rq + \log(|\mathcal H|T/\delta)}{\tau}\right) + p_{23}\left(\frac{rq + \log(|\mathcal{H}|T/\delta)}{\tau}\right)\right). \notag\end{align}    
\end{theorem}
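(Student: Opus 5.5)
The argument is a smoothness-based descent analysis for the representation parameter, built on the Lyapunov potential
\[
\Phi_t := L_{\mathcal H}(\{h_i^{(t+1)}\}_i,\mathbf{B}^{(t)}) - L_{\mathcal H}(\{h_i^\star\}_i,\mathbf{B}^\star) + \frac{c}{L_1}\|\bar\delta^{(t)}\|^2 ,
\]
in the style of momentum-based robust SGD analyses. The first term is the honest-loss gap and the second weights the momentum deviation $\bar\delta^{(t)}$ by a constant $c$ to be tuned. The aggregated direction is $\mathsf F^{(t)} = \nabla_{\mathbf{B}} L_{\mathcal H}(\{h_i^{(t+1)}\}_i,\mathbf{B}^{(t)}) + \bar\delta^{(t)} + \bar\zeta^{(t)}$.

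\textbf{Step 1: one-step descent in $\mathbf{B}$.}
\begin{itemize}
\item Because $h_i^{(t+2)}$ is the exact head minimizer at $\mathbf{B}^{(t+1)}$, we have $L_{\mathcal H}(\{h_i^{(t+2)}\},\mathbf{B}^{(t+1)}) \le L_{\mathcal H}(\{h_i^{(t+1)}\},\mathbf{B}^{(t+1)})$. The head re-fit can therefore only decrease the loss.
\item $L_1$-smoothness in $\mathbf{B}$ then gives
\[
L_{\mathcal H}^{(t+1)} \le L_{\mathcal H}^{(t)} - \eta\langle \nabla, \mathsf F^{(t)}\rangle + \tfrac{L_1\eta^2}{2}\|\mathsf F^{(t)}\|^2 .
\]
\item Expanding $\mathsf F^{(t)}$ and applying Young's inequality, with $\eta \le 1/(24L_1)$, yields
\[
L_{\mathcal H}^{(t+1)} \le L_{\mathcal H}^{(t)} - \tfrac{\eta}{4}\|\nabla\|^2 + 2\eta\left(\|\bar\delta^{(t)}\|^2 + \|\bar\zeta^{(t)}\|^2\right).
\]
\end{itemize}

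\textbf{Step 2: telescoping and combining the auxiliary bounds.}
\begin{itemize}
\item Summing over $t$ and dividing by $\eta T$ gives $\tfrac{1}{T}\sum\|\nabla\|^2$ in terms of $\Phi_0/(\eta T)$ plus the averaged $\|\bar\delta\|^2$ and $\|\bar\zeta\|^2$. The initial momentum term produces the $\frac{1}{L_1\eta T}\|\nabla_{\mathbf{B}} L_{\mathcal H}(\{h_i^{(1)}\},\mathbf{B}^{(0)})\|^2$ contribution, since $m^{(0)}=0$ makes $\bar\delta^{(0)} = -\nabla$ at time $0$.
\item Lemma \ref{lem:G-agg-error} bounds the aggregation error by $2\kappa\bar G_T$ plus $\kappa\sigma_g^2$ terms. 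With $1-\beta \asymp \eta L_1$, these become $\eta\kappa\sigma_g^2\log(T/\delta)$.
\item Lemma \ref{lem:G-momentum-dev} bounds the averaged $\|\bar\delta\|^2$ by $\frac{\bar b}{1-\bar a}\bar Q_T$ plus $\eta\sigma_g^2/|\mathcal H|$ and $\kappa\bar G_T$ terms. The ratio $\bar b/(1-\bar a)$ is $O(1)$ but can be made a small constant by tuning the factor $24$ in $\beta$. Alternatively, the $\bar Q_T$ contribution can be carried along and absorbed at the end.
\item Finally, substitute Lemma \ref{lemma:GT_nonlinear} for $\bar G_T$. Each substitution multiplies by $\kappa$, which generates $p_{16}$ through $p_{18}$ and $p_{20}$ (each carrying $24\kappa$ or $48\kappa$ factors) and the $\kappa p_{13}$ term in $p_{19}$.
\end{itemize}

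\textbf{Step 3: relating the two gradient norms.} The descent controls the empirical gradient norm, while $\bar Q_T$ uses $\nabla_{\mathbf{B}}\mathbb E_X L_{\mathcal H}$. I would bridge them with
\[
\|\mathbb E_X\nabla\|^2 \le 2\|\nabla\|^2 + 2\|\nabla - \mathbb E_X\nabla\|^2 .
\]
The concentration part of Lemma \ref{lem:G-empirical-curvature-learned-heads}'s proof bounds the deviation by $C_1^2\bar J^2H^2(H\bar\phi+\sigma)^2(p+\log)/(\tau|\mathcal H|)$. A union bound over $t$ turns $\log(1/\delta)$ into $\log(|\mathcal H|T/\delta)$. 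This is also the origin of the leading factor $16$ in the bound.

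\textbf{Step 4: absorbing the self-referential terms (main obstacle).} $\bar Q_T$ reappears on the right through the coefficient $24\kappa p_{10}\bar b/(1-\bar a)\,\bar Q_T$, which comes from $\bar G_T$ inside the aggregation error, and also through the momentum-deviation term. The first is exactly $\kappa p_{15}\bar Q_T$, so the hypothesis $\kappa \le 1/(2p_{15})$ makes it at most $\tfrac12\bar Q_T$, which moves to the left side. The direct momentum contribution $\frac{2\bar b}{1-\bar a}\bar Q_T$ is not $\kappa$-small. I would handle it with the Lyapunov weighting $c$, so that the momentum recursion is combined before averaging and the factor becomes strictly less than the descent coefficient $\tfrac14$. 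Getting these constants to close, together with the mismatch between empirical and expected gradients in the $\bar Q_T$ appearing in Lemma \ref{lem:G-momentum-dev}, is where the care goes. Once both absorptions are done, collecting the coefficients gives the stated bound with $p_{19}$ through $p_{23}$.
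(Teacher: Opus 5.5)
Your outline matches the paper's proof. The paper runs a descent argument on the honest loss with the momentum deviation inside the potential, and states the outcome in one line: $\bar Q_T\le\frac{8(\cdot)}{\eta T}+\frac{1}{2L_1\eta T}\|\nabla_{\mathbf B}L_{\mathcal H}(\{h_i^{(1)}\}_i,\mathbf B^{(0)})\|^2+24\kappa\bar G_T$ plus noise terms. It then applies Lemma~\ref{lemma:GT_nonlinear} to $\bar G_T$ and absorbs the resulting $\kappa p_{15}\bar Q_T$ using $\kappa\le 1/(2p_{15})$. A few points of bookkeeping differ from your plan:
\begin{itemize}
\item Because $\bar\delta^{(t)}$ stays inside the potential, no direct $\frac{\bar b}{1-\bar a}\bar Q_T$ term ever arises. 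So the weight-$c$ option in your Step~4 is the one that matches, not Lemma~\ref{lem:G-momentum-dev}.
\item The constants $16=2\cdot 8$ and $1/L=2\cdot\frac{1}{2L_1}$ come from dividing by $1-\kappa p_{15}\ge\frac12$, not from your Step~3.
\item Step~3 does not appear in the paper, which simply identifies the gradient controlled by descent with $\bar Q_T$. It also cannot return the displayed inequality. It adds $2C_1^2\bar J^2H^2(H\bar\phi+\sigma)^2\frac{p+\log(|\mathcal H|T/\delta)}{\tau|\mathcal H|}$ with no factor $\kappa$, while the $\tau$-dependent coefficients $p_{21},p_{22},p_{23}$ in the statement are all proportional to $\kappa$. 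It would further turn $16$ into $32$ and force $\kappa\le 1/(4p_{15})$.
\end{itemize}

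The step that actually fails is the head-refit monotonicity in Step~1. Under the protocol of Section~\ref{sec:robust_fl_setup}, $h_i^{(t+2)}$ minimizes $L_i(\cdot,\mathbf B^{(t+1)})$ on the round-$(t+1)$ head-fitting batch. The loss in your smoothness inequality is instead the one whose gradient is transmitted, computed on the independent round-$t$ representation batch, and every round draws fresh batches. The two sides of your inequality are therefore values of different random functions, and the sum does not telescope. Each round leaves uncontrolled terms: the sampling fluctuation of the loss between independent batches, and the excess risk of a head fitted on yet another batch. The drift bound $\|d^{(t)}\|\le\eta L_1\|\mathsf F^{(t-1)}\|$ behind the momentum recursion has the same defect. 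You cannot fall back on a single fixed dataset, because Lemma~\ref{lemma:GT_nonlinear} relies on the independence of the two batches.

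The paper's proof uses the same telescoped potential without justification, so it contains no argument you missed. In fact the displayed bound appears too strong. At $\kappa=0$ (so $p_{20}=\dots=p_{23}=0$), $\sigma_g=0$ and fixed $\eta\le\bar\eta$, its right-hand side is $O(1/(\eta T))$. Yet the label noise in each fresh representation batch acts as unbiased gradient noise of variance $\asymp\sigma^2/(\tau|\mathcal H|)$ for the population objective. This keeps $\|\mathbf B^{(t)}-\mathbf B^\star\|^2$, and hence $\bar Q_T$, of order $\eta\sigma^2/(\tau|\mathcal H|)$. That floor decays only like $1/\log T$ even with $\tau$ at the burn-in level.

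One concrete instance: take $p=q=1$ with small bounded label noise, $\phi_{\mathbf B}(X)=(\psi(X_1),\cos(X_2+\mathbf B))^\top$, and $X_2\sim\mathrm{Unif}[0,2\pi]$ independent of $X_1$. For a head $h=(h_1,h_2)$ and $h^\star=(a,b)$, the population gradient is $h_2b\sin(\mathbf B-\mathbf B^\star)$. Hence $L_3\le|b|\rho_0$, and $\mu_3=b^2/2$ meets the diversity condition once $\rho_0$ is small.

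A sound argument needs the population loss as the potential, with representation-batch noise treated as gradient variance and head-estimation error as gradient bias. That route necessarily produces $\kappa$-free statistical terms of the kind your Step~3 was reaching for.
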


\begin{proof}
By using $L_1$-smoothness and the Lyapunov function formed by the honest-average loss and the squared representation-gradient norm, we obtain 

\begin{align*}
\bar Q_T &\leq  \frac{8\left(
L_{\mathcal H}(\{h_i^{(1)}\}_{i\in\mathcal H},\mathbf{B}^{(0)})
-L_{\mathcal{H}}(\{h^\star_i\}_{i \in \mathcal{H}}, \mathbf{B}^\star)
\right)}{\eta T}
+\frac{1}{2L_1\eta T} \left\| \nabla_\mathbf{\mathbf{B}} L_{\mathcal H}(\{h_i^{(1)}\}_{i\in\mathcal H},\mathbf{B}^{(0)})\right\|^2
\nonumber\\
&+ 24\kappa C_2 \frac{1-\beta}{1+\beta} \sigma_g^2\log(T/\delta) \left(1+\frac1{|\mathcal H|}\right)
+ 24\kappa \bar{G}_T + \frac{C_2(1-\beta)^2}{2\eta L_1|\mathcal H|}
\sigma_g^2\log(T/\delta),
\end{align*}
and by invoking Lemma \ref{lemma:GT_nonlinear}, we can write 
\begin{align*}
\bar Q_T &\leq  \frac{8\left(
L_{\mathcal H}(\{h_i^{(1)}\}_{i\in\mathcal H},\mathbf{B}^{(0)})
-L_{\mathcal{H}}(\{h^\star_i\}_{i \in \mathcal{H}}, \mathbf{B}^\star)
\right)}{\eta T}
+\frac{1}{2L_1\eta T} \left\| \nabla_\mathbf{\mathbf{B}} L_{\mathcal H}(\{h_i^{(1)}\}_{i\in\mathcal H},\mathbf{B}^{(0)})\right\|^2
\nonumber\\
&+ 24\kappa\frac{\Delta^{(0)}_{\mathbf{B}}}{T}\left(\frac{p_8}{\eta} +  p_1\right) + 24\kappa \left(C_2 \left(1+\frac1{|\mathcal H|}\right)+p_{13}\right) \frac{1-\beta}{1+\beta} \sigma_g^2\log(T/\delta)\\
&+ \left(\frac{1}{2\eta L_1}+\frac{24\kappa p_{10}}{(1-\bar{a})}\right)\frac{C_2(1-\beta)^2}{|\mathcal H|}
\sigma_g^2\log(T/\delta)\\
&+ p_{16} \left(\left( H\bar{\phi}+\sigma \right)^2  + \sigma^2\right)\left(\frac{p+\log\left(|\mathcal{H}|T/\delta\right)}
{\tau|\mathcal H|}\right)\\
&+ \sigma^2 \left(p_{17} \left(\frac{p \vee rq + \log(|\mathcal H|T/\delta)}{\tau}\right) + p_{18}\left(\frac{rq + \log(|\mathcal{H}|T/\delta)}{\tau}\right)\right)+ \kappa p_{15}\bar{Q}_T,
\end{align*}

Therefore, by requiring that $\kappa \leq \frac{1}{2p_{15}}$, we obtain 
\begin{align*}
\bar Q_T &\leq  \frac{16\left(
L_{\mathcal H}(\{h_i^{(1)}\}_{i\in\mathcal H},\mathbf{B}^{(0)})
-L_{\mathcal{H}}(\{h^\star_i\}_{i \in \mathcal{H}}, \mathbf{B}^\star)
\right)}{\eta T}
+\frac{1}{L\eta T} \left\| \nabla_\mathbf{\mathbf{B}} L_{\mathcal H}(\{h_i^{(1)}\}_{i\in\mathcal H},\mathbf{B}^{(0)})\right\|^2
\nonumber\\
&+ p_{20}\frac{\Delta^{(0)}_{\mathbf{B}}}{T} + p_{19}\eta \sigma_g^2\log(T/\delta)+ p_{21} \left(\left( H\bar{\phi}+\sigma \right)^2  + \sigma^2\right)\left(\frac{p+\log\left(|\mathcal{H}|T/\delta\right)}
{\tau|\mathcal H|}\right) \\
&+\sigma^2 \left(p_{22} \left(\frac{p \vee rq + \log(|\mathcal H|T/\delta)}{\tau}\right) + p_{23}\left(\frac{rq + \log(|\mathcal{H}|T/\delta)}{\tau}\right)\right),
\end{align*}
by using that $\beta^2=1-24\eta L_1$, we have
$
1-\beta^2=24\eta L_1,
$
which implies
$$
\frac{1-\beta}{1+\beta} \leq 1-\beta^2 =
24\eta L_1 \text{ and }
(1-\beta)^2 \leq (1-\beta^2)^2 = 576\eta^2(L)^2,
$$
and completes the proof.
\end{proof}

\subsection{Order of Coefficients in the Final Bound}

Treating the problem dependent constants $\bar{J}, H, q, \bar{\phi}, \mu_1, \mu_3, L$ as
fixed and tracking only the dependence on $\eta$ and $\kappa$,
\begin{align*}
p_1 = \mathcal{O}(1), \;
p_2 = \mathcal{O}(1), \;
p_8 = \mathcal{O}(1),\;
p_{12} = \mathcal{O}(1), \;
p_9 = \mathcal{O}(1), \text{ and }
p_{10} = \mathcal{O}(1),
\end{align*}
since none of these depend on the step-size $\eta$. We also have that
\begin{align*}
p_{11} = \mathcal{O}(\kappa),
\end{align*}
directly from its definition. Moreover, since $\beta^2 = 1-24\eta L_1$ we have
\begin{align*}
1-\bar{a} = \Theta(\eta), \; \bar{b} = \mathcal{O}(\eta),\text{ and hence } 
\frac{\bar{b}}{1-\bar{a}} = \mathcal{O}(1),
\end{align*}
thus, by using $\eta L_1/(1-\bar a) = \mathcal{O}(1)$, we obtain
\begin{align*}
p_{13} = p_{11} + p_{10}\frac{C_2 \eta L_1}{1-\bar a}(1+\eta L_1)\beta^2\kappa = \Theta(\kappa).
\end{align*}

\noindent\textbf{Coefficients appearing in the convergence bound.} By using the above scalings, we obtain
\begin{align*}
p_{18} &= 24\kappa p_{12} = \mathcal{O}(\kappa), \\
p_{16} &= 24\kappa p_9 = \mathcal{O}(\kappa), \\
p_{17} &= 24\kappa p_2 = \mathcal{O}(\kappa), \\
p_{15} &= 24p_{10}\frac{\bar{b}}{1-\bar{a}} = \mathcal{O}(1), \\
p_{20} &= 48\kappa\left(\frac{p_8}{\eta} + p_1\right) = \mathcal{O}\left(\frac{\kappa}{\eta}\right), \\
p_{21} &= 4p_{16} = \mathcal{O}(\kappa), \\
p_{22} &= 2p_{17} = \mathcal{O}(\kappa), \\
p_{23} &= 2p_{18} = \mathcal{O}(\kappa).
\end{align*}
For the stochastic-gradient coefficient, we decompose
\begin{align*}
p_{19} = \underbrace{1152\kappa L\left(C_2\left(1+\frac1{|\mathcal H|}\right) + p_{13}\right)}_{\mathcal{O}(\kappa+\kappa^2)}
+ \underbrace{576 L_1^2\left(\frac{1}{L_1} + \frac{48\kappa\eta p_{10}}{1-\bar{a}}\right)\frac{C_2}{|\mathcal H|}}_{\mathcal{O}\left(\frac{L_1(1+\kappa)}{|\mathcal H|}\right)},
\end{align*}
where the second term follows from $\eta/(1-\bar{a}) = \mathcal{O}(1/L)$. Combining both terms, we obtain
\begin{align*}
p_{19} = \mathcal{O}\left(\kappa + \kappa^2 + \frac{1+\kappa}{|\mathcal H|}\right).
\end{align*}

\noindent\textbf{Robust aggregation coefficient.} The convergence bound requires
\begin{align*}
\kappa \leq \frac{1}{2p_{15}}.
\end{align*}
Since $p_{15} = 24p_{10}\bar{b}/(1-\bar{a}) = \mathcal{O}(p_{10})$, namely
\begin{align*}
\kappa = \mathcal{O}\left(\left[\bar{J}^4 q H^4\left(1+\frac{\bar\phi^4}{\mu_1^2}\right)\left(\frac{1}{\mu_3}+\frac{1}{\mu_3^2}\right)\right]^{-1}\right).
\end{align*}
Hence a single condition on $\kappa$ (equivalently, on $f/n$ once robust aggregation is composed with NNM, thus $\kappa = \mathcal{O}(f/n)$) suffices to guarantee both the gradient-heterogeneity bound and the convergence bound simultaneously.\\

\noindent\textbf{Final bound.} Setting the step-size according to $\eta = \min\{1/(24L_1), 1/\sqrt{T}\}$,
the leading initialization terms scale as $\mathcal{O}(1/\sqrt{T})$, and
\begin{align*}
p_{20}\frac{\Delta^{(0)}_B}{T} = \mathcal{O}\left(\frac{\kappa}{\eta}\right)\frac{\Delta^{(0)}_B}{T}
= \mathcal{O}\left(\frac{\kappa\,\Delta^{(0)}_B}{\sqrt{T}}\right).
\end{align*}
The finite-sample statistical terms, governed by $p_{21}, p_{22}, p_{23}$, do not
depend on $\eta$ or $T$ directly and instead decay with $\tau$ and $|\mathcal H|$ as
\begin{align*}
\mathcal{O}(\kappa)\left(\frac{p+\log(|\mathcal H|T/\delta)}{\tau|\mathcal H|}\right)
+ \mathcal{O}(\kappa)\left(\frac{p\vee rq + \log(|\mathcal H|T/\delta)}{\tau}\right)
+ \mathcal{O}(\kappa)\left(\frac{rq+\log(|\mathcal H|T/\delta)}{\tau}\right).
\end{align*}
Piecing everything together, the bound on $\bar{Q}_T$ takes the form
\begin{align*}
\bar{Q}_T &\lesssim \frac{1}{\sqrt{T}}\left(L_{\mathcal H}(\{h_i^{(1)}\}_{i\in\mathcal H}, \mathbf{B}^{(0)}) - L_{\mathcal H}(\{h_i^\star\}_{i\in\mathcal H}, \mathbf{B}^\star)
+ \left\|\nabla_\mathbf{\mathbf{B}} L_{\mathcal H}(\{h_i^{(1)}\}_{i\in\mathcal H}, \mathbf{B}^{(0)})\right\|^2 + \kappa\Delta^{(0)}_B\right) \\
& + \kappa\left(H\bar\phi+\sigma\right)^2\left(\frac{p+\log(|\mathcal H|T/\delta)}{\tau|\mathcal H|}\right)
+ \kappa\,\sigma^2\left(\frac{p\vee rq + \log(|\mathcal H|T/\delta)}{\tau}\right) \\
& + \left(\kappa+\kappa^2+\frac{1+\kappa}{|\mathcal H|}\right)\frac{\sigma_g^2\log(T/\delta)}{\sqrt{T}}.
\end{align*}

\section{Bound on the Parameter Recovery Error}\label{sec:parameter_recovery_nonlinear}

We now convert the convergence result on $\bar{Q}_T$ into a bound on the average parameter-recovery
error $\Delta_{\mathbf{B},T+1}$, and then use it to bound the averaged prediction error $\tilde{E}_T$.

\subsection{Bound on $\Delta_{\mathbf{B},T+1}$}

We recall from Theorem \ref{theorem:Delta_BT_nonlinear} that
\begin{align*}
\Delta_{\mathbf{B},T+1} &\leq \frac{p_3\Delta^{(0)}_B}{\eta T}
+ p_4\sigma^2\left(\frac{rq+\log(|\mathcal H|T/\delta)}{\tau}\right)
+p_5\left(\left(H\bar\phi+\sigma\right)^2+\sigma^2\right)\left(\frac{p+\log(|\mathcal H|T/\delta)}{\tau|\mathcal H|}\right)
\nonumber\\
& + p_6\frac1T\sum_{t=0}^{T-1}\left\|\bar\delta^{(t)}\right\|^2
+ p_7\frac{1-\beta}{1+\beta}\sigma_g^2\log(T/\delta) + 2p_6\kappa \bar{G}_T.
\end{align*}

We first substitute the momentum-deviation bound of Lemma \ref{lem:G-momentum-dev},
\begin{align*}
\frac{1}{T}\sum_{t=0}^{T-1}\left\|\bar\delta^{(t)}\right\|^2
&\leq \frac{\bar b}{1-\bar a}\bar Q_T + C_2\frac{(1-\beta)^2}{|\mathcal H|(1-\bar a)}\sigma_g^2\log(T/\delta)\\
&+ \frac{C_2\eta L_1(1+\eta L_1)\beta^2\kappa}{1-\bar a}\left(\frac{1-\beta}{1+\beta}\sigma_g^2\log(T/\delta)+\bar{G}_T\right),
\end{align*}
into the bound of $\Delta_{\mathbf{B},T+1}$. By defining
\begin{align*}
p_{24} := 2p_6\kappa + p_6\frac{C_2\eta L_1(1+\eta L_1)\beta^2\kappa}{1-\bar a},
\end{align*}
as the coefficient multiplying $\bar{G}_T$, we obtain
\begin{align}
\Delta_{\mathbf{B},T+1} &\leq \frac{p_3\Delta^{(0)}_B}{\eta T}
+ p_4\sigma^2\left(\frac{rq+\log(|\mathcal H|T/\delta)}{\tau}\right)
+p_5\left(\left(H\bar\phi+\sigma\right)^2+\sigma^2\right)\left(\frac{p+\log(|\mathcal H|T/\delta)}{\tau|\mathcal H|}\right)
\nonumber\\
& + p_6\frac{\bar b}{1-\bar a}\bar Q_T + p_{24} \bar{G}_T
+ \left[p_6C_2\frac{(1-\beta)^2}{|\mathcal H|(1-\bar a)}
+ p_6\frac{C_2\eta L_1(1+\eta L_1)\beta^2\kappa}{1-\bar a}\cdot\frac{1-\beta}{1+\beta}\right.
\nonumber\\
&\qquad \left. + p_7\frac{1-\beta}{1+\beta}\right]\sigma_g^2\log(T/\delta). \label{eq:DeltaB_step1}\end{align}

We next substitute the gradient-heterogeneity bound of Lemma \ref{lemma:GT_nonlinear},
\begin{align*}
\bar{G}_T &\leq \frac{\Delta^{(0)}_B}{T}\left(\frac{p_8}{\eta}+p_1\right)
+ p_9\left(\left(H\bar\phi+\sigma\right)^2+\sigma^2\right)\left(\frac{p+\log(|\mathcal H|T/\delta)}{\tau|\mathcal H|}\right)
+ p_{13}\frac{1-\beta}{1+\beta}\sigma_g^2\log(T/\delta)
\\
& + \sigma^2\left(p_2\left(\frac{p\vee rq+\log(|\mathcal H|T/\delta)}{\tau}\right)
+ p_{12}\left(\frac{rq+\log(|\mathcal H|T/\delta)}{\tau}\right)\right)
+ p_{10}\frac{\bar b}{1-\bar a}\bar Q_T,
\end{align*}
into \eqref{eq:DeltaB_step1}. By collecting terms and defining the following quantities
\begin{align*}
p_{25} &:= \frac{p_3}{\eta} + p_{24}\left(\frac{p_8}{\eta}+p_1\right), \\
p_{26} &:= p_6\frac{\bar b}{1-\bar a} + p_{24}p_{10}\frac{\bar b}{1-\bar a}, \\
p_{27} &:= p_5 + p_{24}p_9, \\
p_{28} &:= p_{24} p_2, \\
p_{29} &:= p_4 + p_{24}p_{12}, \\
p_{30} &:= p_6C_2\frac{(1-\beta)^2}{|\mathcal H|(1-\bar a)}
+ p_6\frac{C_2\eta L_1(1+\eta L_1)\beta^2\kappa}{1-\bar a}\cdot\frac{1-\beta}{1+\beta}
+ p_7\frac{1-\beta}{1+\beta} + p_{24}p_{13}\frac{1-\beta}{1+\beta},
\end{align*}
we obtain
\begin{align}
\Delta_{\mathbf{B},T+1} &\leq p_{25}\frac{\Delta^{(0)}_B}{T} + p_{26}\bar Q_T
+ p_{27}\left(\left(H\bar\phi+\sigma\right)^2+\sigma^2\right)\left(\frac{p+\log(|\mathcal H|T/\delta)}{\tau|\mathcal H|}\right)
\nonumber\\
& + \sigma^2\left[p_{28}\left(\frac{p\vee rq+\log(|\mathcal H|T/\delta)}{\tau}\right)
+ p_{29}\left(\frac{rq+\log(|\mathcal H|T/\delta)}{\tau}\right)\right]
+ p_{30}\sigma_g^2\log(T/\delta). \label{eq:DeltaB_step2}\end{align}

Finally, we substitute the convergence guarantee for $\bar Q_T$ from \eqref{eq:convergence_nonlinear}
into \eqref{eq:DeltaB_step2}. As $p_{26} = \mathcal{O}(1)$ (as both $p_6\bar b/(1-\bar a)$ and
$p_{24}p_{10}\bar b/(1-\bar a)$ are $\mathcal{O}(1)$ in $\eta,\kappa$, using the same scalings discussed previously), this substitution does not require an additional restriction on
$\kappa$ beyond $\kappa\leq \min\{1/(2p_{14}), 1/(2p_{15})\}$ already imposed in Theorem~\ref{theorem:Delta_BT_nonlinear}. This yields the following result.

\begin{corollary}\label{thm:nonlinear_param_recovery}
Suppose the conditions of Theorem \ref{theorem:convergence_nonlinear} hold. Then, for every $\delta\in(0,1)$,
with probability at least $1-\delta$, setting $\eta = \min\{\eta_{\max}, 1/\sqrt{T}\}$, it holds that
\begin{align}
&\Delta_{\mathbf{B},T+1} \lesssim \frac{1}{\sqrt T}\left(L_{\mathcal H}(\{h_i^{(1)}\}_{i\in\mathcal H}, \mathbf{B}^{(0)})
- L_{\mathcal H}(\{h_i^\star\}_{i\in\mathcal H}, \mathbf{B}^\star)
+ \left\|\nabla_\mathbf{\mathbf{B}} L_{\mathcal H}(\{h_i^{(1)}\}_{i\in\mathcal H}, \mathbf{B}^{(0)})\right\|^2\right)\notag \\
&+ \frac{1+\kappa}{\sqrt T}\Delta^{(0)}_B
+\left(1+\kappa+\kappa^2\right)\left(H\bar\phi+\sigma\right)^2\left(\frac{p+\log(|\mathcal H|T/\delta)}{\tau|\mathcal H|}\right)\notag \\
&+\left(1+\kappa+\kappa^2\right)\sigma^2\left(\frac{p\vee rq+\log(|\mathcal H|T/\delta)}{\tau}\right)
+\left(\kappa+\kappa^2+\frac{1+\kappa}{|\mathcal H|}\right)\frac{\sigma_g^2\log(T/\delta)}{\sqrt T}. \label{eq:DeltaB_final}\end{align}
\end{corollary}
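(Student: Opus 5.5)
The plan is to start from the intermediate estimate \eqref{eq:DeltaB_step2}. It already expresses $\Delta_{\mathbf{B},T+1}$ in terms of $\Delta^{(0)}_B/T$, $\bar Q_T$, the finite-sample terms and the stochastic-gradient term. It was obtained by inserting Lemma~\ref{lem:G-momentum-dev} and Lemma~\ref{lemma:GT_nonlinear} into Theorem~\ref{theorem:Delta_BT_nonlinear}. What remains is to substitute the convergence guarantee \eqref{eq:convergence_nonlinear} of Theorem~\ref{theorem:convergence_nonlinear} for $\bar Q_T$, fix $\eta=\min\{\eta_{\max},1/\sqrt T\}$, and track the order of each coefficient $p_{24},\dots,p_{30}$ in $\eta$ and $\kappa$, treating $(\bar J,H,q,\bar\phi,\mu_1,\mu_3,L_1)$ as constants.

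The key steps, in order, are as follows.

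\emph{Step 1.} Record the scalings $p_3,p_6=\mathcal O(1)$ and $p_7=\mathcal O(\kappa)$. Because $1-\bar a\asymp\eta L_1$ and $\bar b/(1-\bar a)\asymp 1$, we also get $p_{24}=\mathcal O(\kappa)$, $p_{26}=\mathcal O(1+\kappa)$, $p_{27},p_{29}=\mathcal O(1+\kappa)$ and $p_{28}=\mathcal O(\kappa)$. For $p_{25}$ the scaling is $(1+\kappa)/\eta$.

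\emph{Step 2.} Using $\frac{1-\beta}{1+\beta}\le 24\eta L_1$ and $(1-\beta)^2\le 576\eta^2L_1^2$, as at the end of the proof of Theorem~\ref{theorem:convergence_nonlinear}, show that
\begin{align*}
p_{30}=\mathcal O\!\left(\eta\left(\kappa+\kappa^2+\frac{1+\kappa}{|\mathcal H|}\right)\right).
\end{align*}
The $\kappa^2$ contribution comes from $p_{24}p_{13}$ and from $p_7$ multiplied by the factor $\kappa$ hidden in $p_{13}$.

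\emph{Step 3.} Multiply \eqref{eq:convergence_nonlinear} by $p_{26}=\mathcal O(1+\kappa)$. Its initialization terms are of order $1/(\eta T)$ and $\kappa\Delta^{(0)}_B/(\eta T)$. Its statistical terms carry coefficient $\mathcal O(\kappa)$, which becomes $\mathcal O(\kappa+\kappa^2)$ after the multiplication. Its stochastic term is $\eta\, p_{19}\sigma_g^2\log(T/\delta)$. Adding these to \eqref{eq:DeltaB_step2} gives statistical coefficients $\mathcal O(1+\kappa+\kappa^2)$. Since $\kappa\le\underline\kappa$ is bounded, $(1+\kappa)(\kappa+\kappa^2)$ is absorbed into $\kappa+\kappa^2$ up to constants. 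Combining the $\sigma^2$ terms with rates $rq/\tau$ and $(p\vee rq)/\tau$ gives the single $\sigma^2(p\vee rq+\log)/\tau$ term.

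\emph{Step 4.} Plug in $\eta$. Every $1/(\eta T)$ term becomes $\mathcal O(1/\sqrt T)$, provided $T\ge \eta_{\max}^{-2}$; otherwise it is a constant-order term, which we absorb. Every $\eta$ factor becomes $1/\sqrt T$. This yields \eqref{eq:DeltaB_final}, with the initialization contribution $(1+\kappa)\Delta^{(0)}_B/\sqrt T$ coming from $p_{25}$. For the probability, all events used are those from Theorem~\ref{theorem:Delta_BT_nonlinear}, Lemmas~\ref{lem:G-momentum-dev} and~\ref{lemma:GT_nonlinear}, and Theorem~\ref{theorem:convergence_nonlinear}. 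A union bound with $\delta$ rescaled by a constant, absorbed into the logarithms, gives probability $1-\delta$.

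The main obstacle is ensuring that no circularity or extra restriction on $\kappa$ appears when \eqref{eq:convergence_nonlinear} is substituted. Specifically, $\bar Q_T$ must enter only with an $\mathcal O(1)$ coefficient and never feed back into itself. I would check this by verifying that the $\bar G_T$ and $\bar Q_T$ self-references were already resolved inside Lemma~\ref{lemma:GT_nonlinear} and Theorem~\ref{theorem:convergence_nonlinear} under $\kappa\le\min\{1/(2p_{14}),1/(2p_{15})\}$. I would also carefully bound $\eta/(1-\bar a)=\mathcal O(1/L_1)$, so that the $1/|\mathcal H|$ stochastic term does not pick up a stray $1/\eta$.
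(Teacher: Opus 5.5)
Your proposal is correct and follows the same route as the paper: substitute the convergence bound \eqref{eq:convergence_nonlinear} for $\bar Q_T$ into \eqref{eq:DeltaB_step2}, read off the orders of $p_{24},\dots,p_{30}$ using $1-\bar a\asymp\eta L_1$ and $\bar b/(1-\bar a)\asymp 1$, and then set $\eta=\min\{\eta_{\max},1/\sqrt T\}$. The differences are only bookkeeping: the paper writes $p_{25}=\mathcal O(1/\eta)$ and $p_{26}=\mathcal O(1)$, silently absorbing the bounded $\kappa$, whereas you carry the $(1+\kappa)$ factors explicitly and absorb them at the end via $\kappa\le\underline\kappa$; you also make explicit the small-$T$ regime and the union bound, which the paper leaves implicit.
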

\begin{proof}
The bound follows from \eqref{eq:DeltaB_step2} by substituting~\eqref{eq:convergence_nonlinear} for
$\bar Q_T$ and applying the coefficient scalings $p_{25} = \mathcal{O}(1/\eta)$,
$p_{26} = \mathcal{O}(1)$, $p_{27} = \mathcal{O}(1+\kappa+\kappa^2)$,
$p_{28}, p_{29} = \mathcal{O}(1+\kappa+\kappa^2)$, and
$p_{30} = \mathcal{O}\!\left(\eta\left(\kappa+\kappa^2+\frac{1+\kappa}{|\mathcal H|}\right)\right)$, together with $\eta = \mathcal{O}(1/\sqrt T)$.
\end{proof}

\subsection{Bound on \texorpdfstring{$\bar E_T$}{E\_T}}

We now leverage Theorem \ref{thm:nonlinear_param_recovery} to bound the averaged prediction error
$\bar E_T := \frac{1}{T}\sum_{t=0}^{T-1}\bar E^{(t)}$, where $\bar E^{(t)}$ satisfies, for every iteration $t$, with probability $1-2\delta$,
\begin{align*}
\bar E^{(t)} \leq \left(2H^2\bar J^2 + \frac{16H^2\bar J^2\bar\phi^4}{\mu_1^2}\right)\Delta^{(t)}_B
+ \frac{16C_1^2\bar\phi^4\sigma^2}{\mu_1^2}\left(\frac{rq+\log(|\mathcal H|/\delta)}{\tau}\right).
\end{align*}
Therefore, 
\begin{align*}
p_{31} := 2H^2\bar J^2\left(1+\frac{8\bar\phi^4}{\mu_1^2}\right) \text{ and }
p_{32} := \frac{16C_1^2\bar\phi^4}{\mu_1^2},
\end{align*}
and averaging over $t=0,\ldots,T-1$, we obtain, with probability at least $1-2\delta$,
\begin{align*}
\bar E_T \leq p_{31}\,\Delta_{\mathbf{B},T} + p_{32}\,\sigma^2\left(\frac{rq+\log(|\mathcal H|T/\delta)}{\tau}\right).
\end{align*}
As $\Delta_{\mathbf{B},T} = \Delta_{\mathbf{B},T+1} + \frac{\Delta^{(0)}_B-\Delta^{(T)}_B}{T} \leq \Delta_{\mathbf{B},T+1} + \frac{\Delta^{(0)}_B}{T}$,
we can write
\begin{align}
\bar E_T \leq p_{31}\,\Delta_{\mathbf{B},T+1} + p_{31}\frac{\Delta^{(0)}_B}{T}
+ p_{32}\,\sigma^2\left(\frac{rq+\log(|\mathcal H|T/\delta)}{\tau}\right). \label{eq:ET_intermediate}\end{align}
Therefore, by substituting the bound on $\Delta_{\mathbf{B},T+1}$ from Theorem \ref{thm:nonlinear_param_recovery} into
\eqref{eq:ET_intermediate}, and noting that $\Delta^{(0)}_B/T = \mathcal{O}(\Delta^{(0)}_B/\sqrt T)$ is
dominated by the $(1+\kappa)\Delta^{(0)}_B/\sqrt T$ term already present in \eqref{eq:DeltaB_final}, we obtain the following corollary.

\begin{corollary}
\label{thm:nonlinear_ET}
Suppose the conditions of Corollary \ref{thm:nonlinear_param_recovery} hold, with probability at least $1-2\delta$,
\begin{align*}
\bar E_T &:= \frac{1}{T |\mathcal{H}| \tau}\sum_{t=0}^{T-1}\sum_{i \in \mathcal{H}} \sum_{k=1}^{\tau} \|h^{(t+1)}_i\phi_{\mathbf{B}^{(t)}}(X_{i,k}) - h^\star_{i}\phi_{\mathbf{B}^\star}(X_{i,k})\|^2 \notag \\
&\lesssim \frac{1}{\sqrt T}\left(L_{\mathcal H}(\{h_i^{(1)}\}_{i\in\mathcal H}, \mathbf{B}^{(0)})
- L_{\mathcal H}(\{h_i^\star\}_{i\in\mathcal H}, \mathbf{B}^\star)
+ \left\|\nabla_\mathbf{\mathbf{B}} L_{\mathcal H}(\{h_i^{(1)}\}_{i\in\mathcal H}, \mathbf{B}^{(0)})\right\|^2\right)
\nonumber\\
&+\left(1+\kappa+\kappa^2\right)\left(H\bar\phi+\sigma\right)^2\left(\frac{p+\log(|\mathcal H|T/\delta)}{\tau|\mathcal H|}\right)
+\left(1+\kappa+\kappa^2\right)\sigma^2\left(\frac{p\vee rq+\log(|\mathcal H|T/\delta)}{\tau}\right)
\nonumber\\
&+ \left(\kappa+\kappa^2+\frac{1+\kappa}{|\mathcal H|}\right)\frac{\sigma_g^2\log(T/\delta)}{\sqrt T} + \frac{1+\kappa}{\sqrt T}\Delta^{(0)}_B.
\end{align*}
\end{corollary}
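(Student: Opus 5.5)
The plan is to assemble Corollary~\ref{thm:nonlinear_ET} from two ingredients that are already in place. The first is the per-round prediction-error bound of Lemma~\ref{lem:G-Etilde-decomp}. The second is the averaged representation-recovery bound of Corollary~\ref{thm:nonlinear_param_recovery}.

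\textbf{Step 1: averaged prediction error in terms of representation error.}
\begin{itemize}
\item For each round $t$ and each honest client, invoke Lemma~\ref{lem:G-Etilde-decomp}, which gives $\bar E^{(t)}\le p_{31}\Delta_B^{(t)}+p_{32}\sigma^2(rq+\log(|\mathcal H|/\delta))/\tau$.
\item Apply a union bound over the $T$ rounds by replacing $\delta$ with $\delta/T$. This only changes $\log(|\mathcal H|/\delta)$ to $\log(|\mathcal H|T/\delta)$, with failure probability $2\delta$.
\item Average over $t=0,\dots,T-1$ to get $\bar E_T\le p_{31}\Delta_{\mathbf{B},T}+p_{32}\sigma^2\,\mathfrak s$-type terms.
\item Shift the index using $\Delta_{\mathbf{B},T}\le\Delta_{\mathbf{B},T+1}+\Delta_B^{(0)}/T$, which yields \eqref{eq:ET_intermediate}.
\end{itemize}

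\textbf{Step 2: substitute the recovery bound.}
\begin{itemize}
\item Plug \eqref{eq:DeltaB_final} into \eqref{eq:ET_intermediate}, with the same step size $\eta=\min\{\eta_{\max},1/\sqrt T\}$.
\item Absorb $p_{31}$ and $p_{32}$ into $\lesssim$, since they depend only on $(H,\bar J,\bar\phi,\mu_1)$.
\item The leftover $p_{31}\Delta_B^{(0)}/T$ is at most $\Delta_B^{(0)}/\sqrt T$, so it merges into the $(1+\kappa)\Delta_B^{(0)}/\sqrt T$ term.
\item The head-noise term $\sigma^2(rq+\log)/\tau$ is dominated by $(1+\kappa+\kappa^2)\sigma^2(p\vee rq+\log)/\tau$, because $rq\le p\vee rq$.
\item The initialization, label-noise, and stochastic-gradient terms are inherited verbatim from \eqref{eq:DeltaB_final}.
\end{itemize}

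\textbf{Main obstacle: probability bookkeeping.} Corollary~\ref{thm:nonlinear_param_recovery} holds on its own high-probability event. That event is built from the covariance event of Lemma~\ref{lem:G-cov-concentration} (with the burn-in), the head-recovery event, and the gradient-noise events. Lemma~\ref{lem:G-Etilde-decomp} needs the covariance and head-noise events at every round.

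I would note that these events are the same ones already used inside Corollary~\ref{thm:nonlinear_param_recovery}, through Lemma~\ref{lem:G-head-recovery} and Theorem~\ref{theorem:Delta_BT_nonlinear}. Intersecting them therefore costs at most one additional $\delta$, giving the stated $1-2\delta$ after rescaling $\delta$ by constants, which is hidden in the logarithms.

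A related point to confirm is that all iterates stay in $\mathcal N(\mathbf{B}^\star,\rho)$, so that Lemma~\ref{lem:G-feature-lipschitz} applies along every segment. This holds by the standing locality assumption.
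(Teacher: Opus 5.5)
Your proposal is correct and follows essentially the same route as the paper. You average Lemma~\ref{lem:G-Etilde-decomp} over rounds with $\delta\to\delta/T$, shift via $\Delta_{\mathbf{B},T}\le\Delta_{\mathbf{B},T+1}+\Delta_B^{(0)}/T$ to reach \eqref{eq:ET_intermediate}, then substitute Corollary~\ref{thm:nonlinear_param_recovery} and absorb $p_{31},p_{32}$ and the $\Delta_B^{(0)}/T$ remainder into $\lesssim$; your explicit remarks on the $rq\le p\vee rq$ domination and the shared high-probability events are, if anything, more careful than the paper's own bookkeeping.
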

\begin{proof}
The proof follows from \eqref{eq:ET_intermediate} and Corollary \ref{thm:nonlinear_param_recovery}, absorbing the coefficients $p_{31}, p_{32}$ (which depend on $H, \bar J, \bar\phi, \mu_1, C_1$, treated as fixed problem constants) into the constant of $\lesssim$.
\end{proof}

\section{Proof of the Multiclass Classification~Guarantee}\label{sec:classification-proof}

We first verify why one-hot multiclass classification fits the observation model used throughout this work. Note that conditional on $X$, let $Y=e_C$ and let $\pi(X)=\mathbb E[Y\mid X]$. Then $V:=Y-\pi(X)$ satisfies $\mathbb E[V\mid X]=0$. Moreover, for every unit vector $u\in\mathbb S^{q-1}$, the conditional random variable $u^\top V$ takes values in an interval of length at most
\begin{align*}
\max_{c\in[q]}u_c-\min_{c\in[q]}u_c\leq\sqrt{2}\|u\|=\sqrt{2}.\end{align*}
Leveraging Hoeffding's inequality \citep[Theorem 2.2.1]{vershynin2018high}, we can show that $u^\top V$ is conditionally sub-Gaussian with variance at most $1/2$. Thus, under realizability $\pi_i(X)=h_i^\star\phi_{\mathbf{B}^\star}(X)$, the one-hot classification model satisfies the observation model and noise condition of Section \ref{sec:nonlinear_rep_learning_mb}.

\begin{proof}[Proof of Corollary \ref{cor:multiclass-classification}]
We begin by fixing an honest client $i \in \mathcal{H}$, an iteration $t$, and a covariate $X$. For brevity, write
\begin{align*}
\widehat\pi_i^{(t)}(X):=h_i^{(t+1)}\phi_{\mathbf{B}^{(t)}}(X),\;
a:=C_i^\star(X),\text{ and } b:=\widehat C_i^{(t+1)}(X).
\end{align*}
If $a\neq b$, the definition of the learned classifier gives $\widehat\pi_{i,b}^{(t)}(X)\geq\widehat\pi_{i,a}^{(t)}(X)$. By combining this inequality with the margin condition (\eqref{eq:classification-margin}), we obtain
\begin{align*}
\gamma &\leq \pi_{i,a}(X)-\pi_{i,b}(X)\\
&=\pi_{i,a}(X)-\widehat\pi_{i,a}^{(t)}(X)+\widehat\pi_{i,a}^{(t)}(X)-\widehat\pi_{i,b}^{(t)}(X)
+\widehat\pi_{i,b}^{(t)}(X)-\pi_{i,b}(X)\\
&\leq \left|\widehat\pi_{i,a}^{(t)}(X)-\pi_{i,a}(X)\right|
+\left|\widehat\pi_{i,b}^{(t)}(X)-\pi_{i,b}(X)\right|\\
&\leq\sqrt{2}\left\|\widehat\pi_i^{(t)}(X)-\pi_i(X)\right\|.
\end{align*}
Hence, it follows, both when $a\neq b$ and when $a=b$, that
\begin{align*}
\mathbf 1\left\{\widehat C_i^{(t+1)}(X)\neq C_i^\star(X)\right\}
\leq\frac{2}{\gamma^2}
\left\|h_i^{(t+1)}\phi_{\mathbf{B}^{(t)}}(X)-h_i^\star\phi_{\mathbf{B}^\star}(X)\right\|^2.
\end{align*}
Therefore, by averaging this inequality over $t=0,\ldots,T-1$, $i\in\mathcal H$, and $k=1,\ldots,\tau$ yields
\begin{align*}
\overline{\mathcal C}_T\leq\frac{2 \bar E_T}{\gamma^2},
\end{align*}
which completes the proof.
\end{proof}

\begin{table}
\centering
\caption{Hyperparameters used in the numerical experiments. Here $|\mathcal{H}|$ and $f$ denote the numbers of honest and adversarial updates received at each communication round, respectively.}
\label{tab:experiment-hyperparameters}
\small
\begin{tabular}{lccc}
\toprule
Hyperparameter & CIFAR-10 & FEMNIST & School Exam Score \\
\midrule
$|\mathcal{H}|$ / $f$ & $50/5$ & $50/5$ & $20/5$ \\
Communication rounds & $100$ & $200$ & $500$ \\
Mini-batch size & $10$ & $10$ & $32$ \\
Representation dimension & $64$ & $64$ & $64$ \\
Representation learning rate & $0.01$ & $0.01$ & $0.05$ \\
Head learning rate & $0.01$ & $0.01$ & $0.01$ \\
Momentum coefficient & $0.5$ & $0.5$ & $0.9$ \\
\bottomrule
\end{tabular}
\end{table}

\section{Additional Details on the Experiments Implementation}
\label{sec:numerical-details}

We compare the proposed adversarially robust nonlinear representation learning approach with the common model baseline, i.e., a single-model Byzantine-robust FL approach. In our setting, every honest client $i \in \mathcal{H}$ has a private linear head and communicates only an update of the common nonlinear representation parameter. The baseline instead communicates an update of the entire common model. In both cases, the honest clients compute stochastic heavy-ball momentum and the server robustly aggregates the them along with arbitrary updates from the adversarial clients. We report results for coordinate-wise trimmed mean (NNM+TrMean) and Krum (NNM+Krum) with NNM (Nearest Neighbor Mixing) pre-processing \cite{allouah2023fixing}.\\

\noindent \textbf{Datasets and client heterogeneity.} For CIFAR-10, we use the heterogeneous partition of \citep{collins2021exploiting}, namely, the honest-client population contains $100$ clients, each with data from two classes. Each client has $500$ training and $100$ test examples. At every communication round, we sample $|\mathcal{H}|= 50$ honest clients and append $f=5$ adversarial updates. 

For FEMNIST, we use the LEAF data partition and treat each writer as a client. We keep the natural writer heterogeneity and sample $|\mathcal{H}|= 50$ honest clients per round, again together with $f=5$ Byzantine updates. 

For the School Exam Score regression task, we use the $139$ schools in the Inner London Education Authority dataset as the honest-client population. The covariates and labels are standardized combining each honest training samples, and each school is split into $80\%$ training and $20\%$ test data. We sample $20$ honest schools and append five Byzantine updates at each round.\\

\noindent \textbf{Models and objectives.} For CIFAR-10, the common representation is a convolutional network with two $5\times5$ convolutional layers with $64$ channels, each followed by a ReLU activation and $2\times2$ max pooling, and two fully connected layers of widths $120$ and $64$. The personalized head is a linear map from the $64$-dimensional representation to the $10$ class scores. 

For FEMNIST, we use a multilayer perceptron with widths $784$-$512$-$256$-$64$ and ReLU activations, followed by a personalized linear head. 

On the other hand, for the School Exam Score task, the shared representation is a two-hidden-layer multilayer perceptron with width $512$ and a $64$-dimensional output.  he personalized head is scalar. 

We also note that, for the classification tasks, we train with either cross-entropy or the multiclass squared loss as defined in Section~\ref{section:multiclass}. The latter is applied directly to the raw class scores, without a softmax transformation. For the School Exam Score task, we use the scalar squared loss.\\

\noindent \textbf{Training and evaluation.} At each round, a participating honest client $i \in \mathcal{H}$ first updates its private head while holding the shared representation fixed (personalization step). It then freezes the fitted head and computes the representation update on a separate mini-batch. Only the representation update is sent to the server. We use stochastic gradient descent for the local updates and clip the norm of every communicated honest update to one. The server uses a constant learning rate. The baseline uses the same backbone, loss, mini-batch size, momentum, aggregation rule, and attack, but aggregates the full-model updates. 

The classification performance is measured by the average local test accuracy across the honest clients, while regression performance is measured by the average local test mean-squared error. Table~\ref{tab:experiment-hyperparameters} summarizes the dataset-specific hyperparameters.\\

\textbf{Adversarial attacks.} We evaluate two different attacks. Let $u_1,\ldots,u_{\mathcal{H}}$ denote the honest updates in a given round, and let $\widehat\mu$ and $\widehat\sigma$ be their coordinate-wise empirical mean and population standard deviation. We consider the  ``A Little Is Enough'' (ALIE) attack where every adversarial client coordinate-wise update is $\widehat\mu+1.5\widehat\sigma$. 

Moreover, we also consider the Mimic attack, every adversarial client copies the update of the first sampled honest client. The implementation leverages the \texttt{ByzFL} library \citep{gonzález2025byzflresearchframeworkrobust} for both the attacks and robust aggregators.

\end{document}